\documentclass{article}

\usepackage{microtype}
\usepackage{graphicx}
\usepackage{subcaption}
\usepackage{booktabs} 
\usepackage{multirow} 
\usepackage{threeparttable}
\usepackage{makecell}

\usepackage{enumitem}

\usepackage{hyperref}

\usepackage[accepted]{icml2026}

\usepackage{amsmath}
\usepackage{amssymb}
\usepackage{mathtools}
\usepackage{amsthm}

\usepackage[capitalize,noabbrev]{cleveref}

\def\eg{\mbox{\textit{e.g.}}}

\theoremstyle{plain}
\newtheorem{theorem}{Theorem}
\newtheorem{proposition}{Proposition}

\newtheorem{corollary}{Corollary}
\theoremstyle{definition}

\theoremstyle{remark}

\usepackage{etoc}
\etocdepthtag.toc{mtchapter}
\etocsettagdepth{mtchapter}{subsubsection}
\etocsettagdepth{mtappendix}{none}

\newcommand{\blue}[1]{{\color{black}#1}}

\usepackage[textsize=tiny]{todonotes}

\icmltitlerunning{Zero-source LLM Hallucination Detection with Human-like Criteria Probing}

\begin{document}

\twocolumn[
  \icmltitle{Zero-source LLM Hallucination Detection with Human-like Criteria Probing}



  \icmlsetsymbol{equal}{*}

  \begin{icmlauthorlist}
    \icmlauthor{Jiahao Yang}{equal,scut,unimelb}
    \icmlauthor{Shuhai Zhang}{equal,scut,pzl}
    \icmlauthor{Hailong Kang}{equal,scut}
    \icmlauthor{Feng Liu}{unimelb}
    \icmlauthor{Qi Chen}{uniade}
    \icmlauthor{Mingkui Tan}{scut,pzl}
  \end{icmlauthorlist}

  \icmlaffiliation{scut}{South China University of Technology}
  \icmlaffiliation{unimelb}{The University of Melbourne}
  \icmlaffiliation{pzl}{Pazhou Laboratory}
  \icmlaffiliation{uniade}{Australian Institute for Machine Learning, Adelaide University}

  \icmlcorrespondingauthor{Mingkui Tan}{mingkuitan@scut.edu.cn}
  \icmlcorrespondingauthor{Qi Chen}{qi.chen04@adelaide.edu.au}
  \icmlcorrespondingauthor{Feng Liu}{fengliu.ml@gmail.com}

  \icmlkeywords{Machine Learning, ICML}

  \vskip 0.3in
]



\printAffiliationsAndNotice{\icmlEqualContribution}  

\begin{abstract}
  Large language models (LLMs) often hallucinate by generating factually incorrect or unfaithful content, posing significant risks to their safe use. Detecting such hallucinations is particularly challenging under the \textit{zero-source constraint}, where no model internals or external references are available, and detection must rely solely on the textual query–answer pair. In this paper,  we propose \textit{Human-like Criteria Probing} for Hallucination Detection (HCPD), a paradigm that emulates the multi-faceted reasoning of human evaluators. Its core is a \textit{Human-like Criteria Probing} (HCP) mechanism, in which a LLM agent adaptively decomposes its judgment into a weighted set of interpretable criteria and aggregates criterion-specific scores into a final truthfulness measure. To achieve this adaptive capability, we introduce a reward-based alignment scheme using only weak supervision from semantic consistency. At inference, we employ a multi-sampling aggregation strategy to ensure robust decisions while preserving full interpretability. We further provide theoretical analysis supporting the reliability of our approach. Extensive experiments show that HCPD consistently outperforms state-of-the-art baselines, offering an effective and explainable solution for zero-source hallucination detection.
  Code is available at \url{https://github.com/TRISKEL10N/HCPD}.
\end{abstract}

\vspace{-25pt}
\section{Introduction}
Large language models (LLMs) have rapidly advanced and are increasingly deployed across a broad range of applications, including information retrieval \cite{zhu2025large}, decision support \cite{chiang2024enhancing,chen2024weak,ma2025towards}, and domain-specific assistance in healthcare \cite{benary2023leveraging,vrdoljak2025review}, finance \cite{yu2024fincon}, and education \cite{neumann2024llm}. Nevertheless, their practical use is constrained by hallucinations, where LLMs generate responses that are factually incorrect, ungrounded, or unfaithful to the user’s intent, posing significant risks in safety-critical settings. Consequently, reliable hallucination detection has become essential for the safe and trustworthy deployment of LLM-based assistants.

A critical challenge is that practical hallucination detection frequently operates under a strict \textbf{zero-source} constraint \cite{fang2025zero,yang2025hallucination}. \blue{In prevalent open-world scenarios, the auditing process is entirely decoupled from the generation. For instance, third-party auditors (e.g., social media platforms, news agencies) must evaluate massive user-uploaded texts without knowing the underlying source LLMs. Similar situations also occur when the vast majority of end-users interact with LLMs through web interfaces (\eg, ChatGPT\footnote{\url{https://chatgpt.com}}, Gemini\footnote{\url{https://gemini.google.com/app}} and Claude\footnote{\url{https://claude.ai/new}}) or browser extensions, where plain text is the sole accessible output. Consequently, commercial APIs, internal states, and auxiliary resources (\eg, external knowledge bases) are typically unavailable. Under such realistic restrictions, robust detection must rely solely on the observed query–answer pair.}

Unfortunately, most existing approaches are not directly applicable under the aforementioned constraint.
While effective in the traditional knowledge-based hallucination detection, retrieval-augmented or fact-verification methods \cite{semnani2023wikichat,hu2024knowledge,chen2024factchd} require access to web or knowledge resources, whose availability and reliability are difficult to guarantee.
Avoiding external references, confidence-based and metric-based methods \cite{malinin2021uncertainty,kuhn2023semantic,park2025steer} mainly depend on model internals, which are unattainable for black-box or commercial systems.
Self-supervised or consistency-based methods \cite{kadavath2022language,manakul2023selfcheckgpt} typically employ static, task-agnostic heuristics, limiting their ability to capture the precise, context-dependent judgments across diverse domains. 
Moreover, most detectors provide only binary labels or scalar scores, offering limited interpretability and diagnostic insight.

In contrast, human experts rarely judge a response using a single monolithic criterion. They instead decompose evaluation into multiple dimensions, adapt their relative weights to the context, and provide evidence-grounded judgment.
This observation motivates a general \textit{zero-source detection paradigm} that emulates human-style evaluative reasoning.

In this paper, we propose \textbf{Human-like Criteria Probing for Zero-source Hallucination Detection (HCPD)}. Its core is an \emph{Human-like Criteria Probing (HCP)} mechanism, which enables a pre-trained LLM agent to evaluate responses through a transparent, multi-step process (Section~\ref{sec: Adaptive Criteria Probing Mechanism}). For each query–answer pair, the agent first adaptively generates a set of fine-grained criteria (\eg, factual accuracy, logical consistency) and their context-aware importance weights, then scores the text against each criterion, and finally aggregates these scores into an overall truthfulness measure, effectively emulating the nuanced, multi-perspective reasoning of a human expert. To enable this adaptive judgment capability, we introduce a \emph{reward-based alignment training} scheme, which leverages weak supervision from semantic consistency to teach the agent how to decompose and weight criteria without needing ground-truth hallucination labels (Section~\ref{sec: Reward-Based Alignment Training}). At inference, we apply a \emph{multi-sampling aggregation} strategy to reduce variance from the stochastic generation process, performing $K$ independent HCP evaluations per instance and averaging the results for a robust final decision (Section~\ref{sec: Robust Inference via Multi-Sampling Aggregation}).
We further provide a statistical characterization of both training and inference behaviors, and derive a threshold-free performance characterization by bounding the ranking error probability (Section \ref{sec: Theoretical Analysis}).
Extensive experiments demonstrate that HCPD outperforms existing
state-of-the-art methods, validating the effectiveness under the zero-source hallucination detection.

Our contributions are summarized as follows:
\begin{itemize}[itemsep=0em, topsep=0em]
    \item An adaptive, multi-criteria probing framework for zero-source detection: We are the first to explicitly formalize hallucination detection under zero-source constraint, and propose Human-like Criteria Probing mechanism, which reframes detection as a process of context-aware criteria generation, weighting, and aggregation. By enabling an agent to adaptively decompose its judgment into interpretable dimensions, our method emulates the nuanced, multi-faceted reasoning of human evaluators, moving beyond monolithic scoring paradigms.
    \item Weakly-supervised alignment training without ground-truth labels: To achieve reliable adaptive judgment in the scoring agent, we introduce a reward-based alignment training scheme using only weak supervision derived from semantic consistency. This method effectively teaches the agent to identify, weight, and score relevant criteria without requiring any annotated hallucination data, making it uniquely suited for the practical zero-source constraint.
    \item A stable and interpretable inference strategy with theoretical guarantees: We design a multi-sampling aggregation strategy during inference to mitigate generation variance, enhancing decision robustness while preserving full interpretability through the generated criteria and weights. Furthermore, we develop a theoretical analysis, providing formal insights into the feasibility and reliability of our probing-based approach.
\end{itemize}

\section{Preliminaries}
\textbf{Group Relative Policy Optimization.}
Group Relative Policy Optimization (GRPO) is a reinforcement learning algorithm designed for the stable and efficient fine-tuning of LLMs. To obviate the need for an explicit value function commonly required in policy gradient methods \cite{schulman2017proximal}, GRPO adopts the average reward of multiple sampled outputs conditioned on the same prompt as an implicit baseline.
Concretely, for each input $x$, the current policy $f_{\theta}$ samples a group of outputs $\{Y_1,\ldots,Y_G \}$ and assigns each output a critic-based scalar reward $r(Y_g)$. Rather than optimizing the absolute reward, GRPO constructs a group-relative advantage $A_g = r(Y_g) - \frac{1}{G}\sum_{j=1}^{G} r(Y_j)$ within the group, which normalizes rewards within the group and inherently reduces variance of the gradient estimates.
The policy model is then optimized to increase the likelihood of outputs with above-average rewards, while constraining updates to stay close to a reference policy $f_0$ (typically the initialization) to preserve generation quality.
The objective for a single group is defined as:
\begin{equation}
    \label{eqn: grpo objective}
    \begin{aligned}
    \mathcal{J}(\theta) {=} \frac{1}{G} {\textstyle \sum_{g=1}^{G}} [\frac{f_{\theta}(Y_g|x)}{f_0(Y_g|x)} A_g] {-} \beta \cdot D_{KL} (f_{\theta}(\cdot|x)||f_0(\cdot|x)),
    \end{aligned}
\end{equation}

where $\beta$ is a hyperparameter controlling the strength of the Kullback–Leibler divergence \cite{csiszar1975divergence} penalty. For simplicity, we reuse $\mathcal{J}(\theta)$ to denote the objective aggregated over the distribution when used in theoretical bounds.

By evaluating and comparing multiple outputs for the same prompt, GRPO obtains a robust, context-aware learning signal and achieves more stable convergence compared to per-sample absolute reward optimization. This property makes it particularly suitable for aligning our scoring agent, where rewards are dense yet require precise calibration across diverse evaluation properties.

\begin{figure*}[t]
    \centering
    \includegraphics[width=0.95\linewidth]{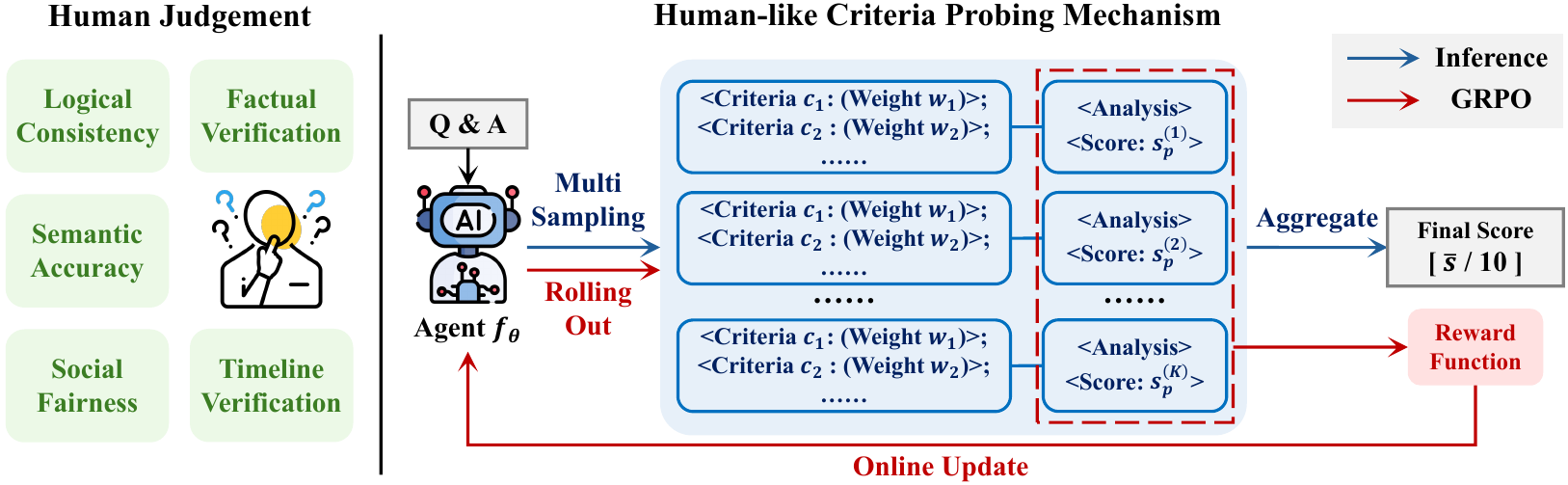}
    \caption{Overview of the proposed HCPD. Given a query–answer pair $(q, a)$, the agent instantiates a set of specific criteria $\{c_i\}_{i=1}^m$ and corresponding importance weights $\{w_i\}_{i=1}^m$. The criterion-level partial scores $\{s_i\}_{i=1}^m$ are subsequently produced and aggregated into an overall truthfulness measure $s_p$. During GRPO training, we fine-tune the agent by maximizing the score-alignment reward that encourages the predicted $s_p$ to match weak supervision labels derived from consistency-based similarity measures. At inference time, we invoke the agent $K$ times and aggregate the resulting scores to obtain a reliable decision $\bar{s}$.}
    \label{fig: framework}
    \vspace{-15pt}
\end{figure*}

\section{Related Work}
\textbf{Hallucination Detection.}
Although truthfulness is a fundamental requirement of language generation, large language models (LLMs) still frequently produce outputs that are factually incorrect or contextually inconsistent, commonly referred to as hallucination. Consequently, hallucination detection \cite{lin2022teaching,azaria2023internal,kuhn2023semantic, ren2023outofdistribution,manakul2023selfcheckgpt,zhang2023enhancing,lin2024generating,chen2024inside,du2024haloscope,park2025steer} has emerged as a central research focus for the safe and reliable deployments.

A prevailing research paradigm attributes LLM hallucinations to predictive uncertainty. Probability-based methods quantify such uncertainty via metrics including Perplexity \cite{ren2023outofdistribution}, Length-Normalized Entropy \cite{malinin2021uncertainty}, and Semantic Entropy \cite{kuhn2023semantic}. In contrast, consistency-based methods evaluate agreement across multiple sampled responses either through similarity metrics such as BERTScore \cite{zhang2019bertscore}, ROUGE \cite{lin2024generating},  natural language inference, and prompt-based self-consistency verification \cite{manakul2023selfcheckgpt}, or via spectral analysis of response covariance matrices such as eigenvalue decomposition \cite{chen2024inside}. Verbalized-based methods elicit confidence signals by prompting models to express uncertainty explicitly in natural language, through verbalized uncertainty \cite{lin2022teaching} or self-evaluation \cite{kadavath2022language} mechanisms. Nevertheless, these uncertainty-driven signals remain limited as hallucinations frequently occur even in high-confidence generations. Whereas multi-sample evaluation partially mitigates this issue, it incurs substantial computational overhead.

A complementary perspective infers textual truthfulness by exploiting internal model states. For instance, CCS \cite{burns2022discovering} extracts latent knowledge from activation patterns, SAPLMA \cite{azaria2023internal} trains classifiers directly on hidden representations, HaloScope \cite{du2024haloscope} identifies hallucination-related subspaces via singular value decomposition, and TSV \cite{park2025steer} introduces learnable steering vectors to adapt latent features for improved separability. Despite their effectiveness, \textit{these methods necessitate access to the internal model representations}, thereby limiting their applicability in real-world deployment scenarios where the underlying model architecture and training data provenance remain undisclosed.

\section{Method}
\subsection{Problem Definition}
\textbf{LLM Hallucination.}
In the context of text generation, hallucination refers to contents that are either factually inconsistent with established knowledge or contextually unfaithful to the given source or query intent \cite{huang2025survey}. Such errors typically arise from the model’s inherent uncertainty, limitations in its learned knowledge, or failures in reasoning.
Characterizing and probing these failure modes is therefore essential for the reliable deployment of LLMs.

\textbf{Hallucination Detection under Zero-source Constraint.}
The goal of hallucination detection is to assess whether a model-generated response exhibits factual inconsistencies or contextual unfaithfulness \cite{ji2023survey, farquhar2024detecting}. Formally, let $\mathcal{Q}$ denote the space of user queries and $\mathcal{A}$ the space of model outputs. For a query $q \in \mathcal{Q}$ and a generated answer $a \in \mathcal{A}$, the detection model $f$ produces a binary label $y \in \{0, 1\}$ indicating whether the response constitutes hallucinations.
In this work, we focus on the \textbf{zero-source constraint}: the detector $f$ \textit{cannot} access the LLM's internal states (\eg, token probabilities) or output distribution during inference. Furthermore, no external knowledge bases or reference texts are available. It must operate solely on the textual pair $(q, a)$, making the task challenging yet aligned with practical black-box deployment.

\textbf{Challenges in Hallucination Detection under Zero-source Constraint.} Developing an effective detector under this constraint faces several challenges. \textit{First}, without access to model-internal confidence signals (\eg, token logits), the detector must rely entirely on the semantic content, requiring a deep understanding to identify subtle inconsistencies. \textit{Second}, the absence of reference sources or external knowledge eliminates straightforward fact-checking, forcing the detector to reason intrinsically about plausibility. \textit{Third}, hallucinations are heterogeneous; they may manifest as factual errors, logical fallacies or semantic misalignments, demanding a flexible, multi-perspective evaluation rather than a single static criterion. \textit{Finally}, for trust and usability, decisions must be interpretable, clarifying \textit{why} a text is deemed hallucinatory. These motivate the need for an adaptive, interpretable, and purely text-based probing mechanism.

\begin{table}[t]
  \centering
  \caption{The structured and interpretable output from the agent.}
  \label{tab: structure output}
  \begin{threeparttable}
    \scalebox{0.81}{ 
      \begin{tabular}{cc} 
        \toprule
        Component & Example \\
        \midrule
        Q \& A & \makecell[l]{\textbf{Qes}: In which country were the 1948 Winter \\ Olympics held? \\
\textbf{Ans}: Norway.} \\
        \midrule
        \makecell{Specific \\ Criteria} & \makecell[l]{1. \textbf{Factual Grounding} (Weight: 60\%) \\
2. \textbf{Temporal Consistency} (Weight: 20\%) \\
3. \textbf{Semantic Precision} (Weight: 20\%)} \\
        \midrule
        Analysis & \makecell[l]{- \textbf{Factual Grounding}: The 1948 Winter Oly- \\ mpics were held in St. Moritz, Switzerland, n- \\ ot Norway. This is a clear factual error. \\
- \textbf{Temporal Consistency}: The response does \\ not specify a year, so there are no temporal is- \\ sues directly in the text. \\
- \textbf{Semantic Precision}: The response is vague \\ and does not provide the correct information, \\ leading to semantic distortion as the user is n- \\ ot given accurate details.} \\
        \midrule
        Score  ($1 {\sim} 10$) & $1$ \\
        \bottomrule
      \end{tabular}
    }
  \end{threeparttable}
  \vspace{-20pt}
\end{table}

\subsection{Motivations and Method Overview}
\textbf{Motivations.}
When assessing the correctness of a text, Human evaluators naturally engage in holistic and multifaceted reasoning. They do not apply a single, rigid rule but dynamically consider various dimensions, \eg, factual accuracy, logical soundness, temporal consistency, and contextual faithfulness, and intuitively weigh their importance based on the specific content. This context-dependent, multi-criteria weighting yields two practical advantages: i) \emph{adaptivity} — the ability to focus on the most diagnostic checks for each instance, improving detection precision; and ii) \emph{interpretability} — the ability to explain negative judgments by pointing to specific violated criteria. These observations motivate us to design a detection paradigm that emulates the human evaluative process: decomposing the holistic judgment into a set of interpretable criteria, dynamically determining their relevance (weights) based on the input, and synthesizing a final decision through a transparent aggregation.

\textbf{Method Overview.} Motivated by the above analysis, we propose \textit{Human-like Criteria Probing for Zero-source Hallucination Detection} (HCPD), which employs a Human-like Criteria Probing (HCP) mechanism as its core (Figure \ref{fig: framework}). Specifically, given an input pair $(q,a)$, HCP \textit{adaptively} uses an LLM agent to generate a set of interpretable evaluation criteria along with their relative importance weights and output an aggregate truthfulness score (Section \ref{sec: Adaptive Criteria Probing Mechanism}). To achieve this capability into the agent, we devise a \textit{Reward-Based Alignment Training} scheme using weak supervision from semantic consistency, teaching the agent to decompose and weight criteria without ground-truth labels (Section \ref{sec: Reward-Based Alignment Training}). To reduce the generation randomness, we introduce a \textit{Multi-Sampling Aggregation} strategy by performing $K$ independent probes per instance and averaging the resulting scores as the final detection statistic; this multi-sampling aggregation stabilizes the decision while preserving interpretability (Section \ref{sec: Robust Inference via Multi-Sampling Aggregation}). We also provide theoretical analysis justifying the feasibility of our probing approach (Section \ref{sec: Theoretical Analysis}).

\subsection{Human-like Criteria Probing Mechanism}
\label{sec: Adaptive Criteria Probing Mechanism}
Inspired by the multi-faceted nature of human evaluation, we develop \textbf{Human-like Criteria Probing (HCP)}, a structured reasoning framework that enables a pre-trained LLM to function as an \emph{adaptive} scoring agent for hallucination detection, moving beyond conventional monolithic scoring.

\textbf{Human-like Criteria Probing.} Given the query-answer pair $(q, a)$ under evaluation, the agent $f_\theta$ produces interpretable, criterion-wise assessments.
Specifically, it first adaptively generates a set of fine-grained specific criteria $\{c_i\}_{i=1}^m$ derived from predefined \textit{General Evaluation Criteria} set $\mathcal{C}$ in broader dimensions such as factual and logical coherence. The agent then autonomously assigns context-sensitive importance weights $\{w_i\}_{i=1}^m$ to these specific criteria, emphasizing, for instance, temporal accuracy for historical queries and logical soundness for scientific explanations. Finally, it predicts a partial score $s_i$ for each weighted criterion and aggregates them into an overall truthfulness measure $s_p$, thereby mimicking the contextual and multi-faceted nature of human judgment.
The operation of the agent can be formalized as:
\begin{equation}
    s_p = {\textstyle \sum_{i=1}^{m}} w_i \cdot s_i, \text{~~~}\{(c_i, w_i, s_i)\}_{i=1}^m \gets f_\theta (q, a; \mathcal{C}) ,
\end{equation}
where $m$ is the number of specific criteria, the weight $w_i\ge 0$ and $\sum_i w_i=1$, the scores are integers in $\{1,\ldots,10\}$, and the General Evaluation Criteria is given by:
\begin{equation}
    \mathcal{C} = \{\text{Factual}, \text{Logical}, \text{Semantic}, \text{Temporal}, \text{Social} \} .
\end{equation}

\textbf{Structured and Interpretable Output.}
To ensure transparency and facilitate downstream processing, the agent is constrained to generate outputs in a strictly structured format (Table \ref{tab: structure output}). For each specific criterion, the output explicitly reports its assigned weight, the corresponding evidence that supports or contradicts the response, and the associated partial score. These components are subsequently aggregated into an overall score via weighted synthesis. This design renders full interpretability of the detector’s decisions and enables fine-grained attribution of hallucinated content.

\subsection{Reward-based Alignment Training}
\label{sec: Reward-Based Alignment Training}
To equip the agent $f_\theta$ with precise evaluative capabilities, we train it with a reward-based alignment paradigm using Group Relative Policy Optimization (GRPO) \cite{shao2024deepseekmath}. This aligns the agent’s adaptive probing and scoring behavior with a weak supervision signal derived from semantic consistency, eliminating the reliance on expensive human-annotated hallucination labels.

\textbf{Training Data Construction and Labeling.}
Our training data is constructed from standard question-answer (QA) benchmarks that pair queries with human-verified reference answers. Specifically, for each query $q$ sourced from established datasets (\eg, TriviaQA \cite{triviaqa}), a corresponding ground-truth answer $\hat{a}$ is available. We further leverage an auxiliary LLM (\eg., the test target model) to generate a set of candidate responses $\{a^{(n)}\}_{n=1}^N$ for the same query, intentionally sampling outputs that span a spectrum from factually correct to clearly hallucinated.

Following common practice in hallucination detection benchmarks, we adopt widely used consistency metrics (\eg, BLEURT \cite{bleurt}) in natural language processing (NLP) as a source of weak supervision labeling. Concretely, for each generated answer $a^{(n)}$, we compute its semantic consistency to the reference $\hat{a}$, yielding $\operatorname{sim}(\hat{a}, a^{(n)}) \in [0, 1]$. Consistent with prior work \cite{du2024haloscope}, responses with a consistency score above $0.5$ are treated as relatively faithful, whereas those below are labeled as hallucinated. To obtain a graded supervision signal, we translate this continuous score to a discrete $1–10$ scale:
\begin{equation}
\label{eqn: score label}
    s_l^{(n)} = \operatorname{clip}(\lfloor 10 \cdot \operatorname{sim}(\hat{a}, a^{(n)})\rceil, 1, 10) ,
\end{equation}
where the reference answer $\hat{a}$ is assigned the maximum score of $10$. This procedure yields a weakly supervised dataset $\mathcal{D} = \{ (q, \{(\hat{a}, 10)\} \cup \{(a^{(n)}, s_l^{(n)})\}_{n=1}^N) \}$ that captures gradual degrees of hallucination severity and enables the agent to learn fine-grained distinctions.

\textbf{Score-alignment Reward Function.}
To encourage the agent towards producing scores consistent with the weak supervision labels, we introduce a score-alignment reward function. Given an input pair $(q, a)$, the agent performs human-like criteria probing and outputs a structured evaluation containing a predicted score $s_p \in \{ 1,\ldots,10 \}$. The reward function assigns a scalar reward $r \in [0, 1]$ by directly comparing the predicted score $s_p$ with the label $s_l$:
\begin{equation}
\label{eqn: reward function}
    r =
    \begin{cases}
        1 - \dfrac{|s_p - s_l|}{9}, & \text{if the output is well-formed}, \\
        0, & \text{otherwise} ,
    \end{cases}
\end{equation}
and optimize according to Eqn. (\ref{eqn: grpo objective}).

The reward attains its maximum value $r=1$ when the prediction perfectly matches the label ($s_p=s_l$) and decreases linearly with the absolute deviation between them. Notably, the score-alignment reward implicitly enforces structural constraints on the agent’s output: any deviation from the prescribed format that prevents reliable extraction of a valid score results in $r=0$; and the inherent KL \cite{csiszar1975divergence} regularization of GRPO in Eqn. (\ref{eqn: grpo objective}) limits deviation from the initial policy \cite{shao2024deepseekmath}.

\textbf{Differentiable Scoring v.s. Binary Classification.}
The adoption of a differentiable scoring scheme rather than binary ``True/False'' classification is a central design choice. \textbf{1) The graded formulation better reflects the inherently varying degrees of hallucination severity}, which ranges from minor factual imprecision to fully fabricated content. By modeling hallucination on a fine-grained scale, the agent can capture subtle deviations that binary labels necessarily collapse. \textbf{2) The reward defined in Eq. (\ref{eqn: reward function}) induces an informative policy optimization signal}. By penalizing prediction errors proportionally to their magnitude, it yields a denser reward landscape than binary rewards, which only indicate correctness without guiding the magnitude or direction of improvement. \textbf{3) Scalar scoring enables flexible decision-making at inference time}. Unlike binary classifiers with fixed operating points, our agent’s scores can be thresholded at varying levels to accommodate different precision–recall trade-offs without retraining.
Collectively, this scoring-alignment reward formulation not only facilitates effective learning under weak supervision but also endows the detector with calibrated, adaptable judgment capabilities essential for effective hallucination detection.

\subsection{Robust Inference via Multi-sampling Aggregation}
\label{sec: Robust Inference via Multi-Sampling Aggregation}
Due to the inherent stochasticity of language model generation, a single evaluation from the agent may exhibit non-negligible variance. To reduce this randomness and improve decision reliability, we employ a multi-sampling aggregation strategy during inference.

Formally, the trained agent $f_\theta$ is independently invoked $K$ times for the same input $(q, a)$ pair, yielding a set of structured evaluations together with corresponding synthesized final scores $\{ s_p^{(k)} \}_{k=1}^K$. We obtain a robust estimate of response truthfulness by aggregating these scores via arithmetic averaging:
\begin{equation}
\label{eqn: average score}
    \bar{s} = \frac{1}{K} {\textstyle \sum_{k=1}^{K}} s_p^{(k)} .
\end{equation}

This aggregation suppresses stochastic fluctuations in individual evaluations, resulting in more reliable and stable detection outcomes. Experimental results indicate that our method achieves substantially stronger detection performance than SelfCKGPT \cite{manakul2023selfcheckgpt} while incurring a similar time cost.

\subsection{Theoretical Analysis}
\label{sec: Theoretical Analysis}
To establish a theoretical foundation for HCPD, we provide guarantees for both its training and inference stages. Our analysis yields three core results: (i) Theorem \ref{thm: expectation alignment} ensures that GRPO training anchors the learned scoring behavior to the weak supervision signal in expectation; (ii) Proposition \ref{prop: hoeffding} justifies multi-sampling aggregation as a variance-reduction mechanism via a concentration bound; and (iii) Corollary \ref{coll: error decomposition} derives a threshold-free performance characterization by bounding the ranking error probability.

We first provide theoretical guarantees for the rationality and effectiveness of the training and inference framework.

\begin{theorem}(\textbf{Expectation alignment under training})
    \label{thm: expectation alignment}
    Let $x$ denote an input and $s_l(x)$ its corresponding weak label. Consider the $\ell_1$-risk of predicting score $s$: $R_x(s) \triangleq |s-s_l (x)|$,
    whose minimizer is $s = s_l(x)$. Let $Y \sim f_{\theta}(\cdot \mid x)$ be a stochastic well-formed generation of the agent, and let $S_{\theta}(x) = s_p(x, Y)$ denote the parsed score. Define the conditional mean score as $\mu_{\theta}(x) \triangleq \mathbb{E}\big[S_\theta(x) \mid x\big]$, we have
    \begin{equation}
        \mathbb{E}_x\big[|\mu_\theta(x)-s_l(x)|\big] \le \mathcal{J^{\prime}}(\theta) ,
    \end{equation}
    where $\mathcal{J^{\prime}}(\theta)$ is affine-equivalent to the GRPO objective $\mathcal{J}(\theta)$ defined in Eqn. (\ref{eqn: grpo objective}).
\end{theorem}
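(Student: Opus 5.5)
The core of the argument is Jensen's inequality, combined with the fact that the score-alignment reward in Eqn. (\ref{eqn: reward function}) is an affine function of the $\ell_1$-risk on well-formed outputs. Since $|\cdot - s_l(x)|$ is convex, conditionally on $x$ we have
\begin{equation*}
|\mu_\theta(x) - s_l(x)| = \big|\mathbb{E}[S_\theta(x) - s_l(x)\mid x]\big| \le \mathbb{E}\big[\,|S_\theta(x)-s_l(x)|\;\big|\; x\big] = \mathbb{E}\big[R_x(S_\theta(x))\mid x\big].
\end{equation*}
Taking $\mathbb{E}_x$ bounds the left-hand side of the theorem by the expected $\ell_1$-risk of the stochastic policy. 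The convex-risk step is routine.

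Next, I would rewrite this expected risk through the reward. On well-formed generations, $r = 1 - R_x(S_\theta(x))/9$, so $R_x(S_\theta(x)) = 9(1-r)$. Hence
\begin{equation*}
\mathbb{E}_x\mathbb{E}_{Y\sim f_\theta}[R_x] = 9\big(1 - \mathbb{E}_x\mathbb{E}_{Y}[r]\big).
\end{equation*}
Because the theorem conditions on well-formed $Y$, I would either work with $f_\theta$ restricted to well-formed outputs or note that malformed outputs receive $r=0$, which corresponds to the maximal risk value $9$. In either case this identity gives an upper bound, since $R_x\le 9$ always. That handles the format issue.

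The remaining task is to connect $\mathbb{E}[r]$ to the GRPO objective $\mathcal{J}(\theta)$ in Eqn. (\ref{eqn: grpo objective}). The group-relative advantage $A_g = r(Y_g) - \bar r$ has zero mean within the group. However, the importance-weighted surrogate $\frac{1}{G}\sum_g \frac{f_\theta}{f_0}A_g$, taken in expectation with $Y_g\sim f_0$, equals $\mathbb{E}_{f_\theta}[r] - \mathbb{E}_{f_0}[r]$ up to the baseline correction. The baseline term contributes $-\frac{1}{G}\big(\mathbb{E}_{f_\theta}[r]-\mathbb{E}_{f_0}[r]\big)$ through the self-term, giving an overall factor $(1-1/G)$, and cross terms vanish by independence. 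So the expected surrogate equals $(1-\tfrac1G)\big(\mathbb{E}_{f_\theta}[r]-c_0\big)$, where $c_0$ depends only on the reference policy.

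I therefore propose defining
\begin{equation*}
\mathcal{J}'(\theta) \triangleq 9 - \tfrac{9G}{G-1}\big(\mathcal{J}(\theta) + \beta D_{KL}\big) - 9c_0,
\end{equation*}
which is affine in the surrogate. Alternatively, one can drop the KL term as a nonnegative quantity, since $D_{KL}\ge0$ means that adding $\beta D_{KL}$ only loosens the bound. Under that choice, $\mathcal{J}'$ is affine in $\mathcal{J}$ up to an inequality, which is what the theorem's ``affine-equivalent'' phrasing allows.

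The main obstacle is this last step. It requires making precise that the importance ratio $f_\theta/f_0$ combined with the group baseline yields an unbiased, rescaled estimate of $\mathbb{E}_{f_\theta}[r]$, and it requires deciding how the KL term is absorbed. My first attempt would be to state the affine equivalence at the level of the population surrogate, with $G$, $\beta$ and the reference quantities treated as constants. I would then argue that maximizing $\mathcal{J}$ is equivalent to minimizing $\mathcal{J}'$, so that the bound tightens as training progresses.
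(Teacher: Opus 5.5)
Your proposal is correct and follows the paper's proof. The paper applies Jensen's inequality conditionally on $x$, identifies $\mathcal{J}'(\theta)$ with the expected $\ell_1$-risk plus a nonnegative multiple of the KL term, and then discards that KL term by nonnegativity; you do the same. The paper only asserts the affine equivalence ``with $\lambda\propto\beta$''. Your computation of the expected group-baselined surrogate, $(1-\tfrac{1}{G})(\mathbb{E}_{f_\theta}[r]-\mathbb{E}_{f_0}[r])$ (valid for $G\ge 2$), actually establishes it. Your KL-dropped choice $\mathcal{J}'=9(1-c_0)-\tfrac{9G}{G-1}\mathcal{J}(\theta)$ coincides, in the well-formed regime, with the paper's $\mathcal{J}'$ at $\lambda=\tfrac{9G}{G-1}\beta$. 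Use that version rather than your first definition, which is affine in the surrogate but not in $\mathcal{J}$.
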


Theorem \ref{thm: expectation alignment} formalizes a training-time alignment guarantee: optimizing the KL-regularized GRPO objective forces the expected parsed scores $\mu_{\theta}(x)$ toward the weak supervision label $s_l(x)$ in expectation over the training distribution. This result links the GRPO objective to stable scoring behavior and constitutes the training-side foundation of our end-to-end detection analysis.

While characterizing training-time alignment, inference introduces an additional uncertainty: each evaluation produces a random score $S_{\theta}(x)$. Proposition \ref{prop: hoeffding} provides Hoeffding's concentration bound for $K$ parsed scores.
\begin{proposition}(\textbf{Multi-sampling concentration})
    \label{prop: hoeffding}
    Fix an inference input $x$. Suppose the $K$ well-formed generations $Y_1,\ldots,Y_K \overset{\text{i.i.d.}}{\sim} f_{\theta}(\cdot \mid x)$. For each $k\in\{1,\ldots,K\}$, define the parsed score $S_{\theta}^{(k)}(x) \triangleq s_p(x,Y_k) \in \{1,\ldots,10\}$, and let $\bar{s}(x)\triangleq \frac{1}{K}\sum_{k=1}^K S_{\theta}^{(k)}(x)$ denote the aggregated score in Eqn. (\ref{eqn: average score}). Then for any bias threshold $u>0$,
    \begin{equation}
        \mathbb{P}\big(|\bar{s}(x) - \mathbb{E}[S_{\theta}(x) \mid x]| \ge u \mid x \big) \le 2\exp\Big(-\frac{2Ku^2}{(10-1)^2}\Big) .
    \end{equation}
\end{proposition}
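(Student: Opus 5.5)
The plan is to apply Hoeffding's inequality directly, conditionally on the fixed input $x$. The only work is to check its hypotheses: independence, boundedness, and the correct mean.

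\textbf{Step 1: independence and identical distribution.} Conditionally on $x$, the generations $Y_1,\ldots,Y_K$ are i.i.d.\ draws from $f_\theta(\cdot\mid x)$. Each parsed score $S_\theta^{(k)}(x)=s_p(x,Y_k)$ is a fixed measurable function of $(x,Y_k)$. Hence the scores $S_\theta^{(1)}(x),\ldots,S_\theta^{(K)}(x)$ are conditionally i.i.d. Each has the same law as $S_\theta(x)$, so every term has conditional mean $\mu_\theta(x)=\mathbb{E}[S_\theta(x)\mid x]$. Consequently $\mathbb{E}[\bar s(x)\mid x]=\mu_\theta(x)$ by linearity.

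\textbf{Step 2: boundedness.} For a well-formed generation, the parsed score lies in $\{1,\ldots,10\}$. Even the weighted form $\sum_i w_i s_i$ with $w_i\ge 0$ and $\sum_i w_i=1$ is a convex combination of integers in $[1,10]$, so it lies in $[1,10]$. Thus each summand is almost surely contained in the interval $[a,b]=[1,10]$, with range $b-a=9$.

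\textbf{Step 3: apply Hoeffding's inequality.} Hoeffding's inequality for independent variables $X_k\in[a_k,b_k]$ gives
\[
\mathbb{P}\Big(\big|\textstyle\sum_k (X_k-\mathbb{E}X_k)\big|\ge t\Big)\le 2\exp\Big(-\frac{2t^2}{\sum_k (b_k-a_k)^2}\Big).
\]
I would apply it under the conditional probability $\mathbb{P}(\cdot\mid x)$ with $t=Ku$ and $b_k-a_k=9$. The exponent then becomes $2K^2u^2/(K\cdot 81)=2Ku^2/(10-1)^2$. Dividing the event inside by $K$ turns it into $|\bar s(x)-\mu_\theta(x)|\ge u$, which is exactly the claimed bound.

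\textbf{Main obstacle.} There is no real analytic obstacle; the one point needing care is the conditioning on well-formedness. If the $Y_k$ are understood as drawn from $f_\theta(\cdot\mid x)$ restricted to well-formed outputs, they remain i.i.d. under the renormalized conditional law. The expectation $\mathbb{E}[S_\theta(x)\mid x]$ must then be interpreted under that same law, which is consistent with the definition in Theorem~\ref{thm: expectation alignment}. I would state this explicitly, and then Steps 1--3 go through unchanged.
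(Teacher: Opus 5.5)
Your proposal is correct and follows the paper's proof essentially verbatim: conditional on $x$, the parsed scores are i.i.d.\ and bounded in $[1,10]$, and Hoeffding's inequality with range $9$ gives the stated bound. Your additional details are sound and are left implicit in the paper: the explicit exponent computation with $t=Ku$, the observation that a convex combination $\sum_i w_i s_i$ still lies in $[1,10]$, and reading the mean under the well-formed conditional law.
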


Due to the requirement of domain knowledge reserves or expert intervention, a rigorous human-level annotation of hallucination severity $s^{\star}$ is almost inaccessible. Accordingly, we treat the similarity-based label $s_l$ as a weak proxy and model it as: $s_l(x) = g(s^{\star}(x)) + \epsilon(x)$, where $g$ is a monotone non-decreasing mapping, and $\epsilon$ captures conditional bias arising from stochasticity and systematic knowledge discrepancies.
For the detection process based on this, we have the following derivations.

\begin{corollary}(\textbf{Ranking error decomposition})
    \label{coll: error decomposition}
    Let $x$ denote an input and $s_l(x)$ its corresponding weak label. Assume the proxy bias is uniformly bounded: $|\epsilon(x)|\le b_{\max},~~~\forall x$. Let $Y_1,\ldots,Y_K \overset{\text{i.i.d.}}{\sim} f_{\theta}(\cdot \mid x)$ be well-formed generations, and define $S_{\theta}^{(k)}(x)$ and $\bar{s}(x)$ as in Proposition \ref{prop: hoeffding}. Let $x^{+}$ and $x^{-}$ denote independent draws from the distributions of true (non-hallucinated) and hallucinated inputs, respectively. Define the ranking error probability $\mathcal{E}_{\mathrm{rank}} \triangleq \mathbb{P}\big(\bar{s}(x^{+}) \le \bar{s}(x^{-})\big)$. Then for any $\Delta>b_{\max}$,
    \begin{equation*}
        \begin{aligned}
            \mathcal{E}_{\mathrm{rank}}
            \le
            &\underbrace{\mathbb{P}\big(g(s^{\star}(x^{+}))-g(s^{\star}(x^{-})) \le 2\Delta\big)}_{\text{intrinsic separability}}
            +
            \underbrace{\frac{4\,\mathcal{J^{\prime}}(\theta)}{\Delta-b_{\max}}}_{\text{training alignment}} \\
            &+
            \underbrace{4\exp \Big(-\frac{2K}{(10-1)^2}\Big(\frac{\Delta-b_{\max}}{2}\Big)^2\Big)}_{\text{multi-sampling concentration}} ,
        \end{aligned}
    \end{equation*}
    where $\mathcal{J^{\prime}}(\theta)$ is affine-equivalent to the GRPO objective $\mathcal{J}(\theta)$ defined in Eqn. (\ref{eqn: grpo objective}).
\end{corollary}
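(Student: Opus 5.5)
The plan is a union bound over three failure events, decomposing $\bar{s}(x) - g(s^\star(x))$ into three pieces:
\begin{equation*}
\bar{s}(x) - g(s^\star(x)) = \big(\bar{s}(x) - \mu_\theta(x)\big) + \big(\mu_\theta(x) - s_l(x)\big) + \epsilon(x).
\end{equation*}
Here $\mu_\theta(x) = \mathbb{E}[S_\theta(x)\mid x]$ as in Theorem~\ref{thm: expectation alignment}, and $s_l(x) - g(s^\star(x)) = \epsilon(x)$ with $|\epsilon(x)|\le b_{\max}$.

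\textbf{Step 1 (reduction to deviation events).} Suppose $g(s^\star(x^+)) - g(s^\star(x^-)) > 2\Delta$ and, for both $x\in\{x^+,x^-\}$, we have $|\bar{s}(x) - g(s^\star(x))| < \Delta$. Then
\begin{equation*}
\bar{s}(x^+) > g(s^\star(x^+)) - \Delta > g(s^\star(x^-)) + \Delta > \bar{s}(x^-),
\end{equation*}
so no ranking error occurs. Hence $\mathcal{E}_{\mathrm{rank}}$ is at most the intrinsic-separability probability plus $\sum_{x\in\{x^+,x^-\}} \mathbb{P}(|\bar{s}(x)-g(s^\star(x))|\ge\Delta)$.

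\textbf{Step 2 (splitting the deviation).} Since $|\bar{s}-g(s^\star)| \le |\bar{s}-\mu_\theta| + |\mu_\theta - s_l| + b_{\max}$, the event $\{|\bar{s}-g(s^\star)|\ge\Delta\}$ implies either $|\bar{s}-\mu_\theta|\ge(\Delta-b_{\max})/2$ or $|\mu_\theta-s_l|\ge(\Delta-b_{\max})/2$. This is where the assumption $\Delta>b_{\max}$ is used.

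\textbf{Step 3 (bounding the two pieces).}
\begin{itemize}
\item \emph{Concentration.} Condition on $x$ and apply Proposition~\ref{prop: hoeffding} with $u=(\Delta-b_{\max})/2$, then take expectation over $x$. This gives $2\exp(-2Ku^2/81)$ for each of $x^+$ and $x^-$, which sums to the stated $4\exp(\cdot)$ term.
\item \emph{Training alignment.} Apply Markov's inequality to the nonnegative variable $|\mu_\theta(x)-s_l(x)|$ and use Theorem~\ref{thm: expectation alignment}:
\begin{equation*}
\mathbb{P}\big(|\mu_\theta-s_l|\ge u\big) \le \frac{\mathcal{J}'(\theta)}{u} = \frac{2\mathcal{J}'(\theta)}{\Delta-b_{\max}}.
\end{equation*}
Summing over $x^+$ and $x^-$ gives the $4\mathcal{J}'(\theta)/(\Delta-b_{\max})$ term.
\end{itemize}
Adding all contributions yields the claimed bound.

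\textbf{Main obstacle.} The delicate point is the Markov step, for two reasons.
\begin{itemize}
\item Theorem~\ref{thm: expectation alignment} bounds $\mathbb{E}_x$ over the training distribution, but here we need it under the class-conditional laws of $x^+$ and $x^-$. I would make the implicit assumption explicit: either the bound is read as holding under each class-conditional distribution, or $\mathcal{J}'$ is understood as the aggregated objective over the relevant distribution, as the paper permits by reusing $\mathcal{J}(\theta)$.
\item Strict versus non-strict inequalities must be handled consistently, so that ties $\bar{s}(x^+)=\bar{s}(x^-)$ fall within the union of the bad events.
\end{itemize}
Independence of $x^+$ and $x^-$ is not actually needed for the union bound. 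It only matters for the interpretation of $\mathcal{E}_{\mathrm{rank}}$.
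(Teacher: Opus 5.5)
Your proposal is correct and follows the paper's proof step for step. Both arguments contain the ranking-error event in the separability event plus two deviation events, split at $\delta=(\Delta-b_{\max})/2$ via the triangle inequality, use Proposition~\ref{prop: hoeffding} for the sampling term, and use Markov's inequality with Theorem~\ref{thm: expectation alignment} for the alignment term. Your caveat that Theorem~\ref{thm: expectation alignment} controls an expectation over the training distribution, not the class-conditional laws of $x^{+}$ and $x^{-}$, points to an assumption the paper's proof also makes without comment. Your handling of strict versus non-strict inequalities, which places ties inside the bad events, is also cleaner than the paper's; its non-strict chain does not by itself exclude ties.
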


Corollary \ref{coll: error decomposition} establishes a decomposable upper bound on the detector’s ranking error probability. This bound makes explicit the principal factors that affect the detection performance. Beyond the \textbf{intrinsic separability of the data}, it crucially shows that a smaller \textbf{training alignment loss} $\mathcal{J^{\prime}}(\theta)$ and a larger \textbf{inference sampling size} $K$ both contribute to a reduction of the error bound. This provides theoretical support for our design choices: optimizing the GRPO objective effectively aligns the agent's scores with the proxy supervision, while the multi‑sampling aggregation strategy suppresses inference variance at an exponential rate. Consequently, the proposed training and inference components jointly ensure reliable detection. We refer readers to Appendix \ref{sec: theoretical analysis} for more details.

\begin{table*}[t]
  \centering
  \caption{Comparisons with hallucination detection baselines on different datasets for LLaMA-3.1-8b and Qwen-3-8b in terms of AUROC (\%), where $\clubsuit$ denotes methods trained on fully labeled datasets.}
  \scalebox{0.8}{
  \begin{tabular}{l|l|cccc|c}
    \toprule
    Model& Method & TriviaQA & SciQ & NQ Open & CoQA & Avg.  \\
    \midrule
    \multirow{13}{*}{Llama-3.1-8b}
    & LN-Entropy \cite{malinin2021uncertainty} & $73.62_{\pm2.20}$ & $62.69_{\pm2.73}$ & $52.36_{\pm1.53}$ & $74.52_{\pm1.86}$ & $65.80$ \\
    & \blue{Self-evaluation} \cite{kadavath2022language} & $56.07_{\pm1.92}$ & $54.12_{\pm1.82}$ & $59.83_{\pm3.68}$ & $62.51_{\pm1.42}$& $58.13$ \\
    & CCS \cite{burns2022discovering} & $78.20_{\pm1.89}$ & $58.85_{\pm3.03}$ & $55.50_{\pm1.89} $ & $68.98_{\pm2.04}$& $65.38$ \\
    & SelfCKGPT \cite{manakul2023selfcheckgpt}  & $74.58_{\pm1.90}$ & $59.68_{\pm2.45}$ & $62.13_{\pm2.60} $ & $70.61_{\pm2.10}$ & $66.75$ \\
    & Perplexity \cite{ren2023outofdistribution} & $80.62_{\pm2.62}$ & $66.12_{\pm2.85}$ & $57.92_{\pm2.60}$ & $81.41_{\pm2.21}$ & $71.52$ \\
    & SAPLMA$^\clubsuit$\cite{azaria2023internal} & $78.51_{\pm3.16}$ & $85.63_{\pm0.96}$ & $76.23_{\pm0.82}$ & $71.58_{\pm1.35}$ & $77.99$ \\
    & Semantic Entropy \cite{kuhn2023semantic} & $78.71_{\pm3.09}$ & $77.81_{\pm3.17}$ & $61.04_{\pm4.29} $ & $75.26_{\pm4.63}$ & $73.21$ \\
    & Lexical Similarity \cite{lin2024generating} & $77.96_{\pm2.03}$ & $67.09_{\pm2.05}$ & $62.85_{\pm1.57} $ & $77.53_{\pm0.90}$ & $71.36$ \\
    & EigenScore \cite{chen2024inside} & $51.35_{\pm1.23}$ & $51.52_{\pm0.82}$ & $52.17_{\pm2.13} $ & $52.00_{\pm1.19}$ & $51.76$ \\
    & HaloScope$^\clubsuit$ \cite{du2024haloscope} & $58.19_{\pm5.79}$ & $69.04_{\pm6.36}$ & $63.38_{\pm3.02}$ & $72.11_{\pm4.96}$ & $65.68$ \\
    & \blue{TAD} $^\clubsuit$ \cite{vazhentsev2025unconditional} & $72.01_{\pm1.03}$ & $66.75_{\pm1.96}$ & $68.88_{\pm2.56}$ & $74.86_{\pm2.10}$ & $70.63$ \\
    & TSV$^\clubsuit$ \cite{park2025steer} & $79.78_{\pm3.36}$ & $80.01_{\pm1.17}$ & $70.17_{\pm1.41}$ & $69.31_{\pm6.75}$ & $74.82$ \\
    & \textbf{HCPD (Ours)}  & $\mathbf{86.25}_{\pm1.08}$ & $\mathbf{86.04}_{\pm2.25}$ & $\mathbf{90.38}_{\pm3.58}$ & $\mathbf{90.07}_{\pm2.58}$ & $\mathbf{88.19}$ \\
    \midrule
    \multirow{13}{*}{Qwen-3-8b}
    & LN-Entropy \cite{malinin2021uncertainty} & $64.66_{\pm2.55}$ & $61.22_{\pm4.44}$ & $68.70_{\pm4.11}$ & $54.28_{\pm3.51}$ & $62.22$ \\
    & \blue{Self-evaluation} \cite{kadavath2022language} & $54.74_{\pm3.59}$ & $51.36_{\pm0.36}$ & $60.02_{\pm1.18}$ & $57.01_{\pm1.44}$& $55.78$ \\
    & CCS \cite{burns2022discovering} & $57.51_{\pm1.97}$ & $57.96_{\pm3.03}$ & $58.63_{\pm5.53} $ & $55.10_{\pm4.58}$& $57.30$ \\
    & SelfCKGPT \cite{manakul2023selfcheckgpt}  & $77.12_{\pm2.73}$ & $66.78_{\pm2.40}$ & $80.51_{\pm4.53} $ & $67.57_{\pm2.30}$ & $73.00$ \\
    & Perplexity \cite{ren2023outofdistribution} & $59.65_{\pm2.42}$ & $59.27_{\pm4.59}$ & $67.81_{\pm4.32}$ & $54.06_{\pm3.63}$ & $60.20$ \\
    & SAPLMA$^\clubsuit$\cite{azaria2023internal} & $78.11_{\pm1.09}$ & $86.63_{\pm1.53}$ & $72.86_{\pm1.20}$ & $80.28_{\pm1.40}$ & $79.47$ \\
    & Semantic Entropy \cite{kuhn2023semantic} & $81.74_{\pm2.78}$ & $70.84_{\pm5.41}$ & $75.10_{\pm1.97} $ & $70.59_{\pm5.09}$ & $74.57$ \\
    & Lexical Similarity \cite{lin2024generating} & $52.33_{\pm1.44}$ & $53.61_{\pm2.77}$ & $56.75_{\pm4.30}$ & $59.13_{\pm2.32}$ & $55.45$ \\
    & EigenScore \cite{chen2024inside} & $52.60_{\pm2.03}$ & $52.85_{\pm2.57}$ & $56.39_{\pm3.26}$ & $52.79_{\pm0.37}$ & $53.66$ \\
    & HaloScope$^\clubsuit$ \cite{du2024haloscope} & $58.21_{\pm3.99}$ & $74.98_{\pm3.98}$ & $57.25_{\pm1.50}$ & $62.18_{\pm4.49}$ & $63.16$ \\
    & \blue{TAD} $^\clubsuit$ \cite{vazhentsev2025unconditional} & $71.27_{\pm1.90}$ & $55.98_{\pm4.37}$ & $76.82_{\pm3.77}$ & $66.94_{\pm0.58}$ & $67.75$ \\
    & TSV$^\clubsuit$ \cite{park2025steer} & $73.42_{\pm3.89}$ & $78.77_{\pm0.94}$ & $61.38_{\pm3.43}$ & $68.40_{\pm4.92}$ & $70.49$ \\
    & \textbf{HCPD (Ours)}  & $\mathbf{93.69}_{\pm0.89}$ & $\mathbf{92.63}_{\pm2.90}$ & $\mathbf{87.35}_{\pm6.22}$ & $\mathbf{84.80}_{\pm1.01}$ & $\mathbf{89.62}$ \\
    \bottomrule
  \end{tabular}}
  \label{tab: main result}
  \vspace{-15pt}
\end{table*}

\section{Experiments}
\subsection{Experimental Settings}
\textbf{Datasets.}
We conduct the experiments on four standard QA datasets: TriviaQA \cite{triviaqa}, SciQ \cite{sciq}, NQ Open \cite{natural-Q}, and CoQA \cite{reddy2019coqa}. For NQ Open, we use all $3,610$ samples from the validation set. From TriviaQA, we randomly sample $3,310$ test pairs while $3,000$ training pairs are drawn from SciQ and CoQA. Each dataset is split into training and test sets with a $3:1$ ratio, and all reported results are averaged over $5$ independent random splits. All model responses are generated using greedy decoding. More details on dataset are shown in Appendix \ref{sec: more dataset details}.

\textbf{Baselines.}
We compare our method against various baselines from multiple technical paradigms for hallucination detection.
1) Logit-based approaches: Perplexity \cite{ren2023outofdistribution}, Length Normalized Entropy (LN-entropy) \cite{malinin2021uncertainty} and Semantic Entropy \cite{kuhn2023semantic}; 2) Consistency-based approaches: Lexical Similarity \cite{lin2024generating}, SelfCKGPT \cite{manakul2023selfcheckgpt} and EigenScore \cite{chen2024inside}; \blue{3) Verbalized-confidence approach: Self-evaluation \cite{kadavath2022language};} 4) Internal state-based approaches: Contrast-Consistent Search (CCS) \cite{burns2022discovering},  SAPLMA \cite{azaria2023internal}, HaloScope \cite{du2024haloscope}, TAD \cite{vazhentsev2025unconditional} and TSV \cite{park2025steer}. 

\textbf{Implementation Details.}
We compare our method with baselines on $7$ LLMs, i.e., \blue{LLaMA-3.1-8b}, LLaMA-3.1-70b \cite{dubey2024llama}, LLaMA-2-7b, LLaMA-2-13b \cite{touvron2023llama2}, Qwen-3-8b \cite{yang2025qwen3}, Qwen-2.5-7b, and  Qwen-2.5-14b \cite{qwen2.5}. For our method, we adopt a Qwen-2.5-7b as the agent and train it according to the Open-R1 implementation\footnote{\url{https://github.com/huggingface/open-r1}}. Through the optimization parameters in our method, we set $lr=2 \times 10^{-4}$ on LLaMA-3.1-8b and $lr=1 \times 10^{-4}$ on Qwen-3-8b. For the coefficient $\beta$ that controls the $D_{KL}$ strength, we set $\beta=0.05$ on TriviaQA, SciQ with Qwen-3-8b and TriviaQA with LLaMA-3.1-8b, while $\beta=0.04$ for all others. More details are shown in Appendix \ref{sec: experiment settings details}.

\textbf{Evaluation.}
Following \citet{farquhar2024detecting}, we evaluate detection performance using the Area Under the Receiver Operating Characteristic Curve (AUROC). As a proxy for ground-truth factuality, we employ BLEURT \cite{bleurt} to measure semantic consistency between a generated answer and its reference. Generated answers with a consistency score exceeding $0.5$ are regarded as relatively faithful, whereas those below are labeled as hallucinated.

\begin{table*}[t]
  \centering
  \caption{Comparisons with training-based baselines across target models on TriviaQA in terms of AUROC (\%).}
  \scalebox{0.66}{
  \begin{tabular}{l|l|ccccccc}
    \toprule
    \multirow{2}{*}{Source Model} & \multirow{2}{*}{Method} & \multicolumn{7}{c}{Target Model} \\
    ~ & ~ & LLaMA-3.1-8b & LLaMA-3.1-70b & LLaMA-2-7b & LLaMA-2-13b & Qwen-3-8b & Qwen-2.5-7b & Qwen-2.5-14b  \\
    \midrule
    \multirow{4}{*}{LLaMA-3.1-8b}
    & SAPLMA \cite{azaria2023internal} & $78.51_{\pm3.16}$ & $77.63_{\pm2.63}$ & $78.13_{\pm2.65}$ & $78.25_{\pm1.83}$ & $71.63_{\pm2.15}$ & $69.70_{\pm2.13}$ & $63.77_{\pm2.28}$ \\
    & HaloScope \cite{du2024haloscope} & $58.19_{\pm6.10}$ & $86.50_{\pm5.56}$ & $82.68_{\pm2.27}$ & $90.88_{\pm1.85}$ & $54.99_{\pm1.89}$ & $68.37_{\pm3.66}$ & $68.33_{\pm3.77}$ \\
    & TSV \cite{park2025steer} & $79.78_{\pm3.36}$ & $81.29_{\pm10.23}$ & $82.85_{\pm3.90}$ & $88.06_{\pm4.23}$ & $59.89_{\pm8.38}$ & $77.07_{\pm5.82}$ & $61.17_{\pm7.98}$ \\
    & \textbf{HCPD (Ours)}  & $\mathbf{86.25}_{\pm1.08}$ & $\mathbf{86.87}_{\pm0.98}$ & $\mathbf{90.74}_{\pm0.52}$ & $\mathbf{93.43}_{\pm1.33}$ & $\mathbf{78.89}_{\pm2.22}$ & $\mathbf{88.84}_{\pm1.02}$ & $\mathbf{83.34}_{\pm0.86}$ \\
    \midrule
    \multirow{4}{*}{Qwen-3-8b}
    & SAPLMA \cite{azaria2023internal} & $78.22_{\pm2.64}$ & $78.14_{\pm1.64}$ & $80.33_{\pm1.22}$ & $79.50_{\pm1.28}$ & $78.11_{\pm1.09}$ & $71.22_{\pm1.16}$ & $71.73_{\pm2.18}$ \\
    & HaloScope \cite{du2024haloscope} & $53.89_{\pm0.78}$ & $57.97_{\pm1.21}$ & $55.47_{\pm9.50}$ & $56.73_{\pm9.89}$ & $58.21_{\pm3.99}$ & $57.64_{\pm1.53}$ & $78.65_{\pm5.95}$ \\
    & TSV \cite{park2025steer} & $54.14_{\pm3.26}$ & $58.85_{\pm5.70}$ & $60.31_{\pm6.89}$ & $64.04_{\pm8.75}$ & $73.42_{\pm3.89}$ & $57.08_{\pm3.75}$ & $65.77_{\pm4.82}$ \\
    & \textbf{HCPD (Ours)}  & $\mathbf{87.09}_{\pm1.32}$ & $\mathbf{89.73}_{\pm2.63}$ & $\mathbf{90.84}_{\pm0.76}$ & $\mathbf{93.95}_{\pm1.07}$ & $\mathbf{93.69}_{\pm0.89}$ & $\mathbf{89.74}_{\pm0.77}$ & $\mathbf{87.94}_{\pm1.19}$ \\
    \bottomrule
  \end{tabular}}
  \vspace{-15pt}
  \label{tab: cross model result TriviaQA}
\end{table*}

\subsection{Comparisons with Baselines}
\textbf{Results on LLaMA-3.1-8b.}
As reported in Table \ref{tab: main result}, most training-free methods exhibit limited detection performance (about $51.76\%$ to $73.21\%$ on average). While some methods display moderate separability on simple benchmarks (\eg, TriviaQA, CoQA), their performance drops significantly on more challenging datasets. In contrast, training-based methods maintain stronger and more consistent performance across datasets, such as SAPLMA ($77.99\%$) and TSV ($74.82\%$). Notably, our method attains the best overall results while relying solely on $(q, a)$ inputs, with an average AUROC of $88.19\%$, which exceeds the second-best method by $10.20\%$. Specifically, HCPD improves AUROC by $5.63\%$ on TriviaQA, by $14.15\%$ on NQ Open, and by $8.66\%$ on CoQA. We attribute these gains to the agent that fully considers criteria from multiple perspectives, enabling fine-grained identification of hallucinated content.

\textbf{Results on Qwen-3-8b.}
The results on Qwen-3-8b exhibit similar characteristics: the training-free and training-based methods achieve average AUROCs of $53.66\%{-}74.57\%$ and $63.16\%{-}79.47\%$, respectively. In comparison, our HCPD again achieves consistent improvements outperforming the second-best method by $10.15\%$ on average. Taken together with the results on LLaMA-3.1-8b across datasets, the effectiveness and robustness of HCPD under diverse backbone models and benchmarks are significantly underscored.

\begin{figure}[t]
    \centering
    \includegraphics[width=0.68\linewidth]{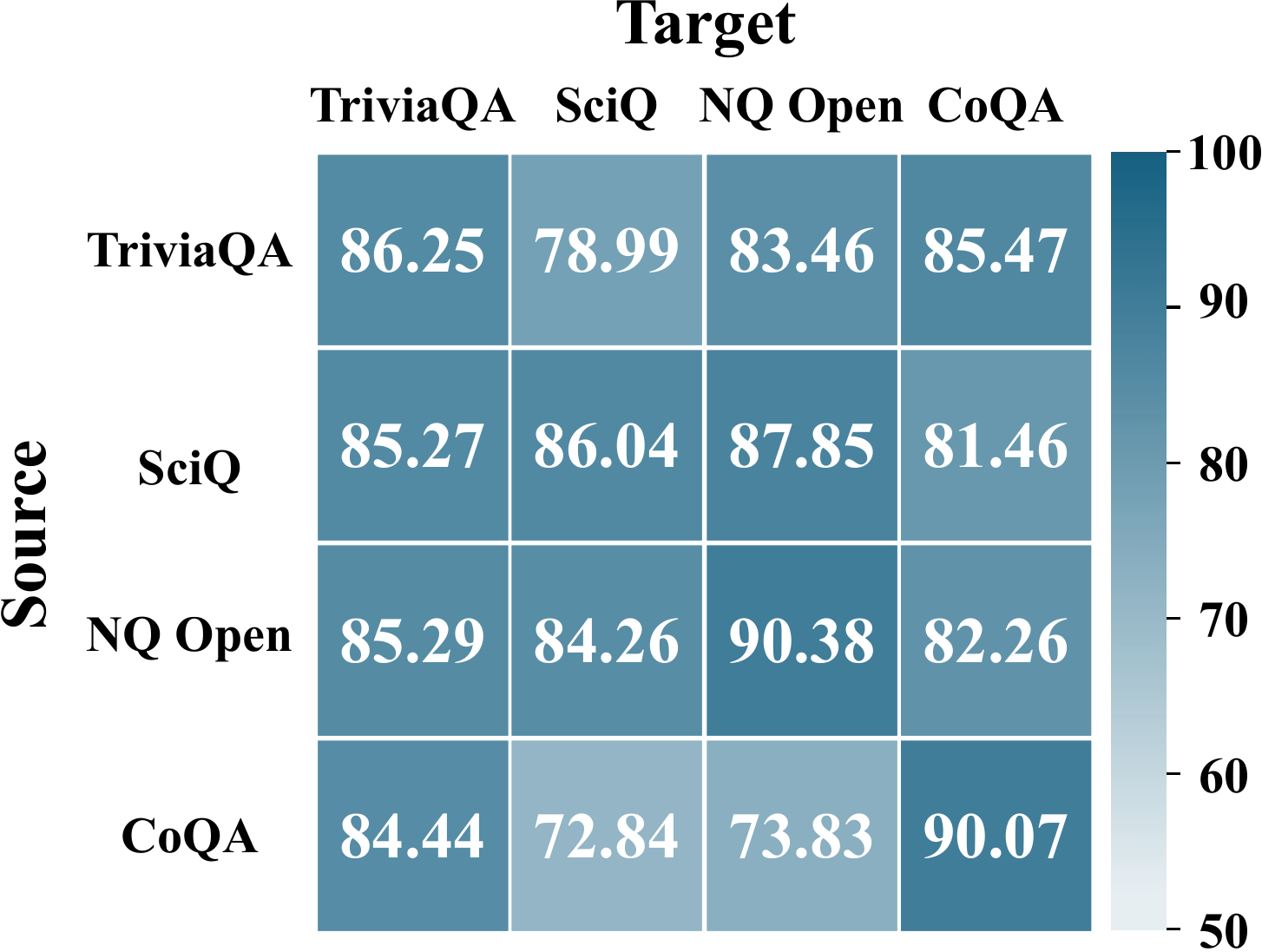}
    \caption{Cross-dataset AUROCs of HCPD on LLaMA-3.1-8b.}
    \label{fig: cross data llama}
    \vspace{-10pt}
\end{figure}

\begin{table}[t]
  \centering
  \caption{Controlled empirical decomposition of HCPD.}
  \label{tab: ablation decomposition}
  \begin{threeparttable}
    \scalebox{0.8}{
      \begin{tabular}{lc}
        \toprule
        Method & AUROC \\
        \midrule
        Self-evaluation \cite{kadavath2022language}  & $56.07_{\pm1.92}$ \\
        HCPD (Ours, Pre-RL) & $66.54_{\pm0.59}$ \\
        HCPD (Ours, Post-RL) & $86.25_{\pm1.08}$ \\
        \bottomrule
      \end{tabular}
    }
  \end{threeparttable}
  \vspace{-15pt}
\end{table}

\textbf{Transferability across target models.}
A key advantage of HCPD is its \emph{model-agnostic} design: \blue{it operates in the natural language space, which is universally shared across LLMs, thereby achieving superior transferability across diverse target models}. To demonstrate the advantages of HCPD, we further compare it with other training-based baselines under a transfer setting where the detector is evaluated on outputs generated by target models from different families and scales. Results in Table \ref{tab: cross model result TriviaQA} show that when applied to unseen target models, competing baselines such as HaloScope and TSV often suffer noticeable performance degradation due to distributional shifts in features of the proxy model. In contrast, HCPD maintains stable and excellent performance across heterogeneous target models, which undoubtedly better meet the needs of practical deployment in open, real-world settings. More results on the other $3$ datasets are shown in Appendix \ref{sec: more transferability across model}.

\begin{figure}[t]
    \centering
    \includegraphics[width=0.7\linewidth]{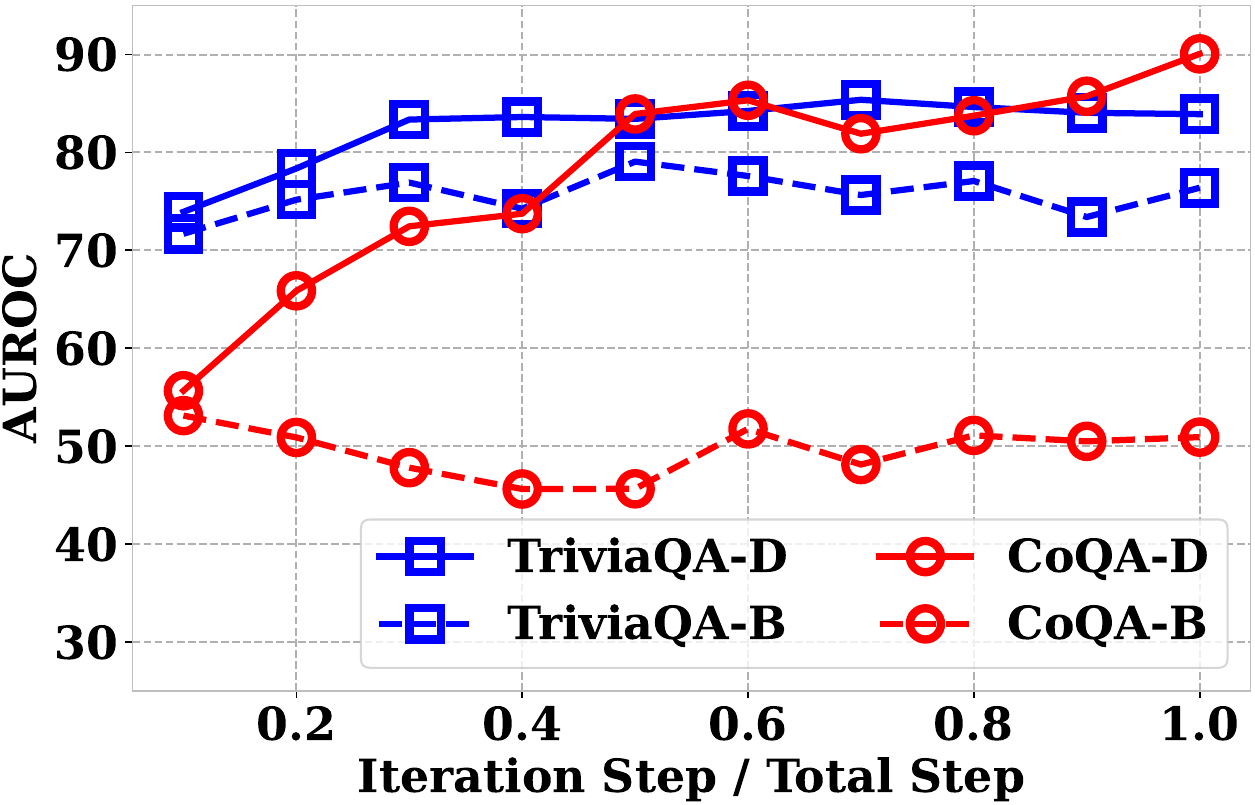}
    \caption{Impact of reward design, where “-D” denotes differentiable scoring reward and “-B” denotes binary scoring reward.}
    \label{fig: ablation reward design}
    \vspace{-10pt}
\end{figure}

\begin{table}[t]
  \centering
  \caption{Impact of sampling size $K$.}
  \label{tab: ablation K}
  \begin{threeparttable}
    \scalebox{0.75}{ 
      \begin{tabular}{lcccc} 
        \toprule
        Dataset & $1$ & $2$ & $5$ & $10$ \\
        \midrule
        TriviaQA & $85.21_{\pm1.22}$ & $85.71_{\pm0.86}$ & $86.25_{\pm1.08}$ & $86.25_{\pm0.98}$ \\
        SciQ & $85.06_{\pm2.33}$ & $85.47_{\pm2.12}$ & $86.04_{\pm2.25}$ & $86.14_{\pm1.89}$ \\
        NQ Open & $86.89_{\pm1.60}$ & $89.26_{\pm2.46}$ & $90.38_{\pm3.58}$ & $90.41_{\pm2.46}$ \\
        CoQA & $88.60_{\pm4.12}$ & $89.37_{\pm3.34}$ & $90.07_{\pm2.58}$ & $89.85_{\pm2.66}$ \\
        \midrule
        Inf. Time (s) & $0.2349$ & $0.4675$ & $1.1313$ & $2.2807$ \\
        \bottomrule
      \end{tabular}
    }
  \end{threeparttable}
  \vspace{-15pt}
\end{table}

\textbf{Transferability across data distributions.}
Furthermore, the proposed Human-like Criteria Probing mechanism enables our method to assess queries with varying distributions and formats in a flexible manner. To evaluate distributional transfer, we test HCPD under cross-dataset settings. As shown in Figure \ref{fig: cross data llama}, HCPD exhibits amazing generalization when transferred to TriviaQA, SciQ, and NQ Open, preserving high performance. We observe a modest degradation on CoQA, which we attribute to substantial differences in the underlying QA formulation and interaction mode. Additional results on Qwen-3-8b are shown in Appendix \ref{sec: more transferability across dataset}.

\subsection{Ablation Studies}
\blue{\textbf{Controlled empirical decomposition.}
To isolate the contributions of our proposed human-like criteria probing mechanism and weakly-supervised alignment training, we conducted a controlled empirical decomposition using standard Self-evaluation \cite{kadavath2022language} as a baseline. As detailed in Table \ref{tab: ablation decomposition}, HCPD outperforms the baselines by Human-like Criteria Probing itself, yielding an AUROC improvement of $10.47\%$. This performance is further augmented by our GRPO alignment framework, which delivers a substantial subsequent AUROC gain of $19.71\%$.}

\textbf{Impact of reward design.}
We discussed the advantages of differentiable scoring over binary scoring in Section \ref{sec: Reward-Based Alignment Training}, and here we further validate this claim empirically. As shown in Figure \ref{fig: ablation reward design}, replacing the graded reward with a binary ``$0/1$'' signal not only downgrades the evaluation to binary classification, but also yields a significant reduction in performance (from $86.25\%$ to $79.06\%$ on TriviaQA and from $90.07\%$ to $51.75\%$ on CoQA). We attribute this degradation to the lack of severity information, which makes the agent more prone to misclassifying samples near the decision threshold.

\textbf{Impact of sampling size.}
According to the Corollary \ref{coll: error decomposition}, the ranking error decreases as the number of samples $K$ increases. The results in Table \ref{tab: ablation K} provide quantitative support for this trend. Larger $K$ yields progressively improved accuracy and inference stability, but meanwhile incurs a linear increase in computational cost. To balance performance and efficiency, we set $K=5$ in experiments. \blue{Additional performance and consumption comparison with baselines are detailed in Appendix \ref{sec: performance consumption comparison}. Notably, HCPD matches the speed of light metrics and outpaces consistency-based methods. Moreover, its model-agnostic design yields greater computational savings when evaluating outputs from large target models (\eg, LLaMA-2-13B, Qwen-2.5-14B, and LLaMA-3.1-70B).}

\begin{table}[t]
  \centering
  \caption{Comparisons with baselines under alternative metrics on TriviaQA  for LLaMA-3.1-8b.}
  \label{tab: ablation metric}
  \begin{threeparttable}
    \scalebox{0.75}{ 
      \begin{tabular}{lcc} 
        \toprule
        \multirow{2}{*}{Method} & \multicolumn{2}{c}{Metric} \\
        ~ & ROUGE & DeepSeek \\
        \midrule
        LN-Entropy \cite{malinin2021uncertainty} & $77.70_{\pm1.39}$ & $52.69_{\pm1.19}$ \\
        CCS \cite{burns2022discovering} & $82.92_{\pm1.64}$ & $52.52_{\pm2.12}$ \\
        SelfCKGPT \cite{manakul2023selfcheckgpt} & $73.92_{\pm1.26}$ & $71.38_{\pm0.90}$ \\
        Perplexity \cite{ren2023outofdistribution} & $85.49_{\pm1.13}$ & $53.82_{\pm2.86}$ \\
        SAPLMA \cite{azaria2023internal} & $87.22_{\pm1.13}$ & $79.99_{\pm0.92}$ \\
        Lexical Similarity \cite{lin2024generating} & $82.40_{\pm0.90}$ & $53.63_{\pm1.69}$ \\
        HaloScope \cite{du2024haloscope} & $72.99_{\pm4.61}$ & $57.98_{\pm4.96}$ \\
        TSV \cite{park2025steer} & $78.46_{\pm6.15}$ & $73.31_{\pm5.63}$ \\
        \textbf{HCPD (Ours)} & $\mathbf{89.17}_{\pm0.42}$ & $\mathbf{80.42}_{\pm0.39}$ \\
        \bottomrule
      \end{tabular}
    }
  \end{threeparttable}
  \vspace{-15pt}
\end{table}

\blue{\textbf{Beyond the BLEURT metric.}
It is notable that BLEURT is an option, not a dependency. We conducte GRPO using alternative signals (\eg, ROUGE \cite{lin2024generating}, and DeepSeek-V3 as a Judge \cite{liu2024deepseek}). Results in Table \ref{tab: ablation metric} confirm that as an evaluation metric provided, HCPD seamlessly aligns its behavior to the reward signal and achieves strong performance. The detailed implementation of DeepSeek-V3 is shown in Appendix \ref{sec: experiment settings details}.}

\section{Conclusion}
In this paper, we propose \emph{Human-like Criteria Probing} (HCP), an interpretable paradigm that achieves hallucination detection under the zero-source setting. We instantiate HCP-Detector as an LLM-based scoring agent that decomposes evaluation into context-relevant criteria, assigns adaptive weights, and produces evidence-grounded analyses along with an overall score. The agent is further aligned via GRPO using a dense score-alignment reward derived from weak proxy supervision, which is based on semantic consistency signals on QA benchmarks. It employs multi-sampling aggregation at inference time to suppress stochastic variance and yield stable predictions. Both theoretical analysis and extensive experiments across diverse datasets and model architectures demonstrate the effectiveness of our HCPD in identifying hallucinated content.

\section*{Acknowledgments}
\blue{This work was partially supported by the Joint Funds of the National Natural Science Foundation of China (Grant No.U24A20327), Key-Area Research and Development Program Guangdong Province 2018B010107001, and TCL Science and Technology Innovation Fund, China. Jiahao Yang is supported by the China Scholarship Council (CSC) under Grant No. 202506150018.}

\section*{Impact Statement}
This work proposes Human-like Criteria Probing for Hallucination Detection (HCPD), a zero-source framework for detecting factual and logical inconsistencies in LLM outputs. By operating solely on the observed query–response pair, HCPD is intended to improve reliability and transparency in practical black-box deployment settings, with particular relevance to safety-critical domains such as healthcare and education. Furthermore, HCPD provides inherent explainability by outputting the specific criteria and weights used for each decision, which can aid developers in model debugging and offer end-users transparent justifications. We believe this research contributes positively to the development of more accountable and transparent AI systems.

\section*{Author Contributions}
Jiahao Yang, Shuhai Zhang, and Hailong Kang contributed equally to this work.
Feng Liu, Qi Chen, and Mingkui Tan are the corresponding authors.
Jiahao Yang and Shuhai Zhang conceived the main idea and designed the proposed method.
Jiahao Yang and Hailong Kang conducted the experiments and performed the main empirical analysis.
Jiahao Yang, Shuhai Zhang, and Hailong Kang contributed to manuscript writing, result organization, and paper revision. 
Feng Liu, Qi Chen, and Mingkui Tan supervised the project, provided guidance on methodology and experiments, and revised the manuscript.

\bibliography{reference}
\bibliographystyle{icml2026}

\newpage
\appendix
\onecolumn
\begin{leftline}
	{
		\LARGE{\textsc{Appendix}}
	}
\end{leftline}

\etocdepthtag.toc{mtappendix}
\etocsettagdepth{mtchapter}{none}
\etocsettagdepth{mtappendix}{subsection}

{
    \hypersetup{linkcolor=black}
        \footnotesize\tableofcontents
}

\newpage

\section{Theoretical Analysis}
\label{sec: theoretical analysis}
\subsection{Setup and Notations}
Let $x = (q, a)$ be an input of hallucination detection under the zero-source constraint, where $a$ can be generated by any source LLM. The scoring agent $f_{\theta}$ produces a structured and interpretable evaluation whose predicted score is an integer random variable $s_p(x) \in \{ 1,\ldots,10 \}$ in the well-formed regime.
During training, each query $q$ has a ground-truth reference answer $\hat{a}$ within dataset, and an auxiliary LLM generates candidate answers $\{ a^{(n)} \}_{n=1}^N$. Weak supervision labels $s^{(n)}_l \in \{ 1,\ldots,10 \}$ are computed from a semantic consistency metric $\operatorname{sim}(\hat{a}, a^{(n)})$ (based on BLEURT \cite{bleurt}). The agent is optimized with GRPO using the score-alignment reward given by Eqn. (\ref{eqn: reward function}).
During inference, we invoke the agent $K$ times and aggregate $\bar{s}$ according to Eqn. (\ref{eqn: average score}).

\subsection{Proof of Theorem \ref{thm: expectation alignment}}
\textbf{Theorem \ref{thm: expectation alignment}}
\textit{(\textbf{Expectation alignment under training})}
\emph{
    Let $x$ denote an input and $s_l(x)$ its corresponding weak label. Consider the $\ell_1$-risk of predicting score $s$:
    \begin{equation*}
        R_x(s) \triangleq |s-s_l (x)| ,
    \end{equation*}
    whose minimizer is $s = s_l(x)$. Let $Y \sim f_{\theta}(\cdot \mid x)$ be a stochastic well-formed generation of the agent, and let $S_{\theta}(x) = s_p(x, Y)$ denote the parsed score. Define the conditional mean score:
    \begin{equation*}
        \mu_{\theta}(x) \triangleq \mathbb{E}\big[S_\theta(x) \mid x\big] .
    \end{equation*}
    Then we have:
    \begin{equation*}
        \mathbb{E}_x\big[|\mu_\theta(x)-s_l(x)|\big] \le \mathcal{J^{\prime}}(\theta) ,
    \end{equation*}
    where the expectation is taken over the training distribution of $x$, and $\mathcal{J^{\prime}}(\theta)$ is affine-equivalent to the GRPO objective $\mathcal{J}(\theta)$ defined in Eqn. (\ref{eqn: grpo objective}).
}

\begin{proof}
    Given a fixed input $x$, since $s_l(x)$ is deterministic, we have
    \begin{equation*}
        |\mu_{\theta}(x) - s_l(x)| = |\mathbb{E}\big[S_{\theta}(x) \mid x\big] - s_l(x)| = |\mathbb{E}\big[S_{\theta}(x) - s_l(x)\mid x\big]| .
    \end{equation*}
    
    By Jensen's inequality \cite{jensen1906fonctions}, 
    \begin{equation*}
        |\mathbb{E}\big[S_{\theta}(x) - s_l(x) \mid x\big]| \le \mathbb{E}\big[|S_{\theta}(x) - s_l(x)| \mid x\big] .
    \end{equation*}

    Taking expectation over the training distribution of $x$ yields
    \begin{equation}
        \label{eqn: inequality 1}
        \mathbb{E}_x\big[|\mu_{\theta}(x) - s_l(x)|\big] \le \mathbb{E}_x \mathbb{E}\big[|S_{\theta}(x) - s_l(x)| \mid x\big] = \mathbb{E}_{x, Y} \big[|s_p(x, Y) - s_l(x)|\big] .
    \end{equation}

    By the score-alignment reward in Eqn. (\ref{eqn: reward function})), minimizing the KL-regularized GRPO objective in Eqn. (\ref{eqn: grpo objective}) is equivalent up to affine constants to minimizing the nonnegative objective
    \begin{equation}
        \label{eqn: inequality 2}
        \begin{aligned}
            \mathcal{J^{\prime}}(\theta) &= \mathbb{E}_x \mathbb{E}_{Y \sim f_{\theta}(\cdot \mid x)}\big[|s_p(x, Y) - s_l(x)|\big] + \lambda \cdot \mathbb{E}_x D_{KL} \big(f_{\theta}(\cdot|x)||f_0(\cdot|x)\big) \\
            &= \mathbb{E}_{x, Y} \big[|s_p(x, Y) - s_l(x)|\big] + \lambda \cdot \mathbb{E}_x D_{KL} \big(f_{\theta}(\cdot|x)||f_0(\cdot|x)\big) .
        \end{aligned}
    \end{equation}
    where $\lambda \propto \beta$. Since the non-negativity of the KL regularizer, we have
    \begin{equation*}
        \mathbb{E}_{x, Y} \big[|s_p(x, Y) - s_l(x)|\big] \le \mathcal{J^{\prime}}(\theta).
    \end{equation*}
    Combining the two inequalities Eqn. (\ref{eqn: inequality 1}) and Eqn. (\ref{eqn: inequality 2}) completes the proof.
\end{proof}

\subsection{Proof of Corollary \ref{coll: error decomposition}}
\textbf{Proposition \ref{prop: hoeffding}}
\textit{(\textbf{Multi-sampling concentration})}
\emph{
    Fix an inference input $x$. Suppose the $K$ well-formed generations $Y_1,\ldots,Y_K \overset{\text{i.i.d.}}{\sim} f_{\theta}(\cdot \mid x)$. For each $k\in\{1,\ldots,K\}$, define the parsed score
    \begin{equation*}
        S_{\theta}^{(k)}(x) \triangleq s_p(x,Y_k) \in \{1,\ldots,10\} ,
    \end{equation*}
    and let $\bar{s}(x)\triangleq \frac{1}{K}\sum_{k=1}^K S_{\theta}^{(k)}(x)$ denote the aggregated score in Eqn. (\ref{eqn: average score}). Then for any bias threshold $u>0$,
    \begin{equation*}
        \mathbb{P}\big(|\bar{s}(x) - \mathbb{E}[S_{\theta}(x) \mid x]| \ge u \mid x \big) \le 2\exp\Big(-\frac{2Ku^2}{(10-1)^2}\Big) .
    \end{equation*}
}

\begin{proof}
    Conditional on the input $x$, the random variables $\{ S_{\theta}^{(k)}(x) \}_{k=1}^K$ are \emph{i.i.d} and each are bounded in $[1,10]$. Let $\mu_{\theta}(x) \triangleq \mathbb{E}\big[S_\theta(x) \mid x\big]$.
    By Hoeffding’s inequality for bounded \emph{i.i.d} variables, we have
    \begin{equation*}
        \mathbb{P}\big(|\bar{s}(x) - \mu_{\theta}(x)| \ge u \mid x \big) \le 2\exp\Big(-\frac{2Ku^2}{(10-1)^2}\Big) .
    \end{equation*}
    which is exactly the claim.
\end{proof}
    
\textbf{Corollary \ref{coll: error decomposition}}
\textit{(\textbf{Ranking error decomposition})}
\emph{
    Let $x$ denote an input and $s_l(x)$ its corresponding weak label. Assume the proxy bias is uniformly bounded:
    \begin{equation*}
        |\epsilon(x)| \le b_{\max},~~~\forall x .
    \end{equation*}
    Let $Y_1,\ldots,Y_K \overset{\text{i.i.d.}}{\sim} f_{\theta}(\cdot \mid x)$ be well-formed generations, and define $S_{\theta}^{(k)}(x)$ and $\bar{s}(x)$ as in Proposition \ref{prop: hoeffding}. Let $x^{+}$ and $x^{-}$ denote independent draws from the distributions of true (non-hallucinated) and hallucinated inputs, respectively. Define the ranking error probability:
    \begin{equation*}
        \mathcal{E}_{\mathrm{rank}} \triangleq \mathbb{P}\big(\bar{s}(x^{+}) \le \bar{s}(x^{-})\big) .
    \end{equation*}
    Then for any $\Delta>b_{\max}$,
    \begin{equation*}
        \mathcal{E}_{\mathrm{rank}} \le \underbrace{\mathbb{P}\big(g(s^{\star}(x^{+}))-g(s^{\star}(x^{-})) \le 2\Delta\big)}_{\text{intrinsic separability}} + \underbrace{\frac{4\,\mathcal{J^{\prime}}(\theta)}{\Delta-b_{\max}}}_{\text{training alignment}} + \underbrace{4\exp \Big(-\frac{2K}{(10-1)^2}\Big(\frac{\Delta-b_{\max}}{2}\Big)^2\Big)}_{\text{multi-sampling concentration}} ,
    \end{equation*}
    where $\mathcal{J^{\prime}}(\theta)$ is affine-equivalent to the GRPO objective $\mathcal{J}(\theta)$ defined in Eqn. (\ref{eqn: grpo objective}).
}

\begin{proof}
    Let $\Delta>b_{\max}$ and define $\delta \triangleq \frac{\Delta - b_{\max}}{2} > 0$.
    Consider the event
    \begin{equation*}
        A \triangleq \{ g(s^{\star}(x^{+}))-g(s^{\star}(x^{-})) \le 2\Delta \} .
    \end{equation*}

    On the complement $A^c$, we have $g(s^{\star}(x^{+}))-g(s^{\star}(x^{-})) \ge 2\Delta$.

    If in addition scores are close to their truthfulness scores $|\bar{s}(x) - g(s^{\star}(x))| \le \Delta$, then
    \begin{equation*}
        \bar{s}(x^{+}) \ge g(s^{\star}(x^{+})) - \Delta \ge g(s^{\star}(x^{-})) + \Delta \ge \bar{s}(x^{-})
    \end{equation*}
    which means order-preserving. Therefore, the ranking error event $\{ \bar{s}(x^{+}) \le \bar{s}(x^{-}) \}$ is contained in
    \begin{equation*}
        A \cup \{|\bar{s}(x^{+}) - g(s^{\star}(x^{+}))| > \Delta\} \cup \{|\bar{s}(x^{-}) - g(s^{\star}(x^{-}))| > \Delta\}
    \end{equation*}

    Taking probability and using the union bound gives
    \begin{equation}
        \label{eqn: corollary eqn 1}
        \mathcal{E}_{\mathrm{rank}} \le \mathbb{P}(A) + \mathbb{P}(|\bar{s}(x^{+}) - g(s^{\star}(x^{+}))| > \Delta) + \mathbb{P}(|\bar{s}(x^{-}) - g(s^{\star}(x^{-}))| > \Delta) .
    \end{equation}

    For a fixed input $x$, the proxy bias satisfies $|\epsilon(x)| \le b_{\max}$, so
    \begin{equation*}
        |\bar{s}(x) - g(s^{\star}(x))| \le |\bar{s}(x) - s_l(x)| + |s_l(x) - g(s^{\star}(x))| \le |\bar{s}(x) - s_l(x)| + b_{\max} .
    \end{equation*}

    Thus $|\bar{s}(x) - g(s^{\star}(x))| > \Delta$ implies $|\bar{s}(x) - s_l(x)| \ge \Delta -b_{\max} = 2\delta$.

    Meanwhile, we have
    \begin{equation*}
        |\bar{s}(x) - s_l(x)| \le |\bar{s}(x) - \mu_{\theta}(x)| + |\mu_{\theta}(x) - s_l(x)| ,
    \end{equation*}

    hence $|\bar{s}(x) - s_l(x)| > 2\delta$ implies
    \begin{equation*}
        |\bar{s}(x) - \mu_{\theta}(x)| > \delta \text{~or~} |\mu_{\theta}(x) - s_l(x)| > \delta, 
    \end{equation*}
    so again by union bound, we have
    \begin{equation}
        \label{eqn: corollary eqn 2}
        \mathbb{P}(|\bar{s}(x) - g(s^{\star}(x))| > \Delta) \le \mathbb{P}(|\bar{s}(x) - \mu_{\theta}(x)| > \delta) + \mathbb{P}(|\mu_{\theta}(x) - s_l(x)| > \delta) .
    \end{equation}

    Conditional on $x$, Proposition \ref{prop: hoeffding} yields
    \begin{equation*}
        \mathbb{P}\big(|\bar{s}(x) - \mu_{\theta}(x)| \ge \delta \mid x \big) \le 2\exp\Big(-\frac{2K\delta^2}{(10-1)^2}\Big) .
    \end{equation*}

    Taking expectation over $x$ preserves the same upper bound:
    \begin{equation}
        \label{eqn: corollary eqn 3}
        \mathbb{P}\big(|\bar{s}(x) - \mu_{\theta}(x)| \ge \delta \big) \le 2\exp\Big(-\frac{2K\delta^2}{(10-1)^2}\Big) .
    \end{equation}

    Moreover, by Markov’s inequality \cite{ross2020introduction},
    \begin{equation*}
        \mathbb{P}(|\mu_{\theta}(x) - s_l(x)| > \delta) \le \frac{\mathbb{E}_x\big[|\mu_{\theta}(x) - s_l(x)|\big]}{\delta} .
    \end{equation*}

    Using Theorem \ref{thm: expectation alignment}, we have $\mathbb{E}_x\big[|\mu_{\theta}(x)-s_l(x)|\big] \le \mathcal{J^{\prime}}(\theta)$, hence
    \begin{equation}
        \label{eqn: corollary eqn 4}
        \mathbb{P}(|\mu_{\theta}(x) - s_l(x)| > \delta) \le \frac{\mathcal{J^{\prime}}(\theta)}{\delta} .
    \end{equation}

    By combining Eqn. (\ref{eqn: corollary eqn 2})-(\ref{eqn: corollary eqn 4}), we have for any $\Delta > b_{\max}$ and $\delta = \frac{\Delta - b_{\max}}{2}$,
    \begin{equation}
        \label{eqn: corollary eqn 5}
        \mathbb{P}(|\bar{s}(x) - g(s^{\star}(x))| > \Delta) \le \frac{\mathcal{J^{\prime}}(\theta)}{\delta} + 2\exp\Big(-\frac{2K\delta^2}{(10-1)^2}\Big) .
    \end{equation}

    Apply Eqn. (\ref{eqn: corollary eqn 5}) to $x^{+}$ and $x^{-}$ in Eqn. (\ref{eqn: corollary eqn 1}), we have
    \begin{equation}
        \begin{aligned}
            \mathcal{E}_{\mathrm{rank}} &\le \mathbb{P}(A) + \mathbb{P}(|\bar{s}(x^{+}) - g(s^{\star}(x^{+}))| > \Delta) + \mathbb{P}(|\bar{s}(x^{-}) - g(s^{\star}(x^{-}))| > \Delta) \\
            &= \mathbb{P}(A) + 2\Big(\frac{\mathcal{J^{\prime}}(\theta)}{\delta} + 2\exp\big(-\frac{2K\delta^2}{(10-1)^2}\big)\Big) \\
            &= \mathbb{P}\big(g(s^{\star}(x^{+}))-g(s^{\star}(x^{-})) \le 2\Delta\big) + \frac{4\,\mathcal{J^{\prime}}(\theta)}{\Delta-b_{\max}} + 4\exp \Big(-\frac{2K}{(10-1)^2}\Big(\frac{\Delta-b_{\max}}{2}\Big)^2\Big) ,
        \end{aligned}
    \end{equation}
    by recalling $\delta = \frac{\Delta - b_{\max}}{2}$, which is exactly the desired decomposition.
\end{proof}

\newpage
\section{More Related Work}
\textbf{Large Language Models (LLMs).}
LLMs have become central to modern natural language processing, demonstrating strong capabilities in reasoning \cite{brown2020language,wang2023can,wang2024rethinking,chen2025core,zhang2025curse}, knowledge grounding \cite{li2020zero,Qian_2024_CVPR,guo2025vtg}, and multi-modal understanding \cite{zhao2024octopus,Yang_2024_CVPR,ji2024jm3d}. Among open-source ecosystems, two influential model families are LLaMA \cite{touvron2023llama1,touvron2023llama2,dubey2024llama,grattafiori2024llama} and Qwen \cite{bai2023qwen,team2024qwen2,qwen2.5,yang2025qwen3}.

The \textbf{LLaMA} series, developed by Meta, has been instrumental in advancing open-weight LLMs. LLaMA \cite{touvron2023llama1} demonstrated that competitive models can be trained on publicly available corpora, incorporating architectural components such as RMSNorm \cite{zhang2019root}, SwiGLU activations \cite{shazeer2020glu}, and Rotary Positional Embeddings (RoPE). LLaMA 2 \cite{touvron2023llama2} scaled pre-training data, introduced RLHF-tuned conversational variants, and adopted a more permissive license. LLaMA 3 \cite{dubey2024llama} further scaled model capacity (up to $70$B parameters) and multilingual data, together with an improved tokenizer and stronger instruction-following performance. The most recent iteration, LLaMA 3.1 \cite{grattafiori2024llama}, extends the context window to $128$k tokens via enhanced RoPE scaling, enlarges the pre-training corpus to over $15$ trillion tokens, and integrates advanced post-training techniques such as large-scale Direct Preference Optimization (DPO) \cite{rafailov2023direct} and emergent tool-use capabilities, achieving state-of-the-art results among open models and approaching parity with leading proprietary systems.

The \textbf{Qwen} series, first released in 2023 \cite{bai2023qwen}, emphasizes scalability and practical deployment, offering models across a wide range of parameter sizes. Qwen-1.5\footnote{\url{https://qwenlm.github.io/blog/qwen1.5/}} and Qwen-2 \cite{team2024qwen2} extend context length to $128$k tokens across the family and incorporate Grouped Query Attention (GQA) \cite{ainslie2023gqa} to improve inference efficiency, leading to substantial performance gains. Qwen2.5 \cite{qwen2.5} further introduces domain-specialized variants for coding and mathematics, forming a flexible generalist–specialist design. The most recent Qwen3 \cite{yang2025qwen3} proposes a Hybrid-Reasoning paradigm that dynamically switches between a deliberate \emph{Thinking} mode for complex reasoning and a lightweight \emph{Non-Thinking} mode for rapid responses, enabling explicit trade-offs between computational cost and accuracy. Its large-scale Mixture-of-Experts (MoE) \cite{shazeer2017outrageously} variants achieve competitive state-of-the-art performance, while smaller dense models retain strong parameter efficiency.

\textbf{Reinforcement Learning.}
Reinforcement learning (RL) has demonstrated effectiveness in improving the reasoning ability of LLMs by aligning their behaviors with human preferences or task-specific objectives \cite{ouyang2022training}.

Similar to conventional machine learning, Supervised Fine-Tuning (SFT) trains LLMs by maximizing the log-likelihood on high-quality, task-specific demonstration data. While SFT effectively transfers knowledge, it is inherently constrained by the quality and coverage of the demonstrations and cannot directly optimize non-differentiable or long-horizon objectives.
Serving as the subsequent step, Proximal Policy Optimization (PPO) \cite{schulman2017proximal} introduces a reward model trained from human preference comparisons and maximizes this reward while constraining policy updates to maintain stability. This PPO-based Reinforcement Learning from Human Feedback (RLHF) \cite{ouyang2022training} enables models like InstructGPT \cite{ouyang2022training} and ChatGPT\footnote{\url{https://openai.com/blog/chatgpt}}, substantially improving instruction-following and helpfulness. However, it is computationally intensive and operationally complex, with the requirement of joint training and balancing of a policy model, a reward model, and a value function.
Direct Preference Optimization (DPO) \cite{rafailov2023direct} emerged as a simplified and more stable alternative, bypassing explicit reward modeling by deriving a closed-form objective that directly optimizes the policy using pairwise preference data. By reformulating the RL objective as a supervised loss, DPO significantly reduces training complexity compared to PPO. Nonetheless, its reliance on the Bradley–Terry \cite{hunter2004mm} preference model makes it less amenable to scenarios involving dense, scalar reward signals, such as the score-alignment supervision adopted in this work.
Group Relative Policy Optimization (GRPO) \cite{shao2024deepseekmath} is a recent advancement designed to improve stability and efficiency in LLM alignment. It operates by sampling a group of candidate outputs for each prompt, assigning rewards for each, and optimizing the policy by increasing the relative likelihood of higher-reward outputs within the group. This group-relative ranking objective provides a stable and efficient learning signal, making GRPO particularly well-suited for weakly supervised, scalar reward settings.

\newpage
\section{More Details for Experiment Settings}
\subsection{More Details on Datasets}
\label{sec: more dataset details}
\textbf{TriviaQA} \cite{triviaqa} is a large-scale reading comprehension benchmark comprising $95,956$ question-answer pairs, each supplemented with an average of approximately $6$ supporting evidence documents, yielding $662,659$ associated documents in total sourced from Wikipedia and the Web. Since questions are authored independently of the subsequent evidence retrieval, the dataset exhibits substantial lexical and syntactic divergence between queries and supporting evidence, and frequently requires multi-sentence reasoning to answer accurately.

\textbf{SciQ} \cite{sciq} is a science question answering benchmark designed to evaluate domain knowledge and reasoning. It comprises $13,679$ manually collected scientific exam questions spanning physics, chemistry, biology, and related subjects, split into $11,679$ training, $1,000$ validation, and $1,000$ test instances. Each question is multiple-choice with $4$ answer options, and most are accompanied by an additional paragraph that supports the correct answer. SciQ also provides a direct-answer variant in which distractors are removed to enable reading-comprehension-style evaluation.

\textbf{NQ Open} \cite{natural-Q} is an open-domain question answering benchmark derived from Natural Questions corpus, comprising anonymized real-world Google queries paired with brief answers, all of which can be answered through content from the English Wikipedia. The dataset contains $91,535$ question–answer pairs, including $87,925$ training instances and $3,610$ validation instances. To ensure concise and standardized responses, examples with answer lengths exceeding five words are discarded, yielding a high-quality benchmark for open-domain QA evaluation.

\textbf{CoQA} \cite{reddy2019coqa} is a large-scale benchmark for conversational question answering, which aims to measure a model’s ability to understand a given passage and answer a series of interdependent questions that appear in a conversation. The dataset contains over $127,000$ questions with answers collected from more than $8,000$ conversations. Each conversation is collected by pairing two crowd workers to interact over a passage through multi-turn question and answer exchanges. CoQA captures challenging phenomena that are not present in standard reading comprehension benchmarks, including coreference and pragmatic reasoning.

\blue{\textbf{Wikipedia}\footnote{\url{https://dumps.wikimedia.org}} is a large-scale multilingual dataset comprising cleaned articles from Wikipedia dumps across $320+$ languages. It includes $61.6$ million+ rows of text, with each entry containing an article’s ID, URL, title, and markdown-stripped content. The dataset supports tasks like text generation and masked-language modeling, with one subset per language and a single training split, making it a foundational resource for multilingual NLP research.}

\subsection{More Details on Implementation}
\label{sec: experiment settings details}
\subsubsection{Implementation Details on Baselines.}
We implement baselines spanning $4$ paradigms, largely following the official implementations when available.

\noindent\textbf{(1) Logit-based methods.}
We implement \textit{Perplexity}~\citep{ren2023outofdistribution} using the public codebase\footnote{\url{https://github.com/D2I-ai/eigenscore}\label{fn:eigenscore}}, which adopts sequence-level perplexity as the detection score; and \textit{LN-Entropy}~\citep{malinin2021uncertainty} based on the same codebase\footref{fn:eigenscore}, which computes entropy with length normalization; and \textit{Semantic Entropy}~\citep{kuhn2023semantic} using the official repository\footnote{\url{https://github.com/jlko/semantic_uncertainty}\label{fn:semantic_uncertainty}}, which clusters semantically equivalent generations before entropy estimation.

\noindent\textbf{(2) Consistency-based methods.}
We implement \textit{Lexical Similarity}~\citep{lin2024generating} following the codebase\footref{fn:eigenscore}, using ROUGE-based similarity to measure agreement across multiple sampled responses; and \textit{SelfCheckGPT}~\citep{manakul2023selfcheckgpt} using the same codebase\footref{fn:eigenscore}, which aggregates multiple similarity metrics (\eg, BERTScore) for self-consistency evaluation; and \textit{EigenScore}~\citep{chen2024inside} following the codebase\footref{fn:eigenscore}, which evaluates semantic consistency via eigenvalue statistics of the response covariance matrix in embedding space.

\noindent\textbf{(3) Verbalized-confidence methods.}
We implement \textit{Self-evaluation}~\citep{kadavath2022language} following the codebase\footref{fn:semantic_uncertainty}, which prompts the model to estimate the probability that its answer is correct.

\noindent\textbf{(4) Internal-state methods.}
We implement \textit{CCS}~\citep{burns2022discovering} using the official codebase\footnote{\url{https://github.com/collin-burns/discovering_latent_knowledge}}, which extracts latent knowledge signals from model activations; and \textit{SAPLMA}~\citep{azaria2023internal} with the provided repository\footnote{\url{https://github.com/ivanrozhd/anlp-project}}, which trains auxiliary classifiers on hidden representations; and \textit{HaloScope}~\citep{du2024haloscope} using the official implementation\footnote{\url{https://github.com/deeplearning-wisc/haloscope}}, which applies SVD to identify hallucination-relevant subspaces; and \textit{TSV}~\citep{park2025steer} following the codebase\footnote{\url{https://github.com/deeplearning-wisc/tsv}}, which learns steering vectors to reshape latent features for improved separability.

\subsubsection{Implementation Details on Our Method.}
In our main experiments, we instantiate the agent $f_{\theta}$ with Qwen-2.5-7b \cite{qwen2.5} to assess the generated answer $a$ conditioned on the query $q$. \blue{To elicit expert-like behavior and implement Human-like Criteria Probing, we follow the Generalist Reward Model design in SPCT \cite{liu2025inference} and} format each input pair $(q, a)$ using the template below:

\begin{figure*}[!h]
    \vspace{-10pt}
    \centering
    \includegraphics[width=0.7\linewidth]{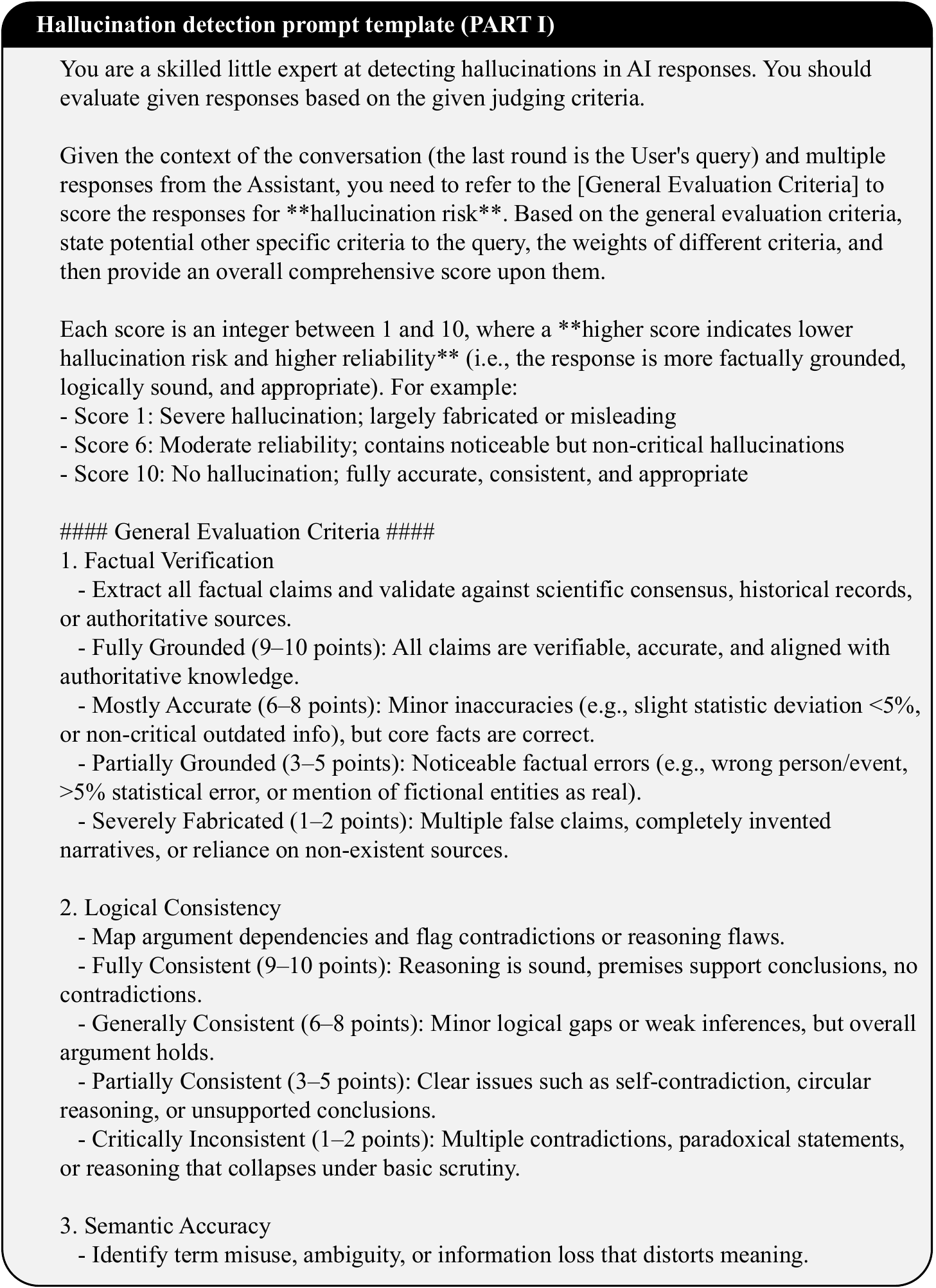}
    \label{fig: agent prompt part1}
    \vspace{-10pt}
\end{figure*}

The prompt specifies a \textit{General Evaluation Criteria} set $\mathcal{C}$ that includes \emph{``Factual Verification''}, \emph{``Logical Consistency''}, \emph{``Semantic Accuracy''}, \emph{``Social Fairness''}, and \emph{``Timeline Verification''}, together with their associated scoring guidelines. It further enforces a strictly structured and interpretable output schema to facilitate downstream aggregation.

Notably, we employ distinct sampling strategies (temperature $T$, nucleus sampling parameter top-$p$, and repetition penalty $RP$) for the training and test phases to serve different objectives.
In particular, we set $T=\text{top-}p=RP=1.0$ to endow $f_{\theta}$ with greater flexibility to explore the evaluation space more broadly and learn robust assessment behaviors; and we adopt a lower-entropy (more deterministic) setting with $T=0.9, \text{top-}p=0.9, RP=1.1$ to obtain higher-confidence and more stable evaluations in inference. 
\begin{figure*}[!h]
    \vspace{-10pt}
    \centering
    \includegraphics[width=0.7\linewidth]{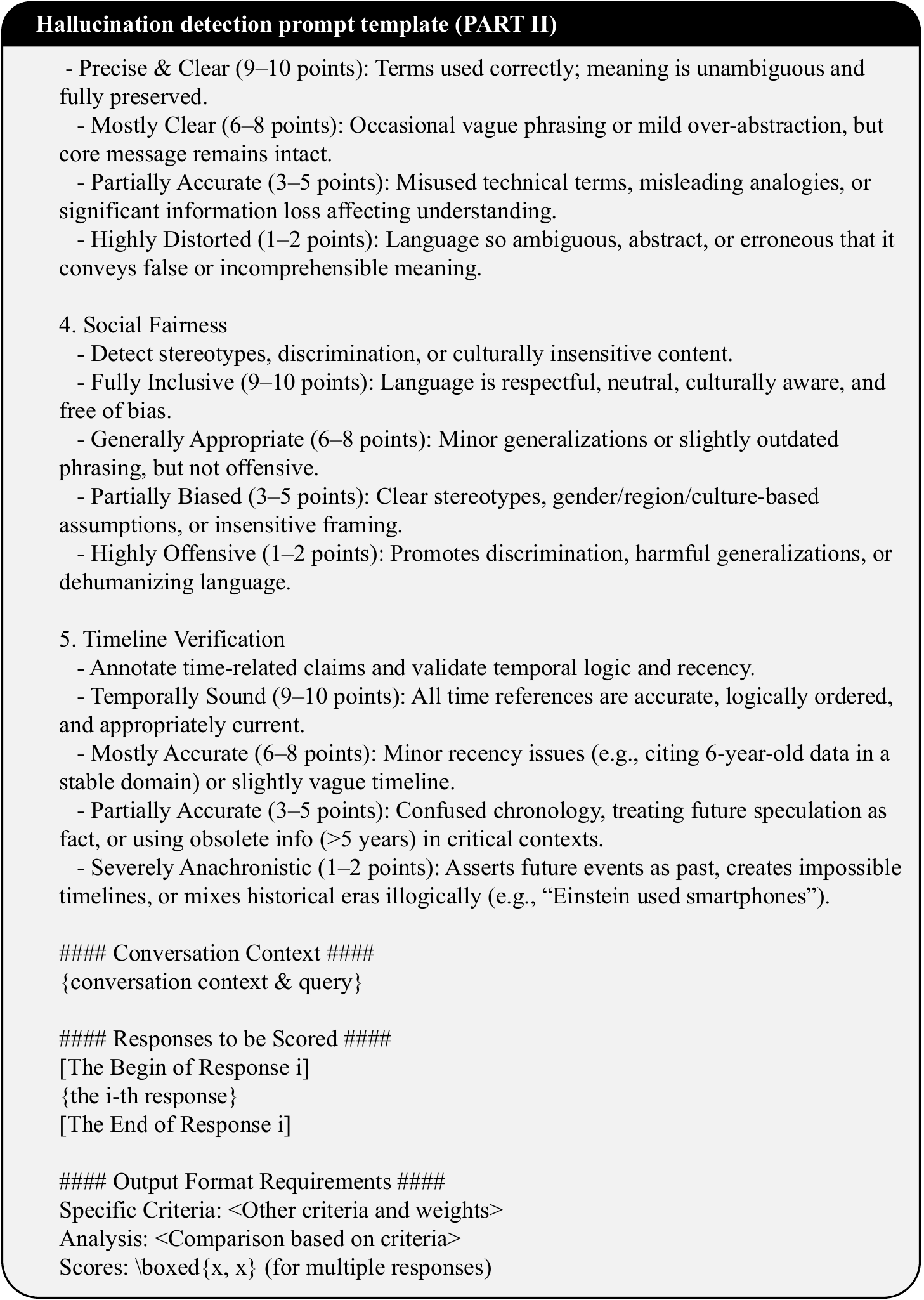}
    \label{fig: agent prompt part2}
    \vspace{-15pt}
\end{figure*}

For reward-based alignment with GRPO \citep{shao2024deepseekmath}, we adopt the Open-R1 implementation\footnote{\url{https://github.com/huggingface/open-r1}} and conduct training on 2 NVIDIA A800 GPUs. We use a per-device batch size of $10$. Through the optimization parameters in RL, the learning rate is set to $lr=2 \times 10^{-4}$ on LLaMA-3.1-8b for all $4$ datasets, while $lr=1 \times 10^{-4}$ on Qwen-3-8b. For the coefficient $\beta$ that controls the strength of $D_{KL}$, we set $\beta=0.05$ on TriviaQA with LLaMA-3.1-8b and Qwen-3-8b as well as SciQ with Qwen-3-8b, while $\beta=0.04$ for all others. All reported results are averaged over $5$ independent random splits.

\newpage
\subsubsection{Training Data Construction and Labeling.}
Following \citet{kuhn2023semantic}, we construct hallucination detection datasets from standard question-answer (QA) benchmarks. Specifically, for traditional QA datasets that do not require additional reference context (\eg, TriviaQA \cite{triviaqa}, SciQ \cite{sciq}, and NQ Open \cite{natural-Q}), we generate candidate answers for each question $q$ using the following prompt template:
\begin{figure*}[h]
    \vspace{-5pt}
    \centering
    \includegraphics[width=0.7\linewidth]{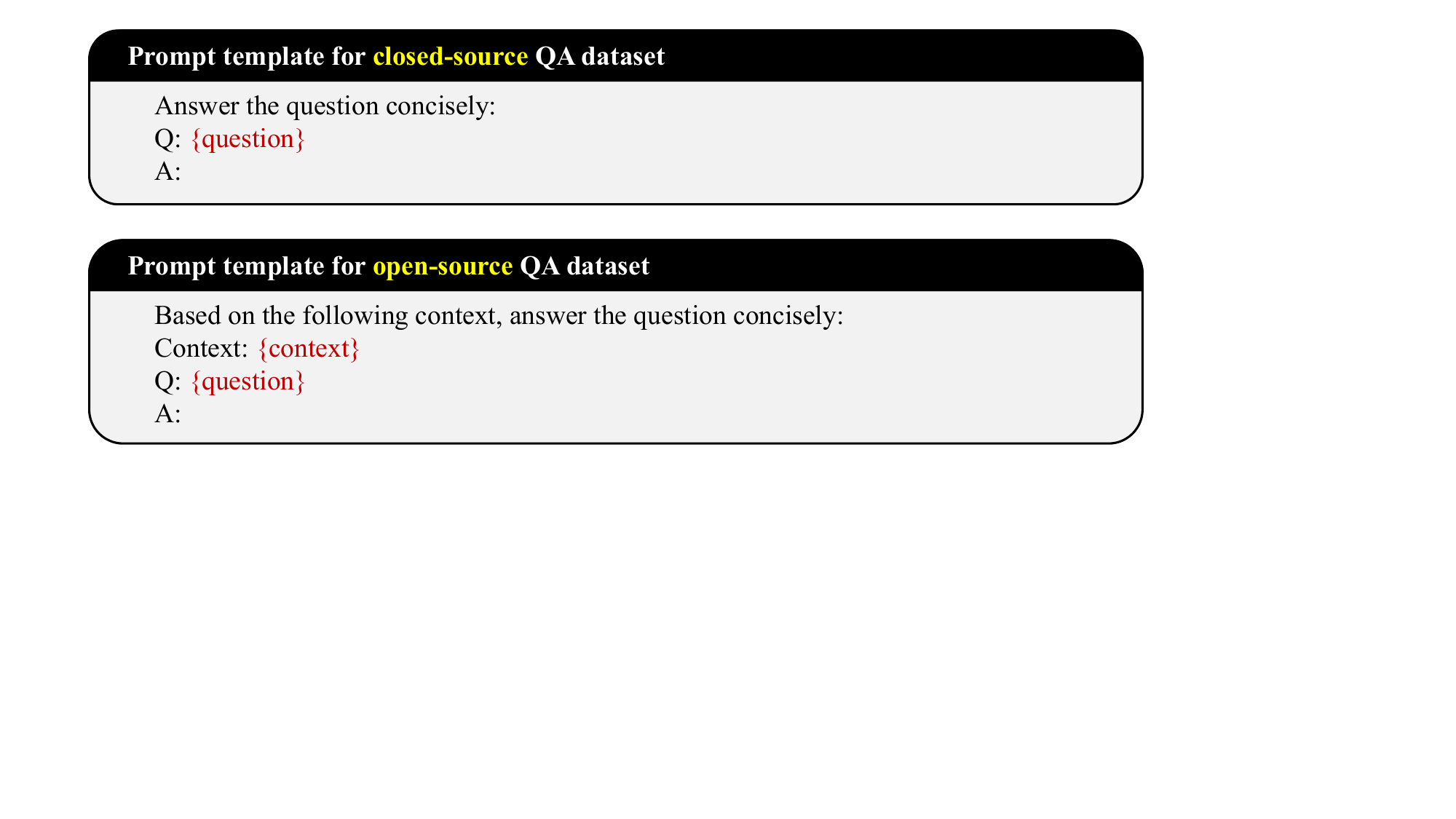}
    \label{fig: prompt tamlate 1}
    \vspace{-15pt}
\end{figure*}

For open-source QA datasets with auxiliary context (\eg, CoQA \cite{reddy2019coqa}), we adopt the following prompt:
\begin{figure*}[h]
    \vspace{-5pt}
    \centering
    \includegraphics[width=0.7\linewidth]{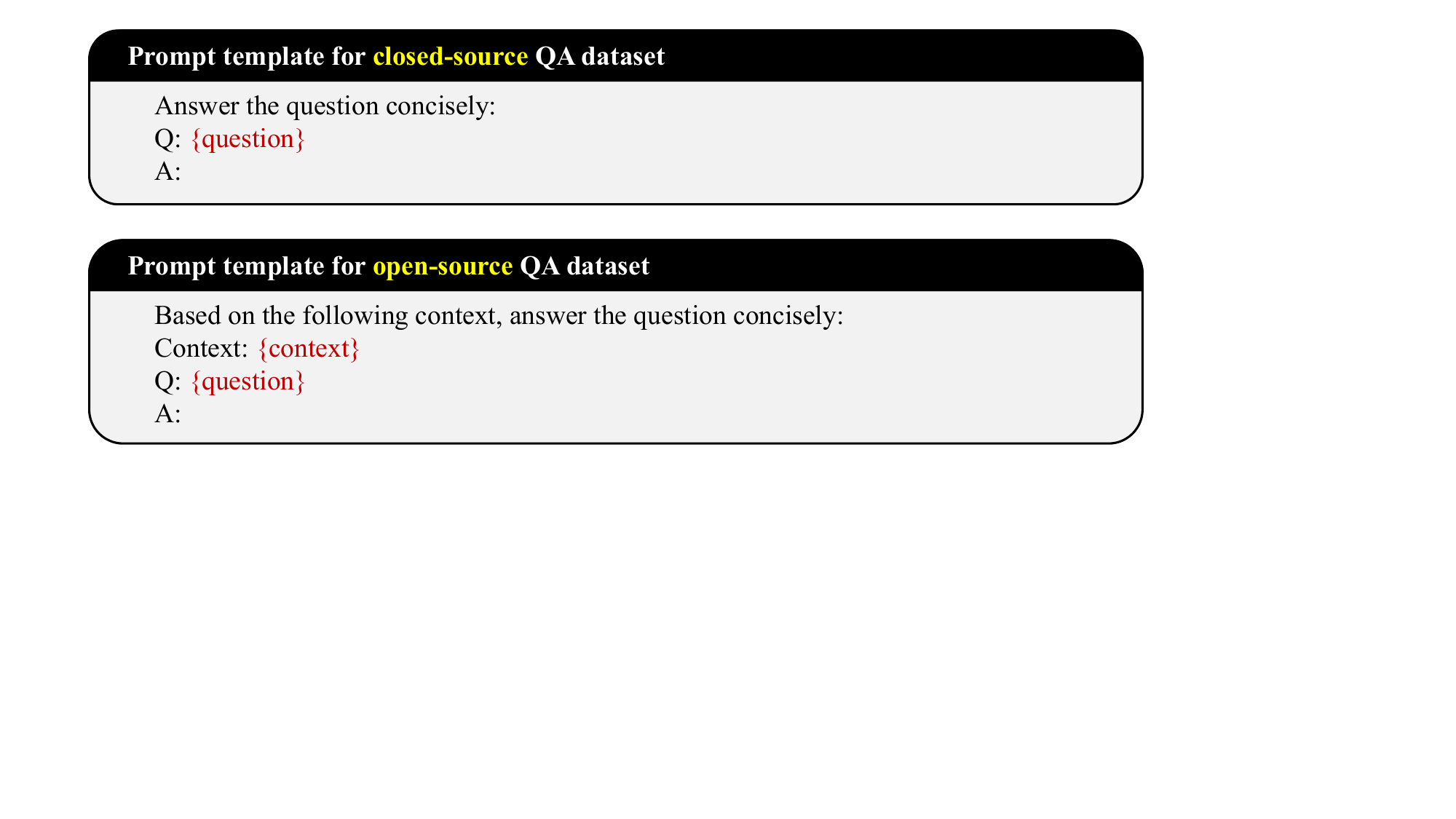}
    \label{fig: prompt tamlate 2}
    \vspace{-15pt}
\end{figure*}

We also employ different sampling strategies for the training and test splits. In the \textbf{training split}, we aim to generate a set of responses $\{ a^{(n)} \}_{n=1}^N$ that spans a broad spectrum from factually correct to clearly hallucinated, providing fine-grained supervision for reinforcement learning. We therefore adopt a higher-entropy sampling strategy with $T=0.5, \text{top-}p=1.0, RP=1.0$, and set $N=9$, yielding $10$ answers per query in total when including the ground-truth reference $\hat{a}$ for training. In the \textbf{test split}, we generate a single response per query using greedy decoding with $5$ beam search, matching the evaluation protocol in \citet{kuhn2023semantic}.
Notably, although this procedure may appear superficially similar to the agent setting, it differs fundamentally in both its target (the generator rather than the evaluation agent) and its purpose (producing diverse candidate answers rather than learning reliable scoring behavior).

For labeling, we assign the reference answer $\hat{a}$ the maximum score $s_l=10$ and adopt BLEURT \cite{bleurt} to score the generated answers. The resulting graded score is derived from Eqn. (\ref{eqn: score label}). For binary evaluation on the test set, we further threshold the BLEURT-based similarity: answers with the similarity above $0.5$ are labeled as \emph{True}, and those below as \emph{Hallucinated}.
It should be emphasized that the reference answer $\hat{a}$ is only used offline to construct labels during training and never accessed at inference; at test time, the detector takes only $(q, a)$ where $a$ can be produced by any LLM.

\blue{In ablation experiments that using alternative signals, we adopt Deepseek-V3 \cite{liu2024deepseek} to evaluate the truthfulness of generated content, following the LLM-as-a-judge \cite{zheng2023judging} paradigm. Specifically, we assess the truthfulness of LLM-generated responses by verifying their semantic equivalence to the provided gold standard answers. The input prompt for this evaluation on QA datasets is as:}

\begin{figure}[!h]
    \vspace{-5pt}
    \centering
    \includegraphics[width=0.7\linewidth]{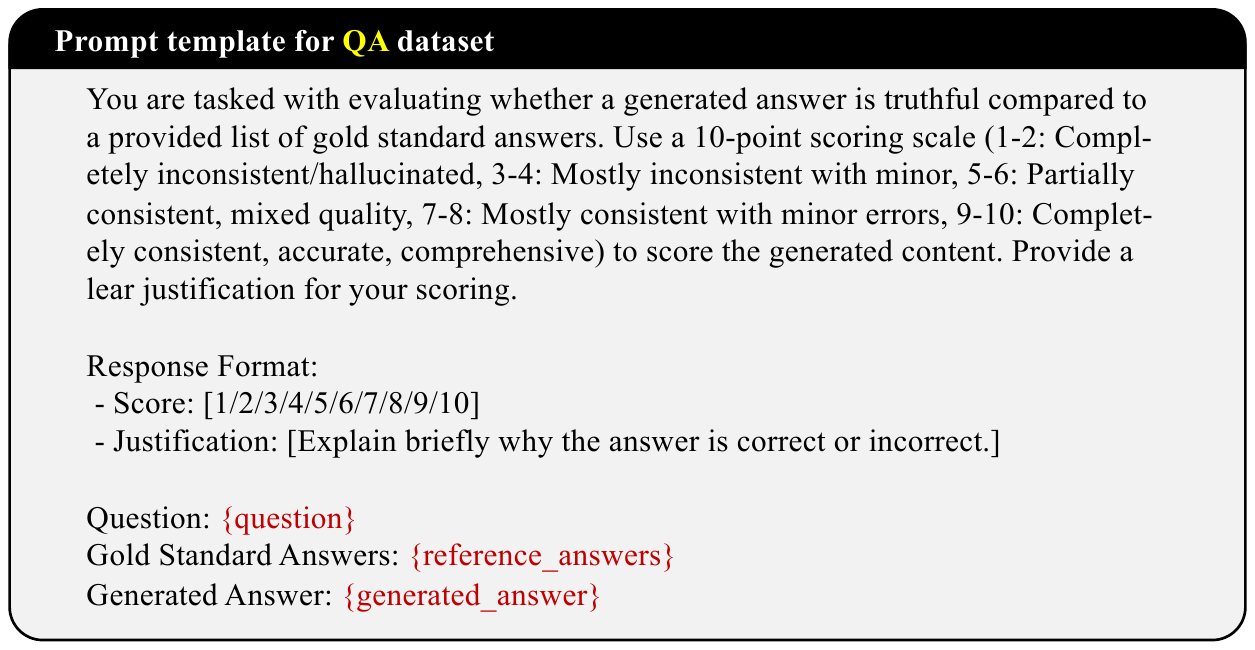}
    \label{fig: deepseek score qa}
    \vspace{-5pt}
\end{figure}

\blue{And the input prompt Wikipedia dataset is as:}
\begin{figure}[h]
    \centering
    \includegraphics[width=0.7\linewidth]{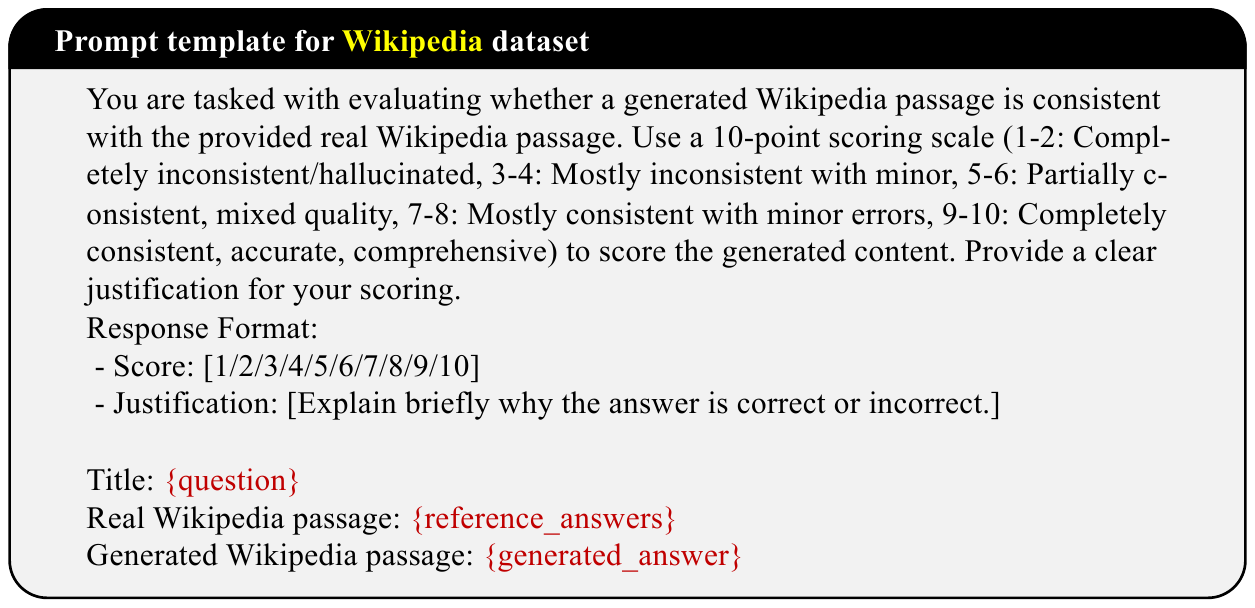}
    \label{fig: deepseek score wiki}
\end{figure}

\subsubsection{Pseudo Code of HCPD}
\begin{figure}[ht]
\begin{minipage}[h]{0.48\linewidth}
    \centering
    \footnotesize
    \begin{algorithm}[H]\small
    \caption{Reward-based Alignment Training of HCPD}
    \label{alg: ACPD_train}
    \begin{algorithmic}
    \STATE \textbf{Input:} Dataset $\mathcal{D}=\{(q_i, \{(a^{(n)}_i, s^{(n)}_i)\})\}$, group size $G$, \\
    general criteria set $\mathcal{C}$, initial agent  $f_{\theta} \gets f_0$, $\eta$;

    \STATE Form detection input $x^{(n)}_i \gets (q_i, a^{(n)}_i)$;

    \FOR{$x^{(n)}_i$ in $\mathcal{D}$}

    \STATE Sample a group of evaluations $\{Y_g\}_{g=1}^{G} \sim f_{\theta}(\cdot \mid x;\mathcal{C})$;

    \FOR{$g = 1,2,\dots,G$}

    \STATE Parse predicted score $s_p^{(g)} \leftarrow s_p(x, Y_g)\in\{1,\ldots,10\}$;

    \STATE Compute reward $r_g$ via Eqn. (\ref{eqn: reward function});

    \STATE Compute advantages $A_g \leftarrow r_g - \frac{1}{G}\sum_{j=1}^{G} r_j$;

    \ENDFOR

    \STATE Compute GRPO objective via Eqn. (\ref{eqn: grpo objective});

    \STATE $\theta \leftarrow \theta + \eta \nabla_{\theta}\mathcal{J}_x(\theta)$.

    \ENDFOR
    
    \STATE \textbf{Output:} $f_\theta^*$
    \end{algorithmic}
    \end{algorithm}
\end{minipage}\hspace{1cm}
\begin{minipage}[h]{0.45\linewidth}
    \centering
    \footnotesize
    \begin{algorithm}[H]\small
    \caption{Detection via Multi-sampling Aggregation}
    \label{alg: ACPD_testing}
    \begin{algorithmic}
    \STATE \textbf{Input:} Scoring agent $f_\theta^*$, general criteria set $\mathcal{C}$, \\
    input $x=(q, a)$, $K$;

    \STATE Sample $K$ evaluations $\{Y_k\}_{k=1}^{K} \sim f_\theta^*(\cdot \mid x;\mathcal{C})$;

    \FOR{$k = 1,2,\dots,K$}

    \STATE Parse score $s_p^{(k)} \leftarrow s_p(x, Y_k)\in\{1,\ldots,10\}$;

    \ENDFOR

    \STATE Aggregate scores via Eqn. (\ref{eqn: average score});
    
    \STATE \textbf{Output:} Truthfulness Score $\bar{s}$
    \end{algorithmic}
    \end{algorithm}
\end{minipage}
\end{figure}

\newpage
\section{More Experiment Results}
\subsection{More Results on Transferability across Target Models}
\label{sec: more transferability across model}
To further demonstrate the advantages of HCPD's \textit{model-agnostic} design, we provide cross-target evaluation on the remaining $3$ datasets. Results in Table \ref{tab: cross model result SciQ}-\ref{tab: cross model result CoQA} show that our method maintains consistently strong performance across heterogeneous target models. Notably, when faced with earlier-generation models (\eg, LLaMA-2 and Qwen-2.5), detection performance further improves, likely because such models produce less fluent and less realistic outputs, making hallucinations easier to distinguish. Overall, these results indicate that HCPD is undoubtedly well-suited for detecting hallucinations from complex input sources, which is common in real-world deployment.

\begin{table*}[h]
  \centering
  \caption{Comparisons with training-based baselines across target models on SciQ in terms of AUROC (\%).}
  \scalebox{0.7}{
  \begin{tabular}{l|l|cccccc}
    \toprule
    \multirow{2}{*}{Source Model} & \multirow{2}{*}{Method} & \multicolumn{6}{c}{Target Model} \\
    ~ & ~ & LLaMA-3.1-8b &  LLaMA-2-7b & LLaMA-2-13b & Qwen-3-8b & Qwen-2.5-7b & Qwen-2.5-14b  \\
    \midrule
    \multirow{4}{*}{LLaMA-3.1-8b}
    & SAPLMA \cite{azaria2023internal} & $85.63_{\pm0.96}$ & $79.77_{\pm2.45}$ & $70.96_{\pm2.64}$ & $79.05_{\pm2.61}$ & $74.65_{\pm1.88}$ & $63.16_{\pm1.83}$ \\
    & HaloScope \cite{du2024haloscope} & $69.04_{\pm6.36}$ & $75.64_{\pm9.33}$ & $78.95_{\pm8.82}$ & $62.87_{\pm10.34}$ & $66.15_{\pm11.19}$ & $61.65_{\pm8.32}$ \\
    & TSV \cite{park2025steer} & $80.01_{\pm1.17}$ & $68.77_{\pm10.12}$ & $61.86_{\pm7.32}$ & $70.46_{\pm9.86}$ & $64.56_{\pm7.16}$ & $56.10_{\pm2.27}$ \\
    & \textbf{HCPD (Ours)}  & $\mathbf{86.04}_{\pm2.25}$ & $\mathbf{92.47}_{\pm1.44}$ & $\mathbf{95.48}_{\pm1.26}$ & $\mathbf{91.89}_{\pm1.07}$ & $\mathbf{89.92}_{\pm2.08}$ & $\mathbf{78.20}_{\pm7.42}$ \\
    \midrule
    \multirow{4}{*}{Qwen-3-8b}
    & SAPLMA \cite{azaria2023internal} & $85.21_{\pm1.71}$ & $84.11_{\pm0.93}$ & $79.88_{\pm1.65}$ & $86.63_{\pm1.53}$ & $77.08_{\pm1.66}$ & $68.39_{\pm3.73}$ \\
    & HaloScope \cite{du2024haloscope} & $53.60_{\pm4.11}$ & $58.31_{\pm8.59}$ & $57.42_{\pm7.30}$ & $74.98_{\pm4.19}$ & $65.18_{\pm10.56}$ & $59.16_{\pm7.85}$ \\
    & TSV \cite{park2025steer} & $63.67_{\pm3.91}$ & $57.88_{\pm6.40}$ & $56.89_{\pm5.85}$ & $78.77_{\pm0.94}$ & $60.67_{\pm2.00}$ & $59.06_{\pm3.88}$ \\
    & \textbf{HCPD (Ours)}  & $\mathbf{85.39}_{\pm2.70}$ & $\mathbf{90.51}_{\pm1.28}$ & $\mathbf{96.63}_{\pm0.86}$ & $\mathbf{92.63}_{\pm2.90}$ & $\mathbf{92.57}_{\pm3.14}$ & $\mathbf{84.21}_{\pm4.75}$ \\
    \bottomrule
  \end{tabular}}
  \label{tab: cross model result SciQ}
\end{table*}

\begin{table*}[h]
  \centering
  \caption{Comparisons with training-based baselines across target models on NQ Open in terms of AUROC (\%).}
  \scalebox{0.7}{
  \begin{tabular}{l|l|cccccc}
    \toprule
    \multirow{2}{*}{Source Model} & \multirow{2}{*}{Method} & \multicolumn{6}{c}{Target Model} \\
    ~ & ~ & LLaMA-3.1-8b &  LLaMA-2-7b & LLaMA-2-13b & Qwen-3-8b & Qwen-2.5-7b & Qwen-2.5-14b  \\
    \midrule
    \multirow{4}{*}{LLaMA-3.1-8b}
    & SAPLMA \cite{azaria2023internal} & $76.23_{\pm0.82}$ & $66.77_{\pm1.43}$ & $65.55_{\pm2.04}$ & $68.27_{\pm2.34}$ & $63.18_{\pm1.53}$ & $63.82_{\pm1.43}$ \\
    & HaloScope \cite{du2024haloscope} & $63.38_{\pm3.02}$ & $74.35_{\pm9.93}$ & $62.58_{\pm7.66}$ & $63.68_{\pm11.04}$ & $59.10_{\pm7.32}$ & $60.66_{\pm4.97}$ \\
    & TSV \cite{park2025steer} & $70.17_{\pm1.47}$ & $58.42_{\pm3.87}$ & $58.20_{\pm3.31}$ & $61.00_{\pm1.57}$ & $60.43_{\pm3.68}$ & $57.24_{\pm2.98}$ \\
    & \textbf{HCPD (Ours)}  & $\mathbf{90.38}_{\pm3.58}$ & $\mathbf{92.94}_{\pm0.75}$ & $\mathbf{90.42}_{\pm1.92}$ & $\mathbf{92.45}_{\pm1.77}$ & $\mathbf{85.88}_{\pm6.03}$ & $\mathbf{78.62}_{\pm6.69}$ \\
    \midrule
    \multirow{4}{*}{Qwen-3-8b}
    & SAPLMA \cite{azaria2023internal} & $70.55_{\pm3.11}$ & $66.85_{\pm0.67}$ & $61.85_{\pm1.50}$ & $72.86_{\pm1.20}$ & $68.52_{\pm1.95}$ & $\mathbf{68.87}_{\pm2.20}$ \\
    & HaloScope \cite{du2024haloscope} & $62.83_{\pm9.95}$ & $64.84_{\pm12.85}$ & $55.49_{\pm4.94}$ & $57.25_{\pm1.50}$ & $62.00_{\pm7.79}$ & $58.60_{\pm6.65}$ \\
    & TSV \cite{park2025steer} & $53.94_{\pm4.57}$ & $57.30_{\pm3.92}$ & $56.10_{\pm3.84}$ & $61.38_{\pm3.43}$ & $57.91_{\pm3.73}$ & $55.56_{\pm2.53}$ \\
    & \textbf{HCPD (Ours)}  & $\mathbf{79.42}_{\pm3.52}$ & $\mathbf{91.60}_{\pm1.41}$ & $\mathbf{93.04}_{\pm0.98}$ & $\mathbf{87.35}_{\pm6.22}$ & $\mathbf{85.70}_{\pm1.58}$ & $60.09_{\pm5.91}$ \\
    \bottomrule
  \end{tabular}}
  \label{tab: cross model result NQ Open}
\end{table*}

\begin{table*}[h]
  \centering
  \caption{Comparisons with training-based baselines across target models on CoQA in terms of AUROC (\%).}
  \scalebox{0.7}{
  \begin{tabular}{l|l|cccccc}
    \toprule
    \multirow{2}{*}{Source Model} & \multirow{2}{*}{Method} & \multicolumn{6}{c}{Target Model} \\
    ~ & ~ & LLaMA-3.1-8b &  LLaMA-2-7b & LLaMA-2-13b & Qwen-3-8b & Qwen-2.5-7b & Qwen-2.5-14b  \\
    \midrule
    \multirow{4}{*}{LLaMA-3.1-8b}
    & SAPLMA \cite{azaria2023internal} & $71.58_{\pm1.35}$ & $76.81_{\pm1.18}$ & $72.62_{\pm0.62}$ & $69.07_{\pm3.24}$ & $68.11_{\pm1.32}$ & $59.84_{\pm1.42}$ \\
    & HaloScope \cite{du2024haloscope} & $72.11_{\pm4.96}$ & $58.10_{\pm6.07}$ & $62.53_{\pm7.26}$ & $57.75_{\pm2.67}$ & $67.05_{\pm6.23}$ & $53.53_{\pm2.05}$ \\
    & TSV \cite{park2025steer} & $69.31_{\pm6.75}$ & $53.59_{\pm3.55}$ & $56.85_{\pm3.20}$ & $56.82_{\pm4.28}$ & $59.10_{\pm3.59}$ & $52.91_{\pm1.90}$ \\
    & \textbf{HCPD (Ours)} & $\mathbf{90.07}_{\pm2.58}$ & $\mathbf{89.12}_{\pm1.24}$ & $\mathbf{89.38}_{\pm2.02}$ & $\mathbf{76.43}_{\pm3.03}$ & $\mathbf{84.19}_{\pm3.39}$ & $\mathbf{60.76}_{\pm6.24}$ \\
    \midrule
    \multirow{4}{*}{Qwen-3-8b}
    & SAPLMA \cite{azaria2023internal} & $69.63_{\pm1.55}$ & $81.52_{\pm1.13}$ & $77.99_{\pm1.40}$ & $80.28_{\pm1.40}$ & $72.91_{\pm2.42}$ & $73.00_{\pm1.40}$ \\
    & HaloScope \cite{du2024haloscope} & $74.46_{\pm8.04}$ & $65.96_{\pm12.78}$ & $69.45_{\pm7.78}$ & $62.18_{\pm4.49}$ & $64.07_{\pm10.56}$ & $54.10_{\pm2.21}$ \\
    & TSV \cite{park2025steer} & $61.22_{\pm10.15}$ & $58.82_{\pm7.59}$ & $59.81_{\pm7.30}$ & $68.40_{\pm4.92}$ & $58.63_{\pm7.01}$ & $56.79_{\pm4.45}$ \\
    & \textbf{HCPD (Ours)} & $\mathbf{88.43}_{\pm3.50}$ & $\mathbf{86.12}_{\pm0.83}$ & $\mathbf{88.09}_{\pm1.61}$ & $\mathbf{84.80}_{\pm1.01}$ & $\mathbf{83.14}_{\pm0.52}$ & $\mathbf{73.26}_{\pm1.25}$ \\
    \bottomrule
  \end{tabular}}
  \label{tab: cross model result CoQA}
\end{table*}

\newpage
\subsection{More Results on Transferability across Data Distributions}
\label{sec: more transferability across dataset}
We also evaluate cross-dataset evaluation transfer on Qwen-3-8b. As shown in Figure \ref{fig: cross data qwen}, HCPD exhibits strong generalization under distribution shift, similar to that on LLaMA-3.1-8b. This indicates that by adaptively proposing criteria that fit the problem domain and format, HCPD is widely applicable across diverse hallucination detection scenarios.
Notably, the agent trained on CoQA even outperforms its in-domain counterpart when transferred to the other three datasets. We hypothesize that CoQA’s contextual setting provides richer supervision, which may improve knowledge coverage and contextual reasoning, thereby enhancing the agent’s discriminative capability.

\begin{figure}[h]
    \centering
    \includegraphics[width=0.34\linewidth]{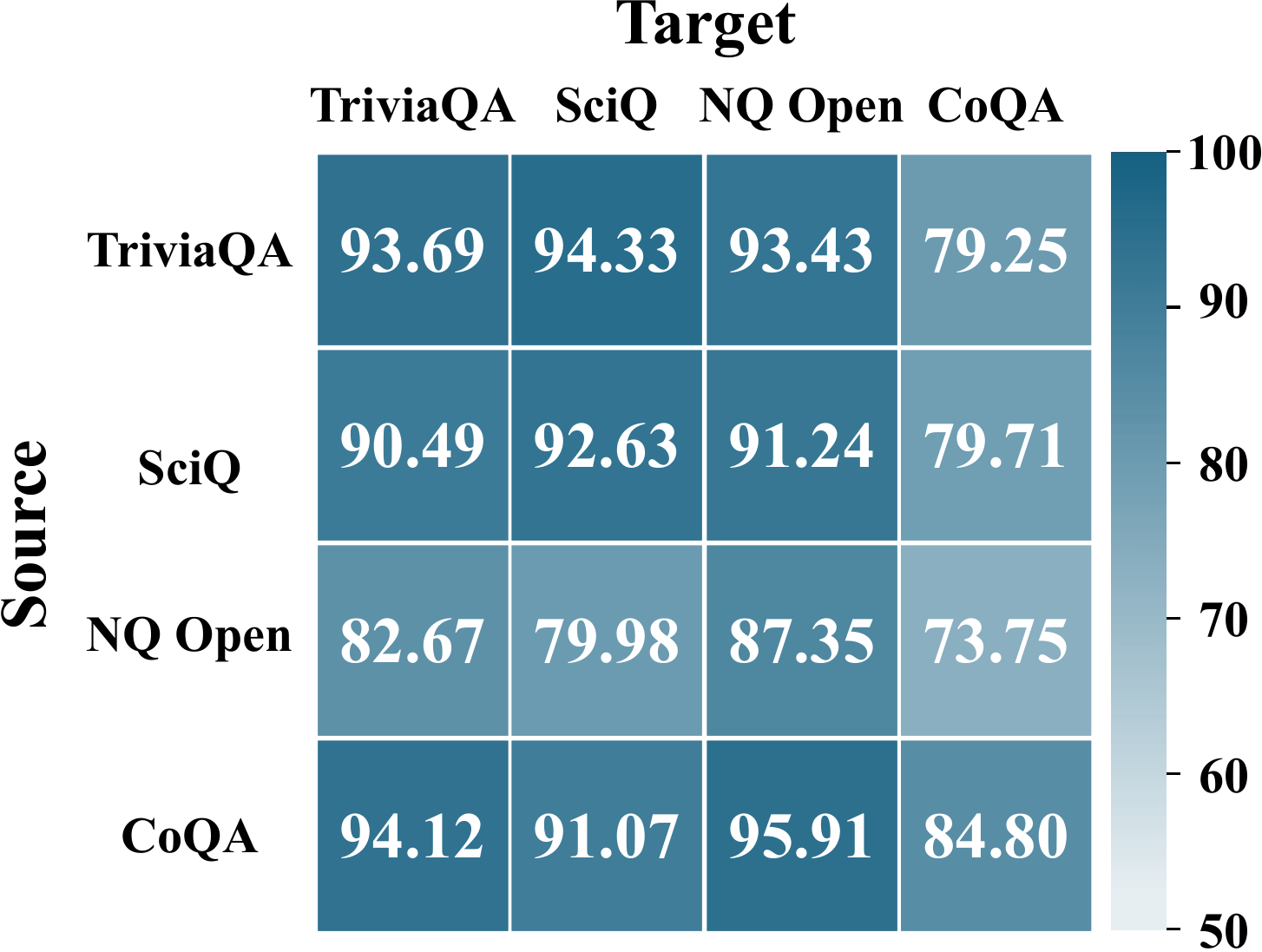}
    \caption{Cross-dataset AUROCs of HCPD on Qwen-3-8b.}
    \label{fig: cross data qwen}
    \vspace{-10pt}
\end{figure}

\subsection{Impact of Generation Strategy}
As mentioned above, we use different sampling strategies for the training and test phases to serve different objectives. To seek the most effective strategies for inference-time generation, we evaluate $5$ sampling strategies with an increasing degree of freedom. Intuitively, higher freedom degree can broaden exploration across diverse perspectives, but it also reduces controllability and may yield unstable or less reliable evaluations (\eg, $T=1.3, \text{top-}p=0.98, RP=1.0$); conversely, overly deterministic sampling can constrain the assessment, potentially biasing it toward a single criterion or reasoning pattern (\eg, $T=0.5, \text{top-}p=0.8, RP=1.3$). Results in Table \ref{tab: ablation strategy} indicate that HCPD can maintain excellent reasoning ability across strategies with diverse degree of freedom, which further supports the effectiveness of our alignment procedure.

\begin{table}[h]
  \centering
  \caption{Impact of generation strategy.}
  \label{tab: ablation strategy}
  \begin{threeparttable}
    \scalebox{0.8}{ 
      \begin{tabular}{l|ccccc} 
        \toprule
        \multirow{2}{*}{Dataset} & \multicolumn{5}{c}{Strategy ($T~|~\text{top-}p~|~RP$)} \\
        ~ & $0.5~|~0.8~|~1.3$ & $0.7~|~0.85~|~1.2$ & $0.9~|~0.9~|~1.1$ & $1.1~|~0.95~|~1.05$ & $1.3~|~0.98~|~1.0$ \\
        \midrule
        TriviaQA & $85.51_{\pm1.38}$ & $85.90_{\pm1.44}$ & $\mathbf{86.25}_{\pm1.08}$ & $86.03_{\pm1.36}$ & $82.80_{\pm2.01}$ \\
        SciQ & $85.59_{\pm0.44}$ & $\mathbf{86.62}_{\pm2.03}$ & $86.04_{\pm2.25}$ & $85.94_{\pm1.95}$ & $85.36_{\pm2.48}$ \\
        NQ Open & $88.94_{\pm2.23}$ & $90.09_{\pm2.13}$ & $\mathbf{90.38}_{\pm3.58}$ & $90.08_{\pm2.17}$ & $87.30_{\pm3.00}$ \\
        CoQA & $89.44_{\pm2.98}$ & $89.83_{\pm2.76}$ & $\mathbf{90.07}_{\pm2.58}$ & $89.85_{\pm2.67}$ & $88.14_{\pm4.04}$ \\
        \bottomrule
      \end{tabular}
    }
  \end{threeparttable}
\end{table}

\subsection{Performance and Consumption Comparison}
\label{sec: performance consumption comparison}
\blue{In light of the practical deployment of HCPD, we analyze its computational overhead from $2$ perspectives:
\textit{i) Inference time:} 
As shown in Table \ref{tab: ablation K}, the multi-sampling aggregation yields substantial performance gains at the expense of additional computation. However, $K=5$ is not strictly necessary, since HCPD already outperforms all baselines at $K=1$, reducing latency to $0.2349$s per sample;
\textit{ii) Inference VRAM:}
Since most methods require LLM to extract internal states or probabilities, HCPD does not consume additional VRAM during inference. Moreover, its fixed $7$B footprint is more efficient when evaluating large-scale LLMs.
A detailed performance–consumption comparison is provided in Table \ref{tab: ablation performance consumption}, demonstrating that HCPD attains efficiency comparable to lightweight metrics while surpassing consistency-based methods.}

\subsection{Generalization to Long-form Generation Task}
\blue{Beyond factual hallucinations in short-form QA settings, we further evaluate the detection of faithful hallucinations in a Wikipedia article continuation task. To assess the generalization capability of HCPD in long-form generation, we directly apply the agent trained on TriviaQA to a Wikipedia continuation dataset. Notably, we employ DeepSeek-V3 as the evaluation metric \cite{liu2024deepseek}, since most continuation passages exceed the $512$-token limit of BLEURT. Detailed descriptions of the dataset and prompting strategy are provided in Appendix \ref{sec: more dataset details} and \ref{sec: experiment settings details}, respectively. As shown in Table \ref{tab: ablation transfer Wikipedia}, As shown in Table \ref{tab: ablation transfer Wikipedia}, HCPD consistently outperforms existing baselines, demonstrating robust scalability to long-form generation tasks.}

\begin{table*}[!h]
  \centering
  \caption{Comparisons with baselines in terms of performance, inference time and GPU memory.}
  \label{tab: ablation performance consumption}
  \begin{threeparttable}
    \scalebox{0.8}{
      \begin{tabular}{lccc}
        \toprule
        Method & AUROC (\%) $\uparrow$ & Inf. Time (s) $\downarrow$ & GPU Mem. (MiB) $\downarrow$ \\
        \midrule
        LN-Entropy \cite{malinin2021uncertainty} & $73.62_{\pm2.20}$ & $0.1383$ & $17185$ \\
        CCS \cite{burns2022discovering} & $78.20_{\pm1.89}$ & $1.1600$ & $31498$ \\
        SelfCKGPT \cite{manakul2023selfcheckgpt}  & $74.58_{\pm1.90}$ & $6.5310$ & $17540$ \\
        Perplexity \cite{ren2023outofdistribution} & $80.62_{\pm2.62}$ & $0.1349$ & $17185$ \\
        SAPLMA \cite{azaria2023internal} & $78.51_{\pm3.16}$ & $0.5341$ & $16019$ \\
        Semantic Entropy \cite{kuhn2023semantic} & $78.71_{\pm3.09}$ & $14.3026$ & $17185$ \\
        Lexical Similarity \cite{lin2024generating} & $77.96_{\pm2.03}$ & $36.5646$ & $17185$ \\
        EigenScore \cite{chen2024inside} & $51.35_{\pm1.23}$ & $37.7550$ & $17453$ \\
        HaloScope \cite{du2024haloscope} & $58.19_{\pm5.79}$ & $0.6012$ & $16757$ \\
        TSV \cite{park2025steer} & $79.78_{\pm3.36}$ & $0.0934$ & $23592$ \\
        HCPD (Ours, $K=1$) & $85.21_{\pm1.22}$ & $0.2349$ & $18513$ \\
        HCPD (Ours, $K=5$) & $86.25_{\pm1.08}$ & $1.1313$ & $19051$ \\
        \bottomrule
      \end{tabular}
    }
  \end{threeparttable}
  \vspace{-10pt}
\end{table*}

\begin{table*}[!h]
  \centering
  \caption{Cross-dataset transfer performance on Wikipedia, using TriviaQA as the source dataset for training-based baselines.}
  \label{tab: ablation transfer Wikipedia}
  \begin{threeparttable}
    \scalebox{0.8}{
      \begin{tabular}{lccc}
        \toprule
        Method & AUROC \\
        \midrule
        LN-Entropy \cite{malinin2021uncertainty} & $73.02_{\pm1.76}$ \\
        CCS \cite{burns2022discovering} & $70.20_{\pm2.28}$ \\
        SelfCKGPT \cite{manakul2023selfcheckgpt}  & $72.37_{\pm2.03}$ \\
        Perplexity \cite{ren2023outofdistribution} & $74.19_{\pm2.07}$ \\
        HaloScope \cite{du2024haloscope} & $71.74_{\pm1.30}$ \\
        TSV \cite{park2025steer} & $65.07_{\pm1.27}$ \\
        \textbf{HCPD (Ours)} & $\mathbf{74.36}±_{\pm1.37}$ \\
        \bottomrule
      \end{tabular}
    }
  \end{threeparttable}
  \vspace{-10pt}
\end{table*}

\section{Future Directions}
While HCPD provides an effective zero‑source hallucination detector, several promising extensions warrant further investigation. Future work includes leveraging HCPD as a reward model within LLM training to explicitly suppress hallucinations, extending the framework to multimodal generation (\eg, image‑text or video‑text) by incorporating modality‑aware evaluation criteria, and generalizing human-like criteria probing mechanism to broader evaluation tasks such as safety, helpfulness, or style compliance. Furthermore, scaling HCPD to long‑form generation, multi‑turn dialogue, and tool‑augmented systems will likely require hierarchical or claim‑level scoring, as well as dialogue‑aware consistency modeling. Collectively, these directions illustrate a principled path toward establishing HCP as a general, interpretable evaluation paradigm for increasingly complex and multimodal generation settings.

\newpage
\section{Examples}
\begin{figure*}[!h]
    \begin{center}
    \includegraphics[width=0.46\textwidth]{ 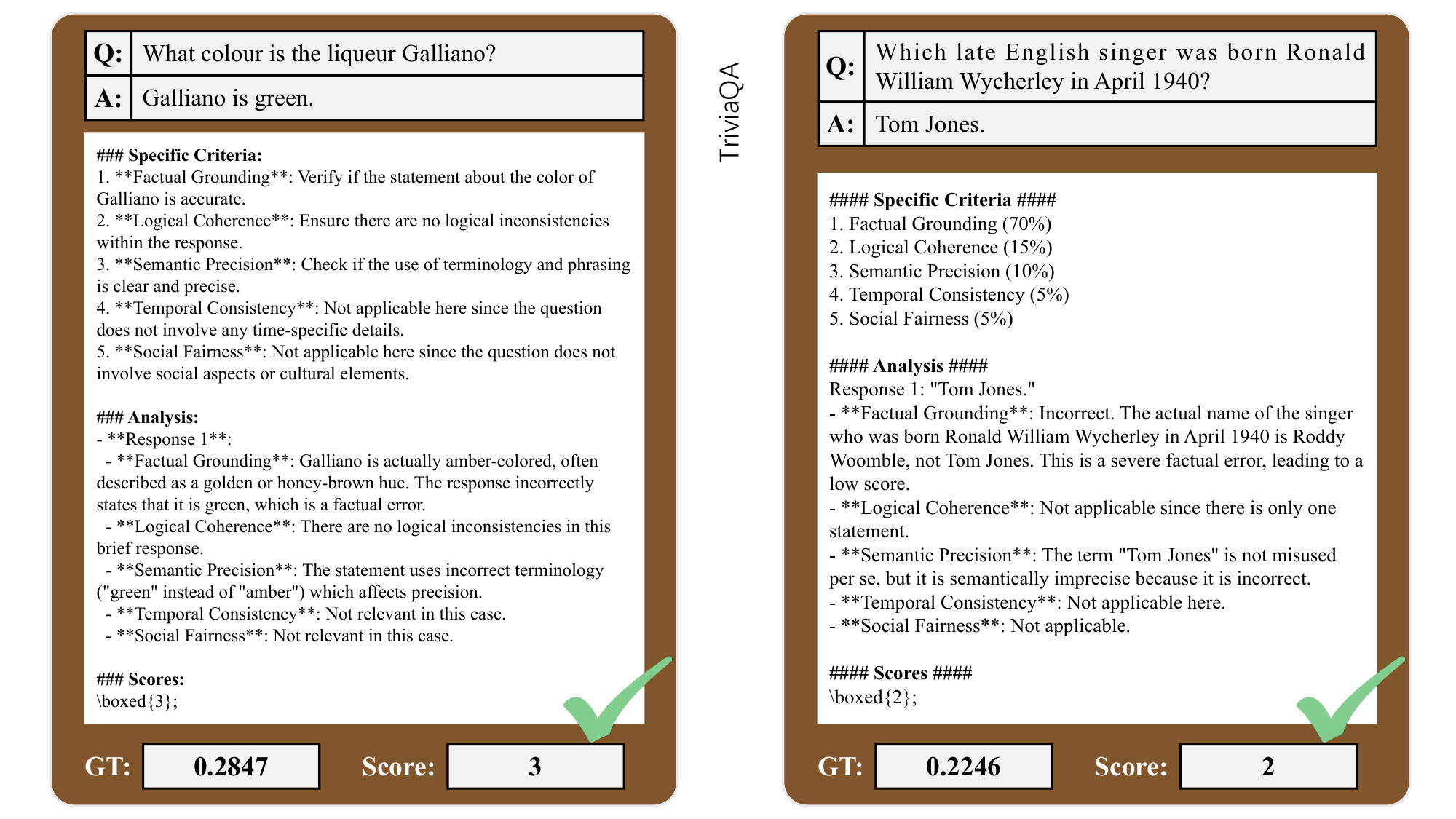}
    \hspace{5pt}
    \includegraphics[width=0.46\textwidth]{ 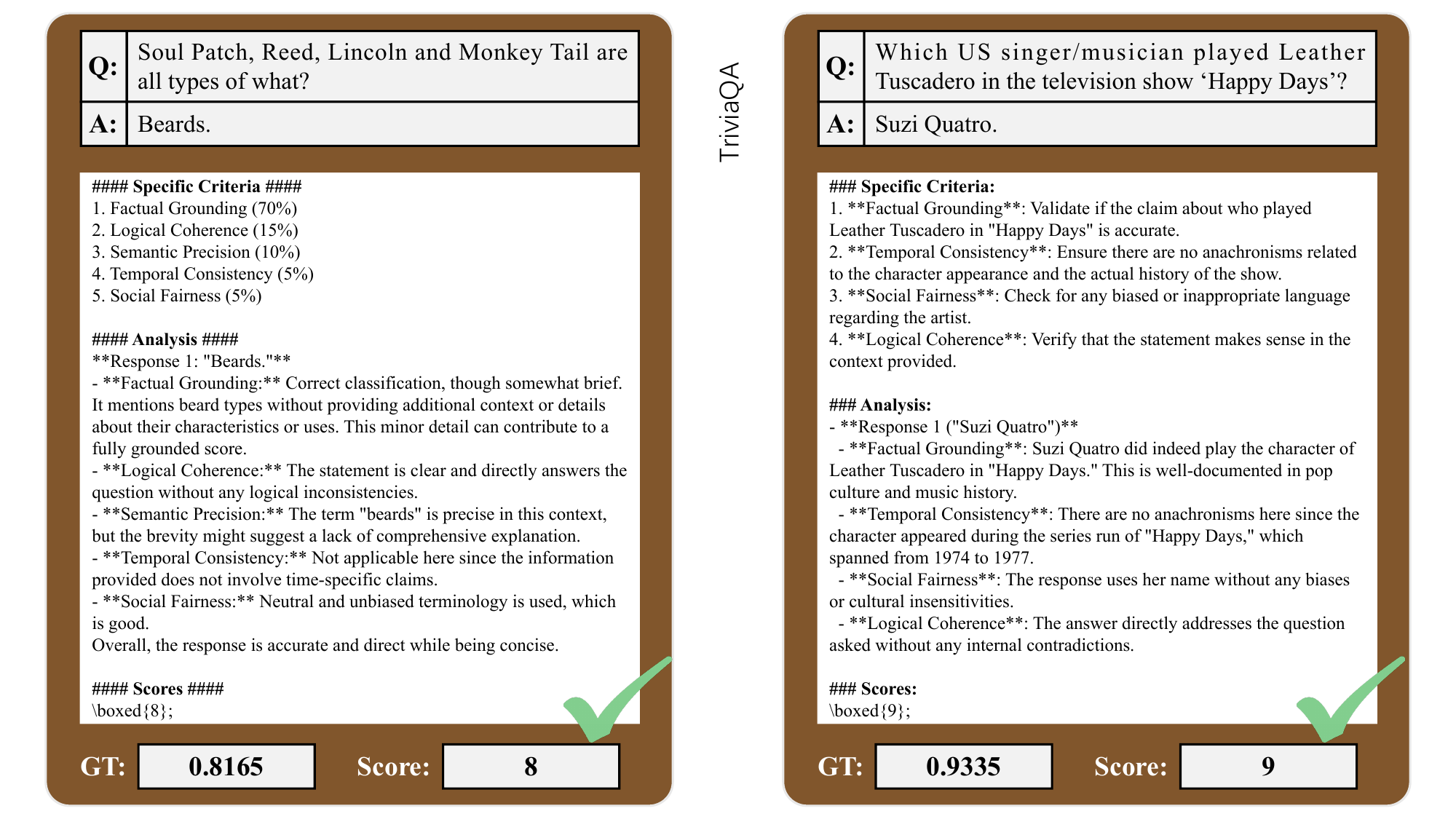} \\
    \vspace{10pt}
    \includegraphics[width=0.46\textwidth]{ 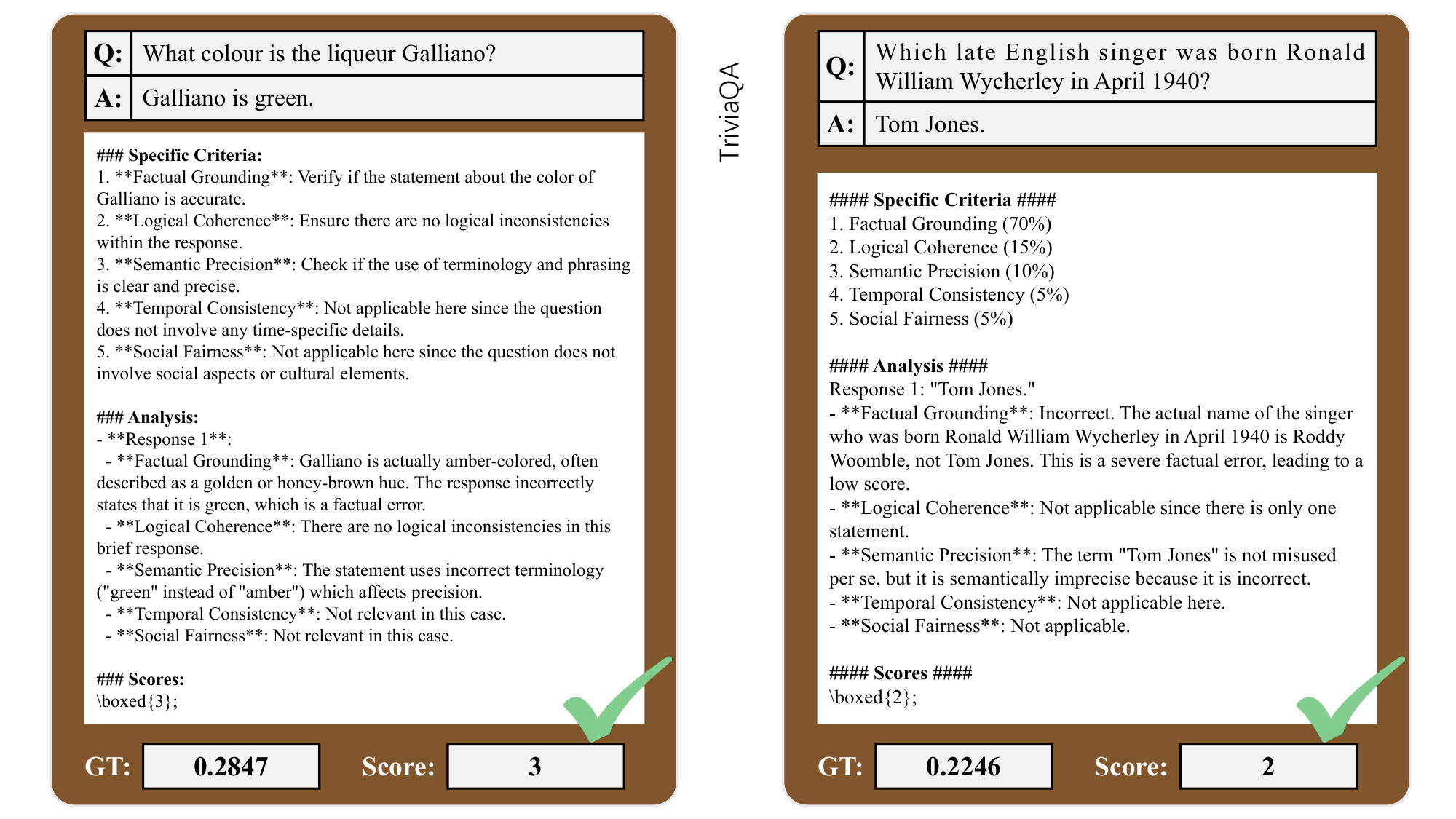}
    \hspace{5pt}
    \includegraphics[width=0.46\textwidth]{ 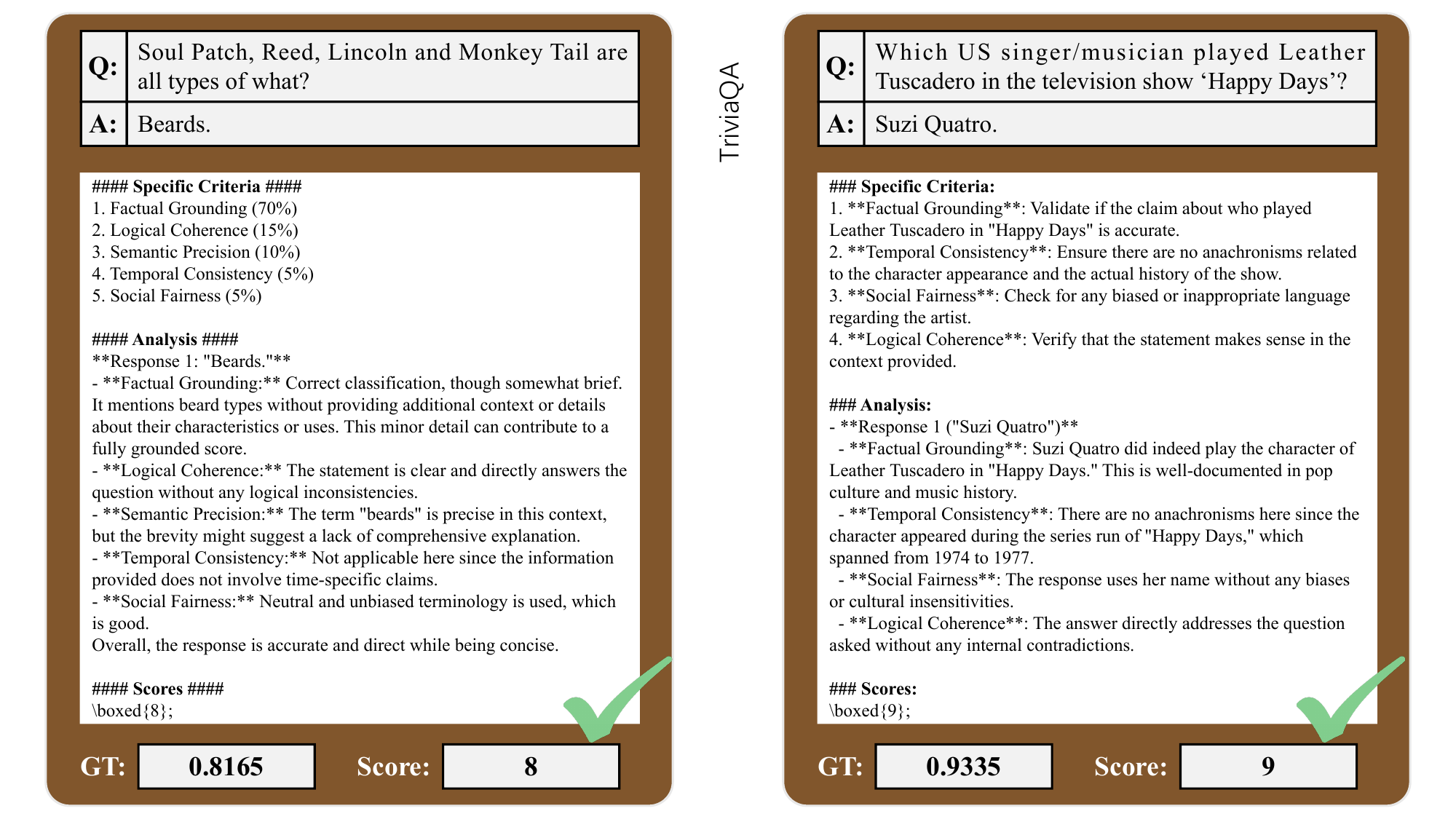}
    \caption{Visualizations of successful detections via HCPD on TriviaQA.}
    \label{fig: Visualization TriviaQA right}
    \end{center}
\end{figure*}

\begin{figure*}[!h]
    \begin{center}
    \includegraphics[width=0.46\textwidth]{ 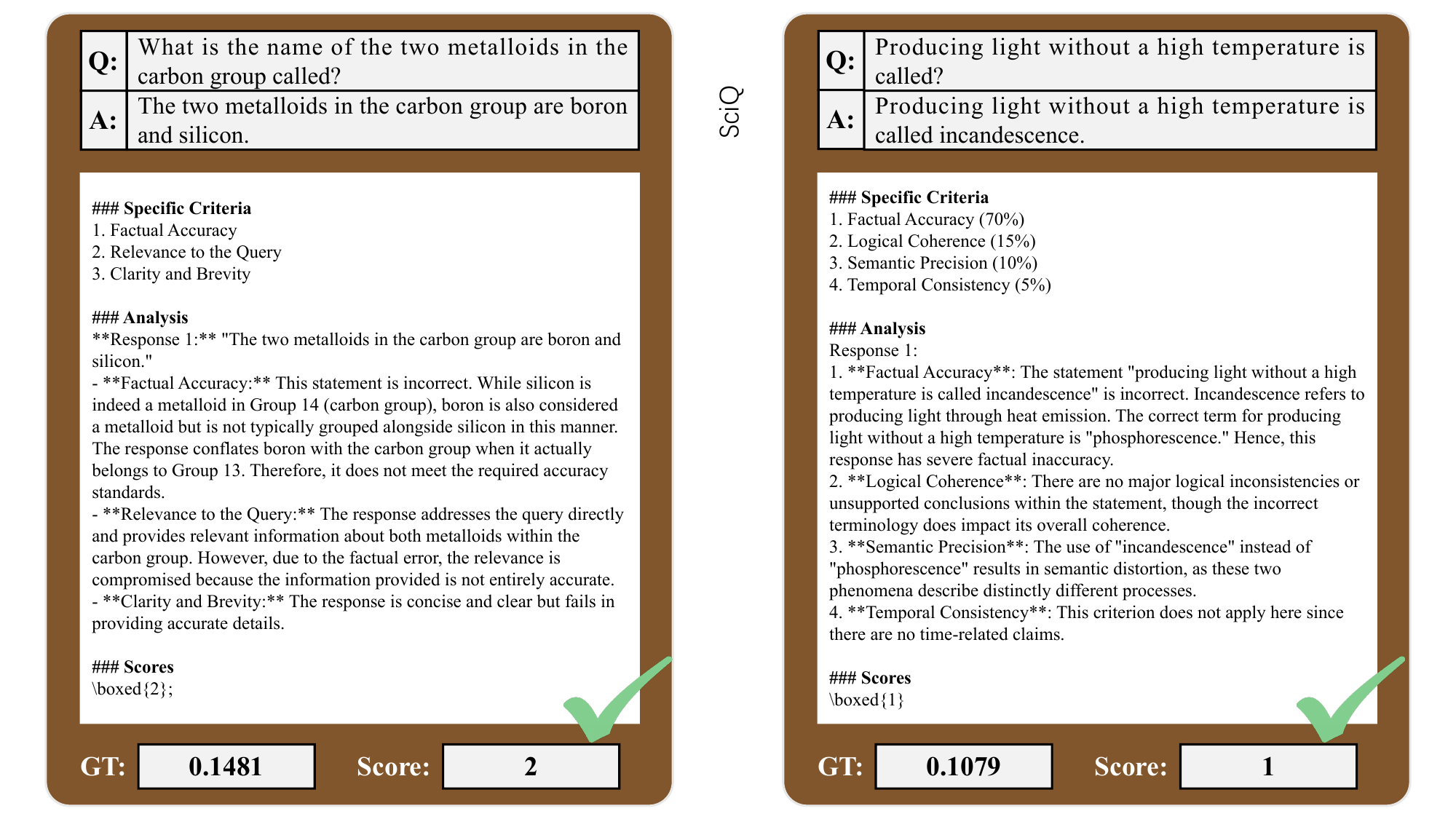}
    \hspace{5pt}
    \includegraphics[width=0.46\textwidth]{ 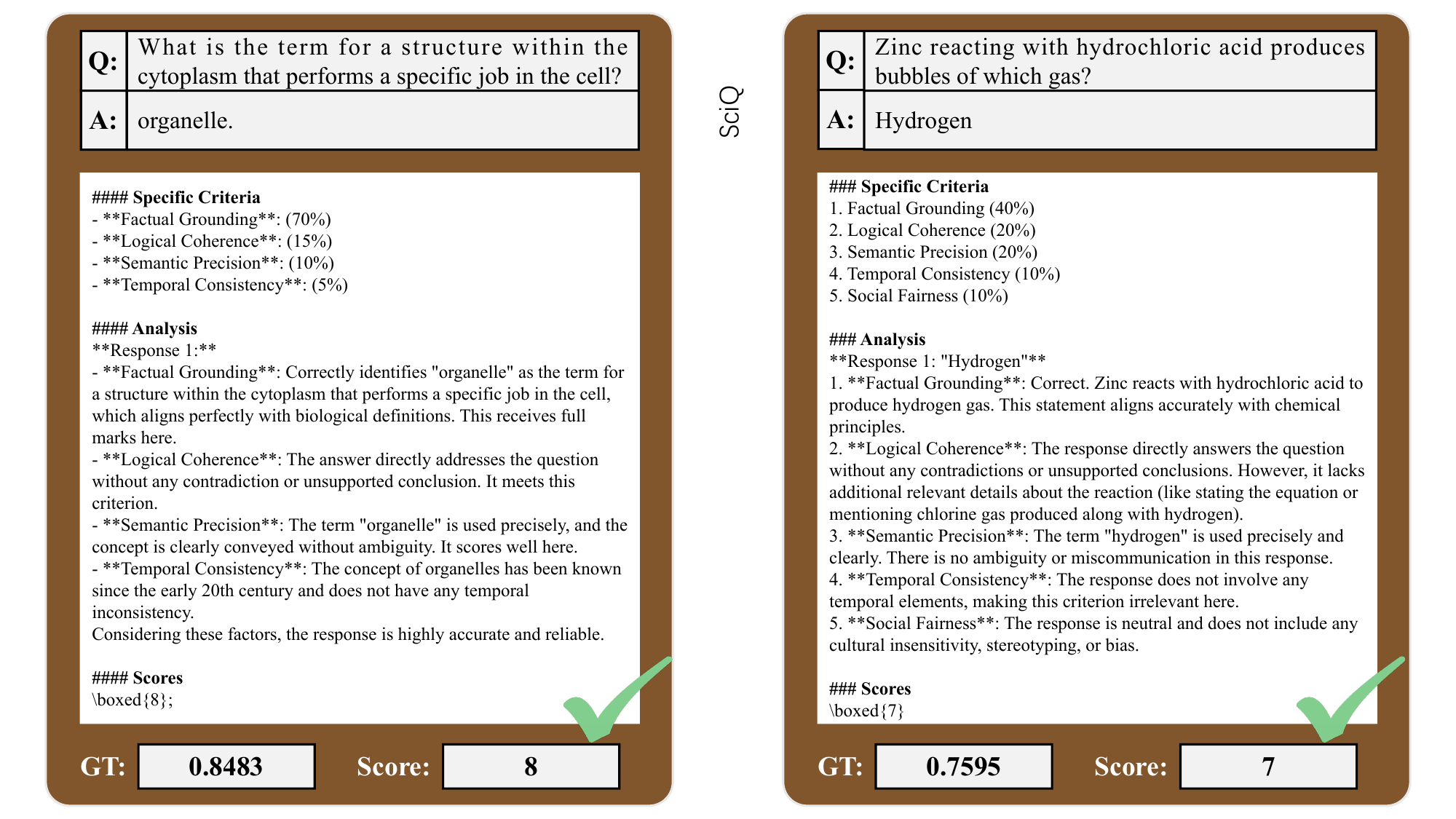} \\
    \vspace{10pt}
    \includegraphics[width=0.46\textwidth]{ 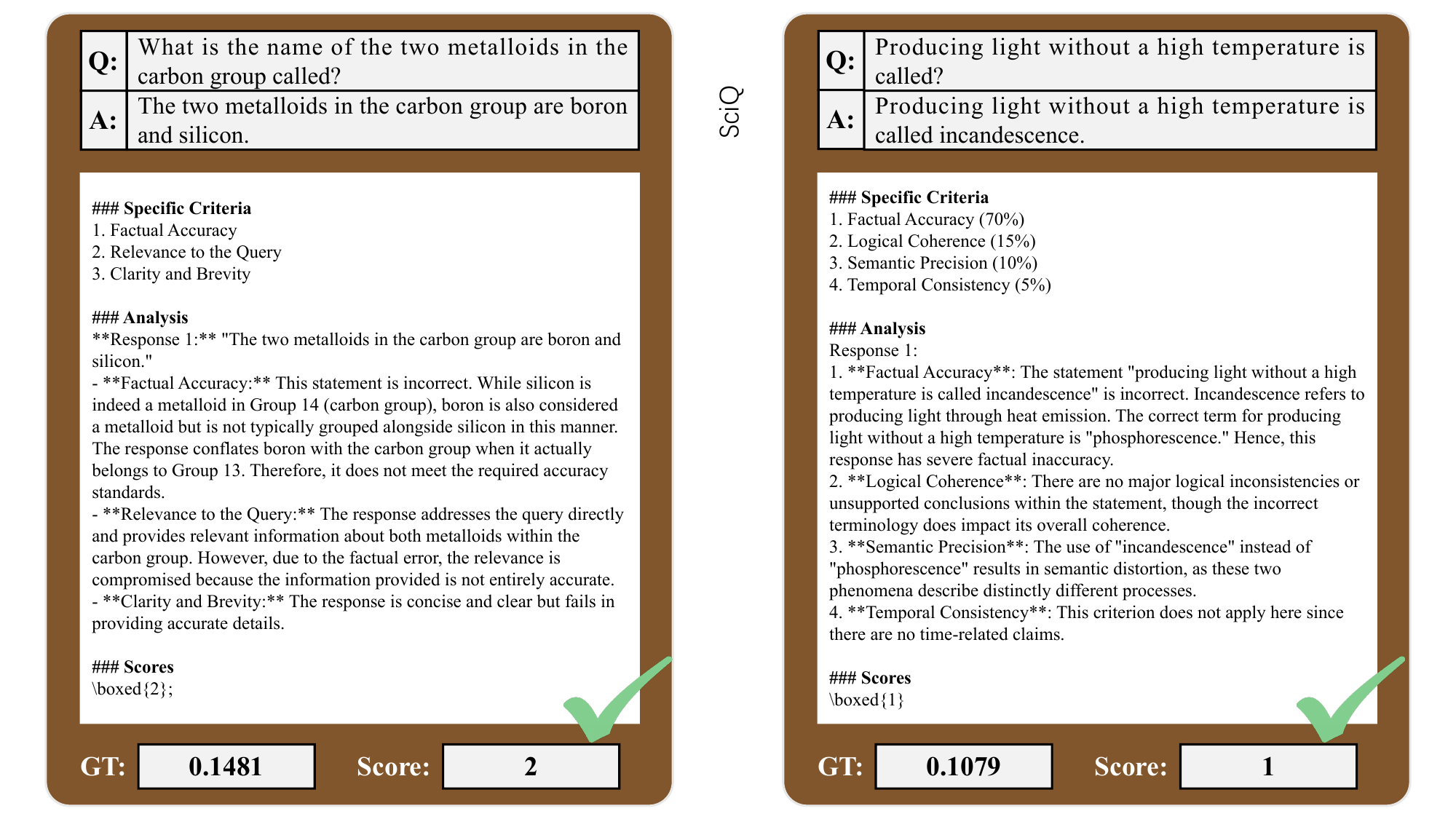}
    \hspace{5pt}
    \includegraphics[width=0.46\textwidth]{ 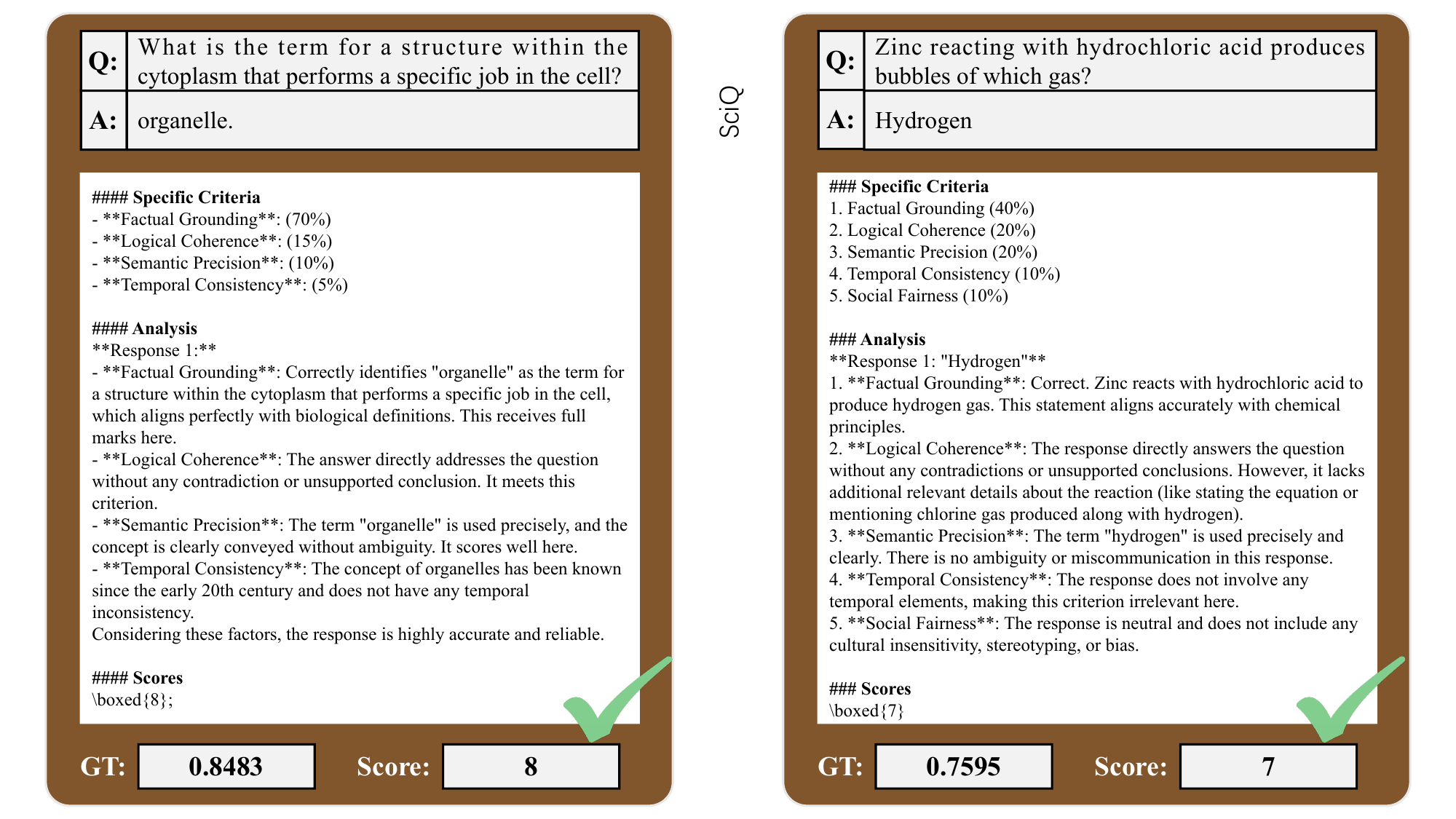}
    \caption{Visualizations of successful detections via HCPD on SciQ.}
    \label{fig: Visualization SciQ right}
    \end{center}
\end{figure*}

\begin{figure*}[!h]
    \begin{center}
    \includegraphics[width=0.46\textwidth]{ 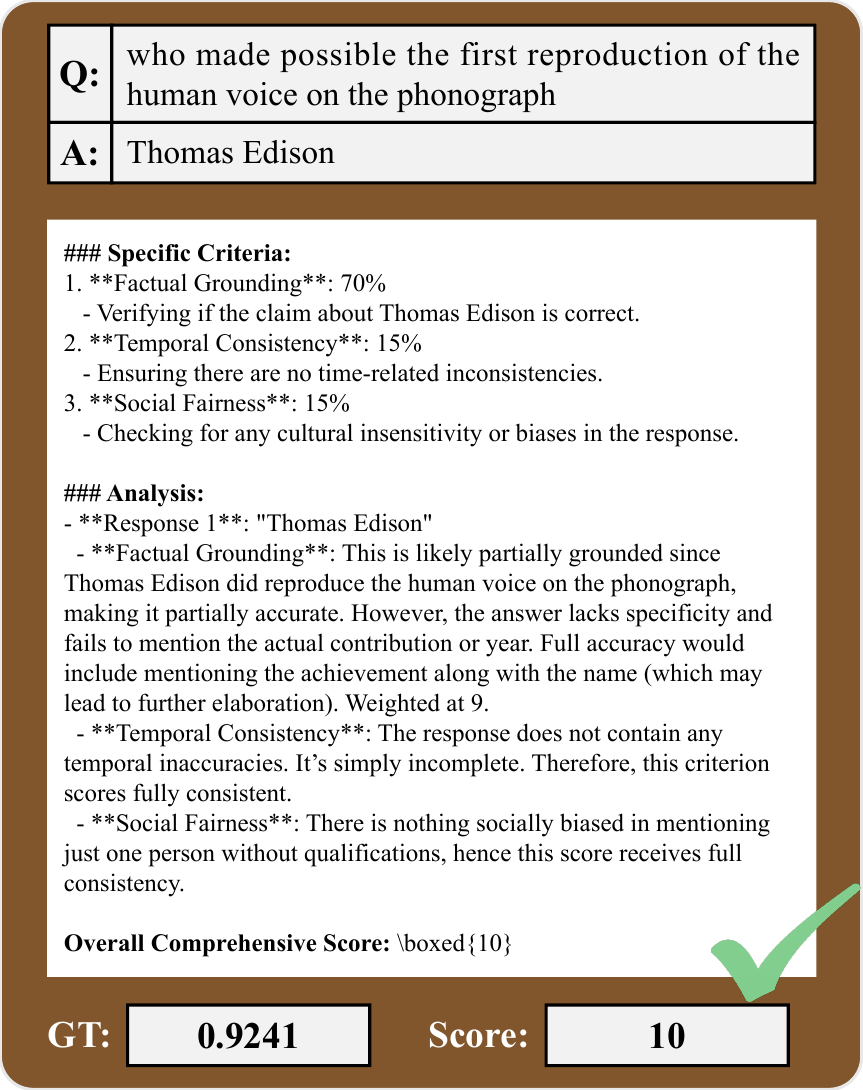}
    \hspace{5pt}
    \includegraphics[width=0.46\textwidth]{ 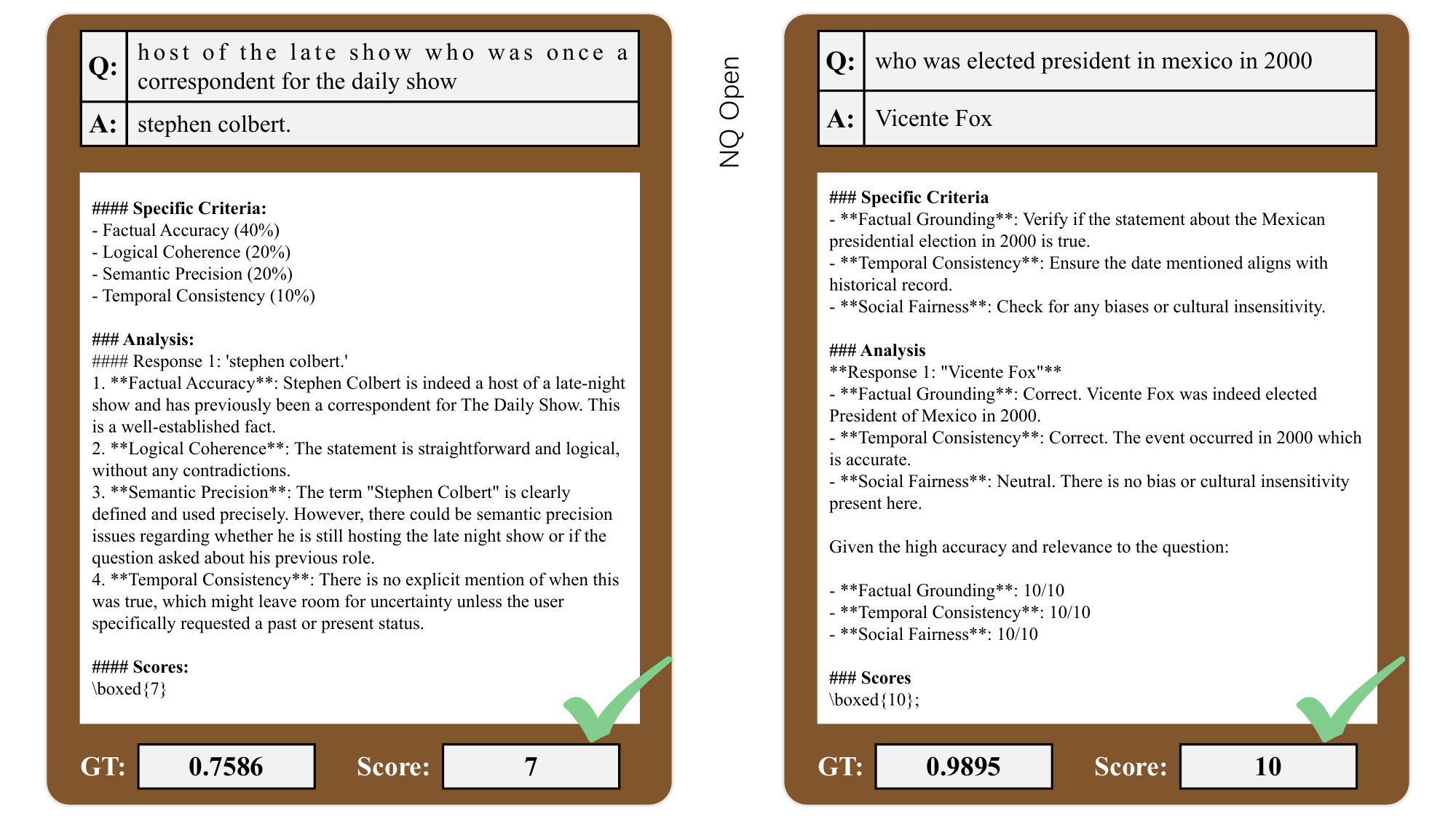} \\
    \vspace{10pt}
    \includegraphics[width=0.46\textwidth]{ 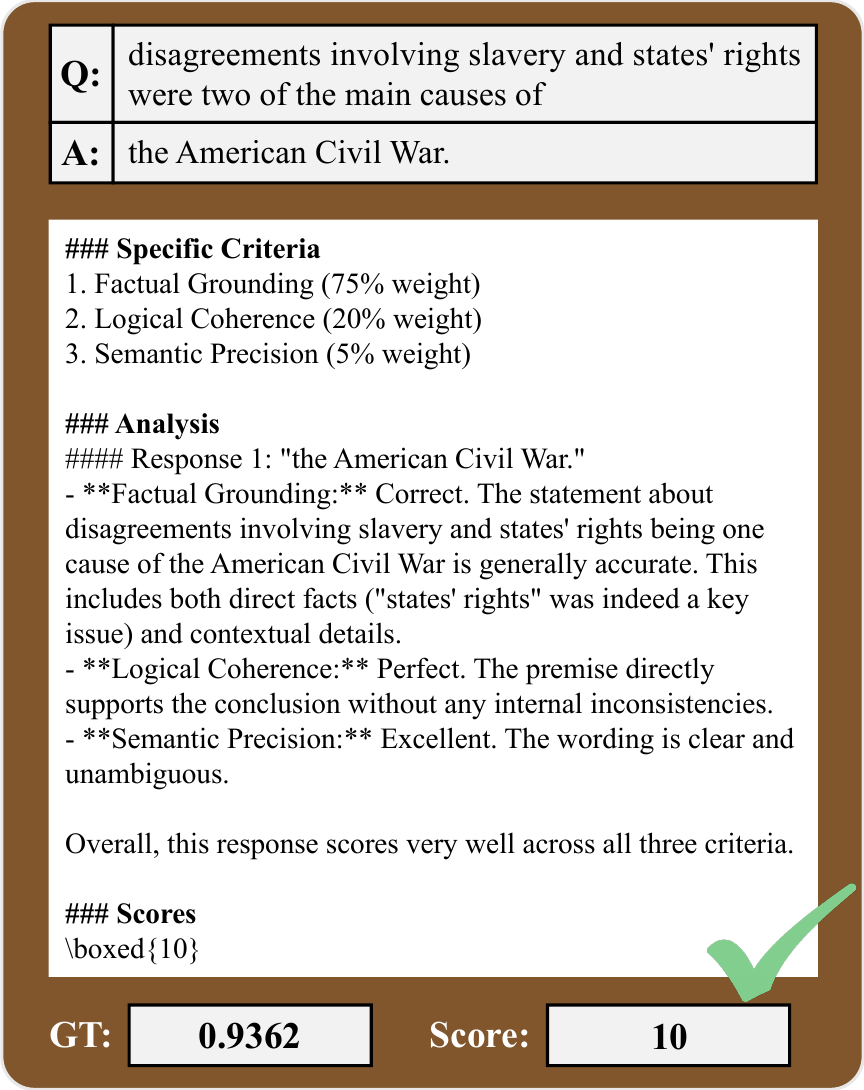}
    \hspace{5pt}
    \includegraphics[width=0.46\textwidth]{ 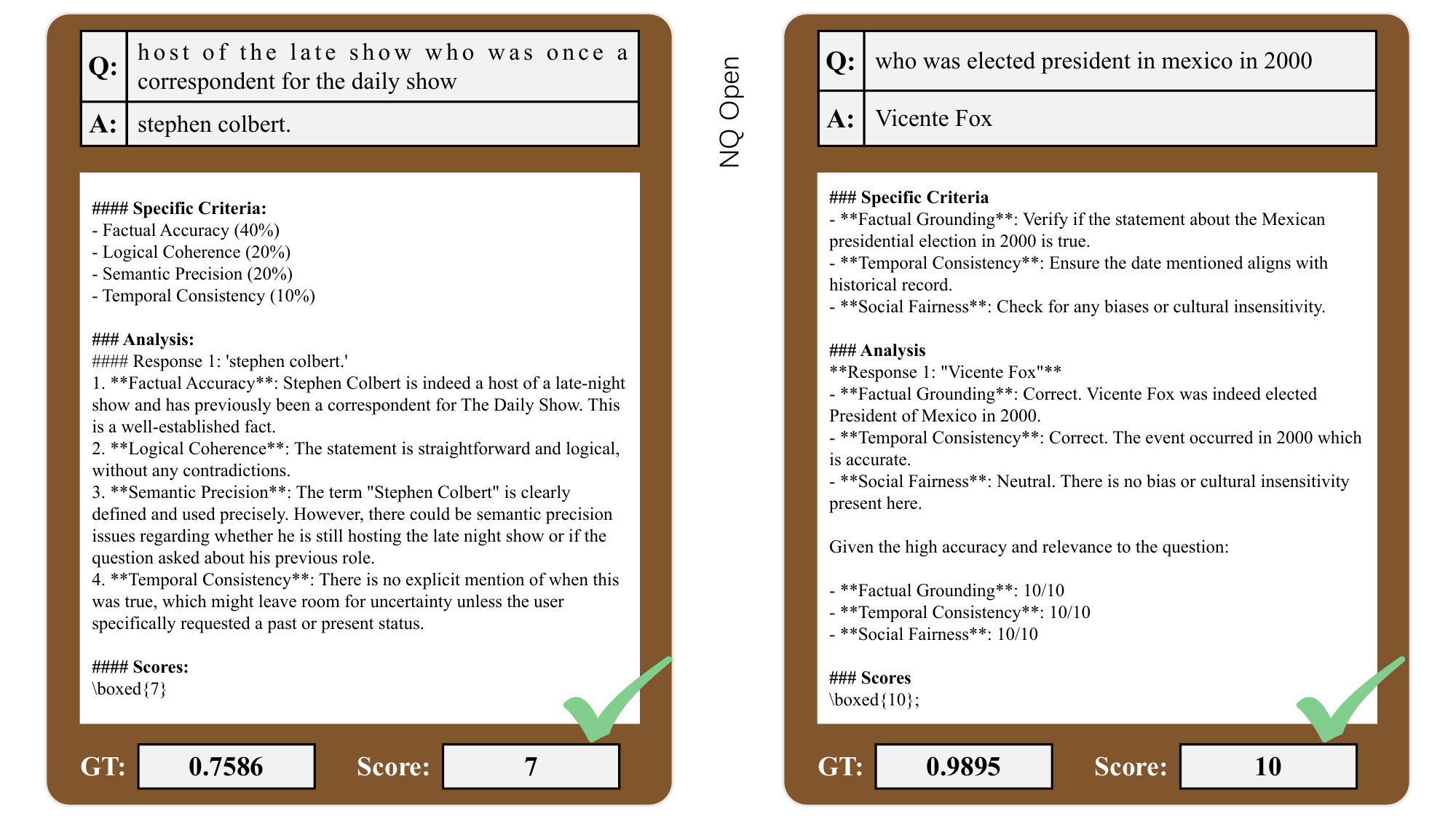}
    \caption{Visualizations of successful detections via HCPD on NQOpen.}
    \label{fig: Visualization NQOpen right}
    \end{center}
\end{figure*}

\begin{figure*}[!h]
    \begin{center}
    \includegraphics[width=0.46\textwidth]{ 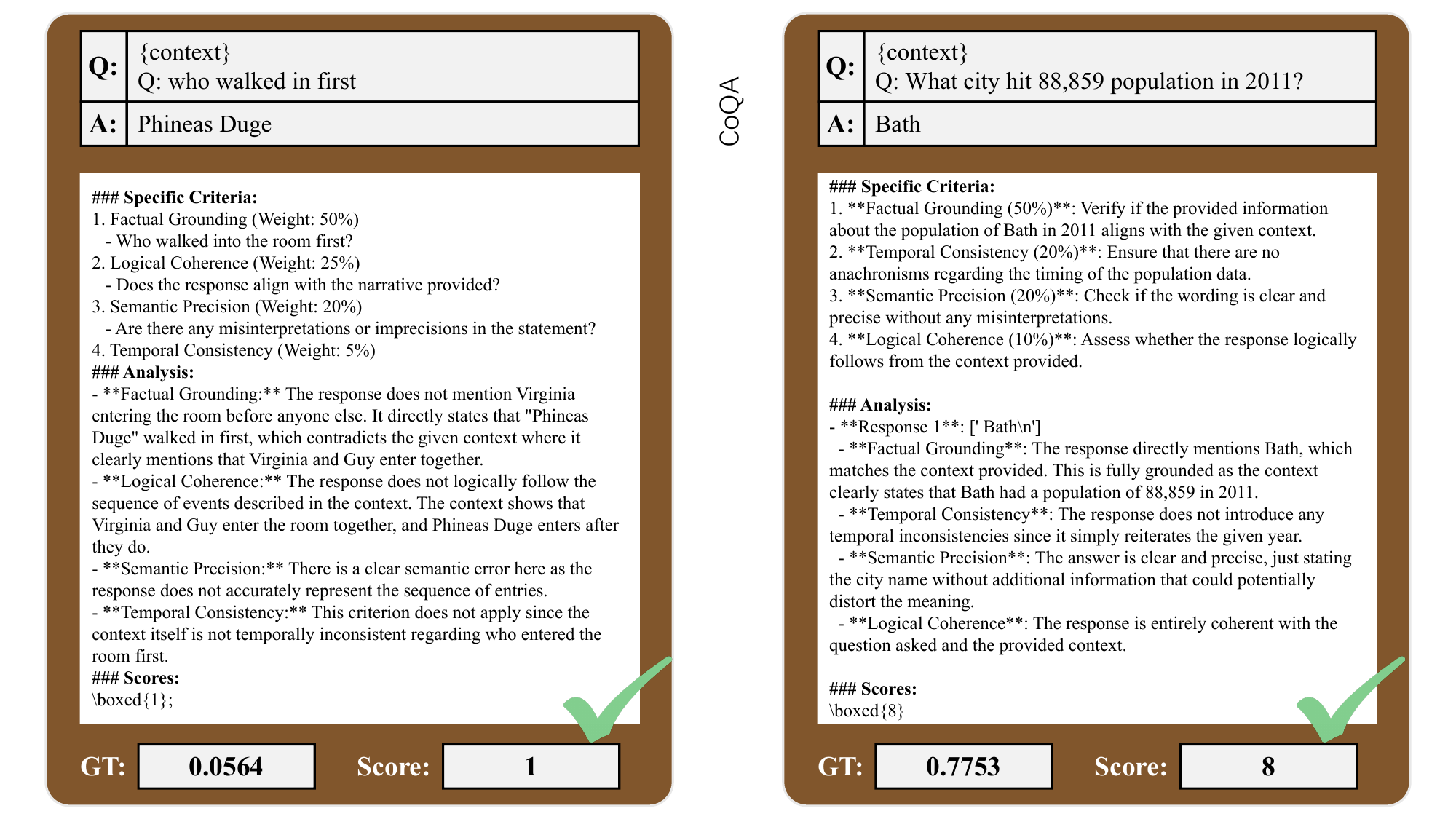}
    \hspace{5pt}
    \includegraphics[width=0.46\textwidth]{ 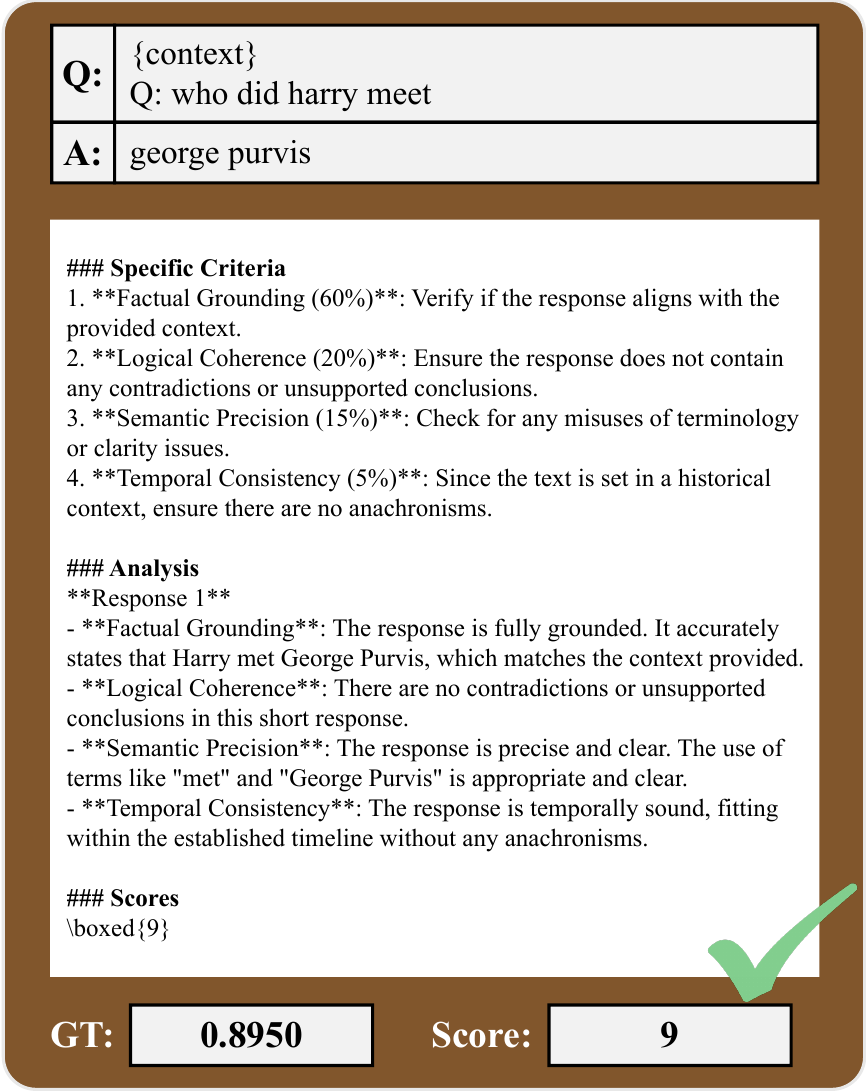} \\
    \vspace{10pt}
    \includegraphics[width=0.46\textwidth]{ 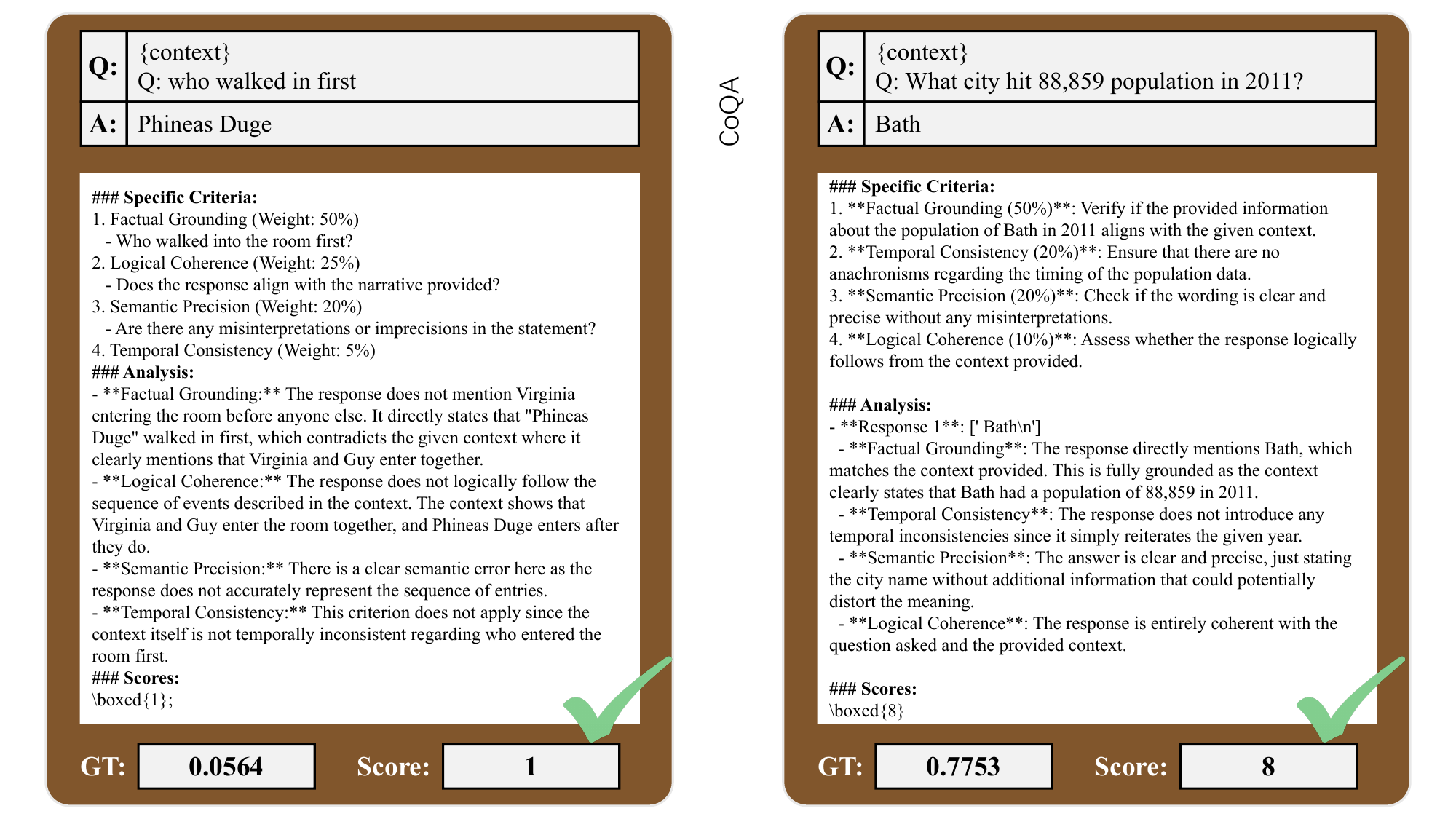}
    \hspace{5pt}
    \includegraphics[width=0.46\textwidth]{ 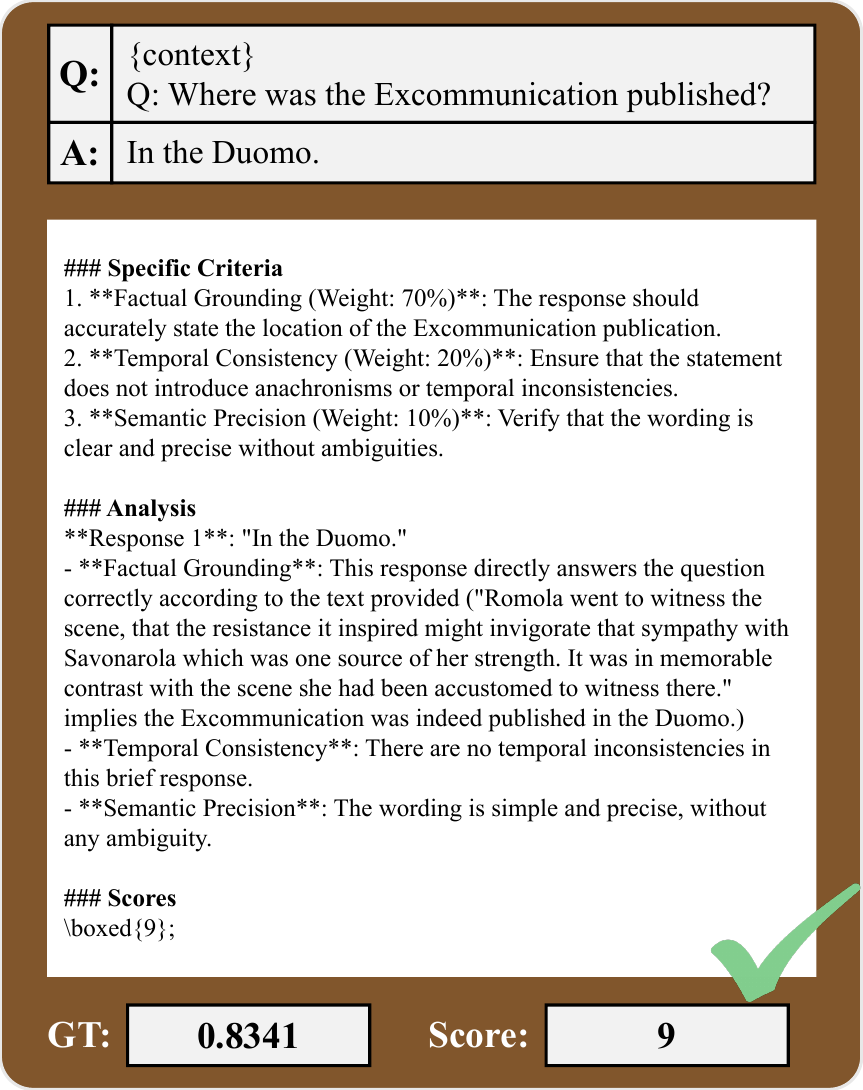}
    \caption{Visualizations of successful detections via HCPD on CoQA.}
    \label{fig: Visualization CoQA right}
    \end{center}
\end{figure*}

\begin{figure}[!h]
    \begin{center}
        \begin{subfigure}{0.46\textwidth}
            \centering
            \includegraphics[width=\textwidth]{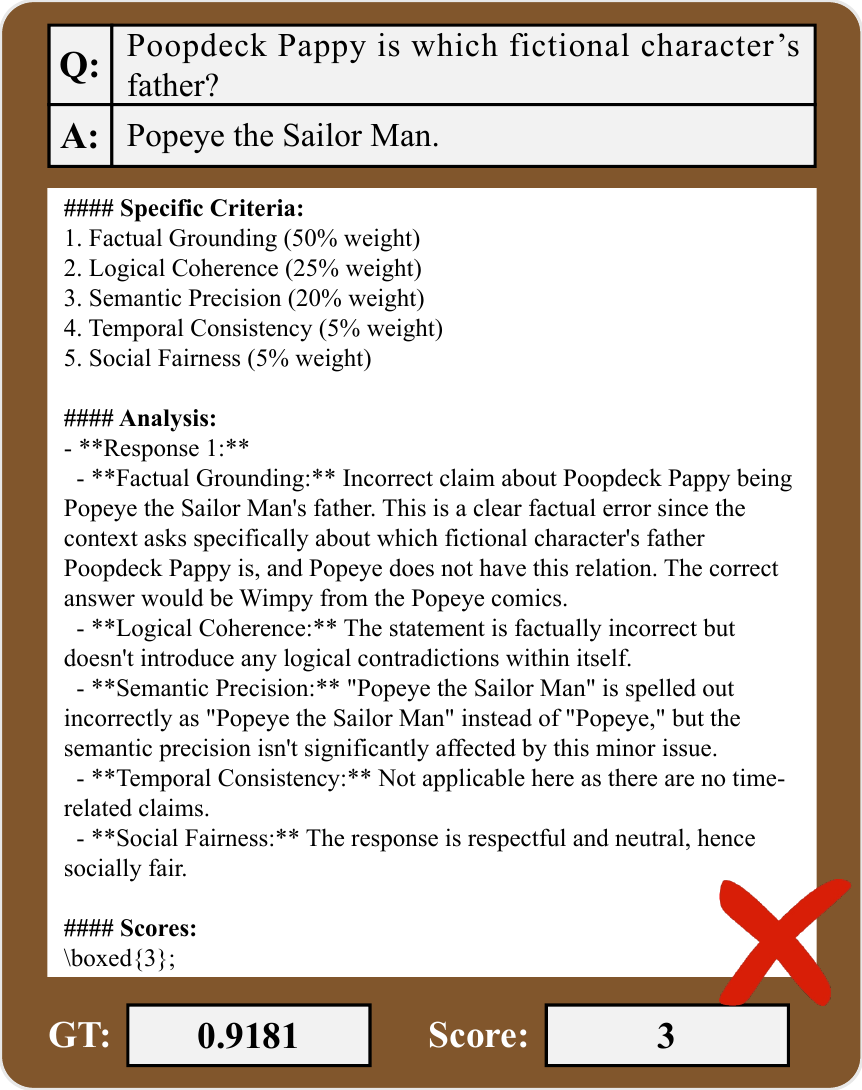}
            \caption{TriviaQA}
        \end{subfigure}
        \hspace{5pt}
        \begin{subfigure}{0.46\textwidth}
            \centering
            \includegraphics[width=\textwidth]{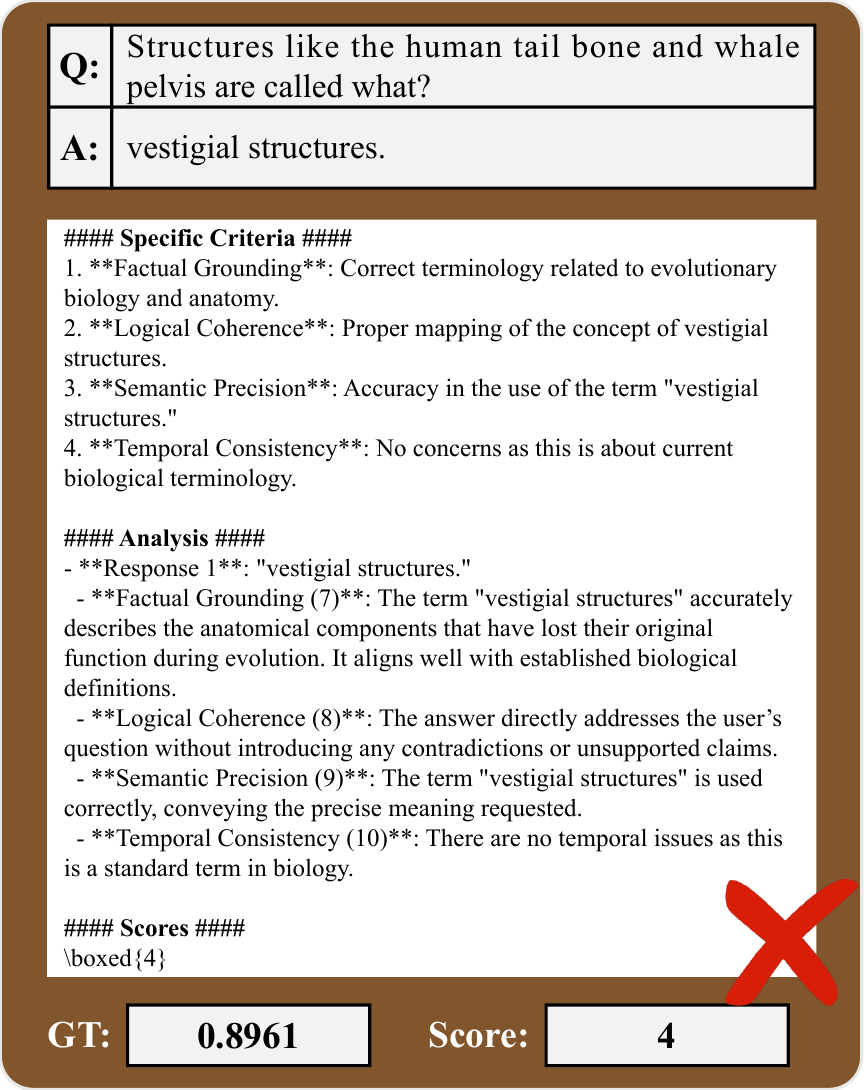}
            \caption{SciQ}
        \end{subfigure}
        \begin{subfigure}{0.46\textwidth}
            \centering
            \includegraphics[width=\textwidth]{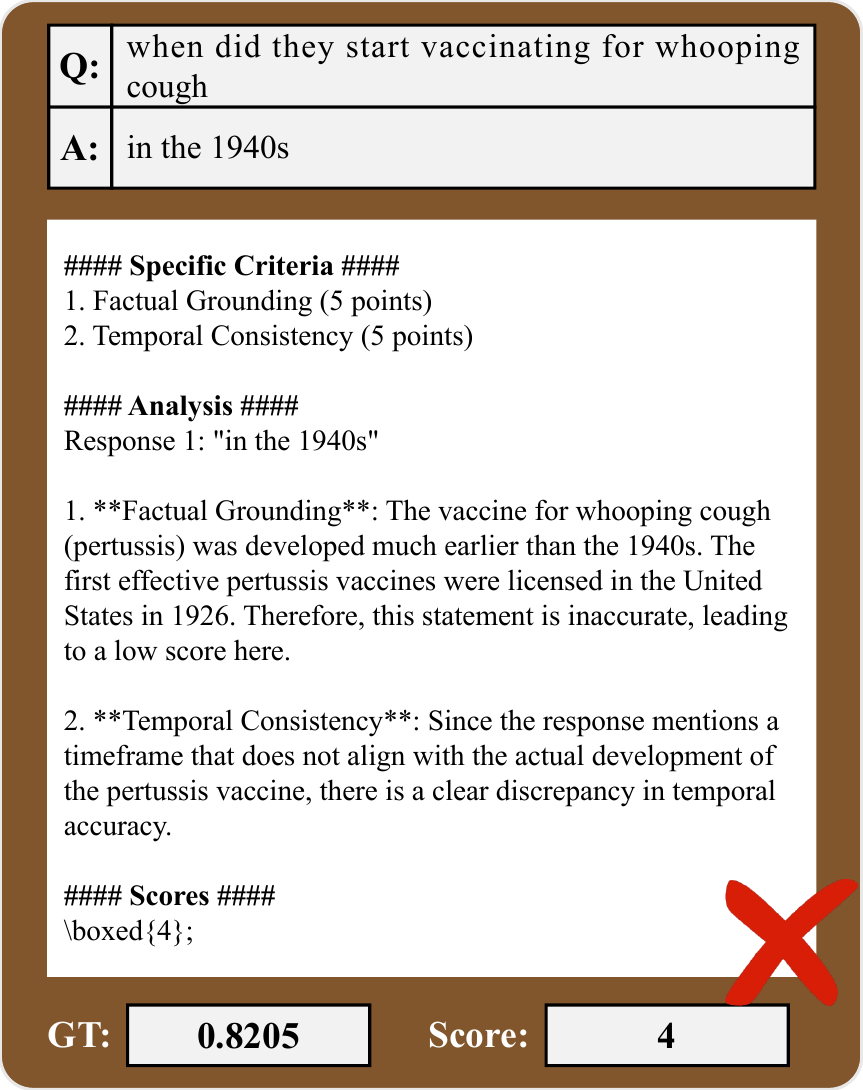}
            \caption{NQ Open}
        \end{subfigure}
        \hspace{5pt}
        \begin{subfigure}{0.46\textwidth}
            \centering
            \includegraphics[width=\textwidth]{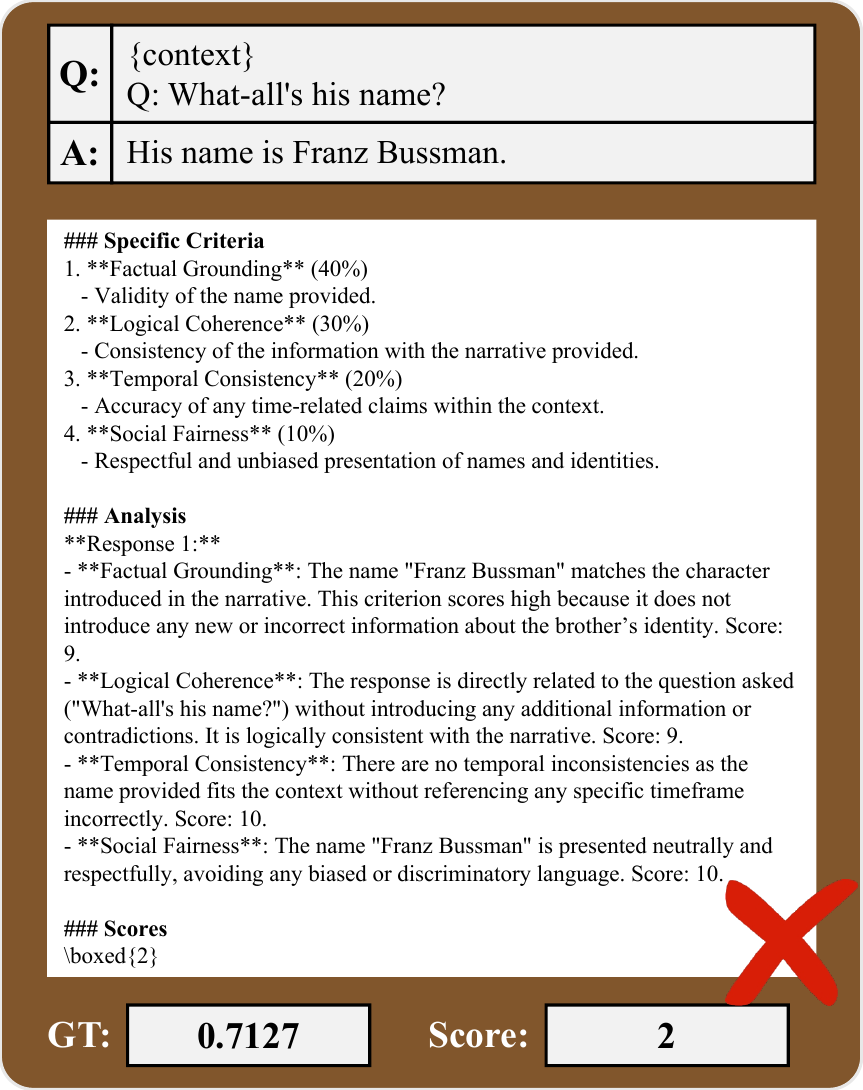}
            \caption{CoQA}
        \end{subfigure}
    \caption{Visualizations of failed detections via HCPD.}
    \label{fig: Visualization wrong}
    \end{center}
\end{figure}


\end{document}